\theoremstyle{plain}
\newtheorem{theorem}{Theorem}[section]
\newtheorem{proposition}[theorem]{Proposition}
\newtheorem*{thm}{Theorem}
\newtheorem*{prop}{Proposition}
\theoremstyle{definition}
\newtheorem{definition}[theorem]{Definition}
\newtheorem{assumption}[theorem]{Assumption}
\newtheorem{fact}[theorem]{Fact}
\theoremstyle{remark}
\definecolor{FillColorCBBlue}{HTML}{EDF9FF}
\definecolor{FillColorCBGreen}{HTML}{EDFFFA}
\definecolor{FillColorCBOrange}{HTML}{FFF5ED}
\definecolor{FillColorCBPurple}{HTML}{FFF8FD}
\definecolor{FillColorCBYellow}{HTML}{FFFEF1}
\definecolor{CustomIvory}{HTML}{FCFCED}
\definecolor{FontColorBlue}{HTML}{0433FF}
\definecolor{FontColorGreen}{HTML}{009051}
\definecolor{FontColorOrange}{HTML}{941100}
\definecolor{FontColorPurple}{HTML}{FF2F92}
\definecolor{FontColorYellow}{HTML}{929000}
\definecolor{StrokeColorCBBlue}{HTML}{0173B2}
\definecolor{StrokeColorCBGreen}{HTML}{029E73}
\definecolor{StrokeColorCBOrange}{HTML}{D55E00}
\definecolor{StrokeColorCBPurple}{HTML}{CC78BC}
\definecolor{StrokeColorCBYellow}{HTML}{C6BD2B}
\definecolor{RevisionBorder}{HTML}{D1E7FF}
\definecolor{RevisionHighlight}{HTML}{FEECB4}
\definecolor{RevisionText}{HTML}{000000}
\newcommand{\reReviewerBase}[9]{\todo[author=\textcolor{#3}{\small \textit{Reviewer #7:} \textbf{#8}}, backgroundcolor=#1, bordercolor=#2, linecolor=#4, textcolor=#5, tickmarkheight=#6, caption={Re: #8 by \textcolor{#3}{Reviewer #7}}]{#9}}
\newcommand{\reReviewerTickedVoffset}[7]{\vspace{#7}\reReviewerBase{#1}{#2}{#3}{#2}{black}{0.1cm}{#4}{#5}{#6}\vspace{-#7}}
\newcommand{\reBlue}[3]{\reReviewerTickedVoffset{FillColorCBBlue}{StrokeColorCBBlue}{FontColorBlue}{pPYU}{#1}{#2}{#3}}
\newcommand{\reGreen}[3]{\reReviewerTickedVoffset{FillColorCBGreen}{StrokeColorCBGreen}{FontColorGreen}{WEUm}{#1}{#2}{#3}}
\newcommand{\reOrange}[3]{\reReviewerTickedVoffset{FillColorCBOrange}{StrokeColorCBOrange}{FontColorOrange}{qVYP}{#1}{#2}{#3}}
\newcommand{\rePurple}[3]{\reReviewerTickedVoffset{FillColorCBPurple}{StrokeColorCBPurple}{FontColorPurple}{N3M4}{#1}{#2}{#3}}
\newcommand{\leftSide}{\reversemarginpar\setlength{\marginparwidth}{1.3 in}}
\newcommand{\rightSide}{\normalmarginpar\setlength{\marginparwidth}{0.07 in}}  
\newcolumntype{L}[1]{>{\raggedright\let\newline\\\arraybackslash\hspace{0pt}}m{#1}}
\newcolumntype{C}[1]{>{\centering\let\newline\\\arraybackslash\hspace{0pt}}m{#1}}
\newcolumntype{R}[1]{>{\raggedleft\let\newline\\\arraybackslash\hspace{0pt}}m{#1}}
\newcommand*{\indep}{
    \mathbin{
        \mathpalette{\@indep}{}
    }
}
\newcommand*{\nindep}{
    \mathbin{
        \mathpalette{\@indep}{\not}
    }
}
\newcommand*{\@indep}[2]{%
    \sbox0{$#1\perp\m@th$}
    \sbox2{$#1=$}
    \sbox4{$#1\vcenter{}$}
    \rlap{\copy0}
    \dimen@=\dimexpr\ht2-\ht4-.2pt\relax
    \kern\dimen@
    {#2}%
    \kern\dimen@
    \copy0 
}
\newcommand{\Ebb}{\mathbb{E}}
\newcommand{\Nbb}{\mathbb{N}}
\newcommand{\Fbf}{\mathbf{F}}
\newcommand{\Acal}{\mathcal{A}}
\newcommand{\Dcal}{\mathcal{D}}
\newcommand{\Epsilon}{\mathcal{E}}
\newcommand{\Hcal}{\mathcal{H}}
\newcommand{\Xcal}{\mathcal{X}}
\newcommand{\Ycal}{\mathcal{Y}}
\newcommand{\settitle}{\@maketitle}
\newcommand{\Yori}{Y^{\text{(ori)}}}
\newcommand{\Yobs}{Y^{\text{(obs)}}}
\newcommand{\STIR}{\textcolor{RevisionText}{\Delta_{\mathrm{STIR}} \rvert_{t}^{t + 1}}}
\title{Tier Balancing: Towards Dynamic Fairness over Underlying Causal Factors}
\author[1]{\textbf{Zeyu Tang}}
\author[2]{\textbf{Yatong Chen}}
\author[2]{\textbf{Yang Liu}}
\author[1,3]{\textbf{Kun Zhang}}
\affil[1]{Department of Philosophy, Carnegie Mellon University}
\affil[2]{Computer Science and Engineering Department, University of California, Santa Cruz}
\affil[3]{Machine Learning Department, Mohamed bin Zayed University of Artificial Intelligence}
\affil[ ]{\texttt{zeyutang@cmu.edu}, \texttt{ychen592@ucsc.edu}, \texttt{yangliu@ucsc.edu}, \texttt{kunz1@cmu.edu}}
\begin{document}


\settitle

\begin{abstract}
    \todo[backgroundcolor=CustomIvory, bordercolor=RevisionBorder, linecolor=white, textcolor=black, author={\textbf{\textit{Thank All Reviewers!}}}, caption={Re: All Reviewers}]{
        We are extremely grateful to all the reviewers for the comments and the time devoted.
        In the revised manuscript, we have carefully considered and incorporated the review comments.
        We provide color-coded side notes that correspond to comments/questions by each reviewer (%
            \text{\textcolor{FontColorPurple}{Reviewer N3M4},}
            \text{\textcolor{FontColorBlue}{Reviewer pPYU},}
            \text{\textcolor{FontColorGreen}{Reviewer WEUm},}
            \text{\textcolor{FontColorOrange}{Reviewer qVYP}}%
        ).
        We summarize the list of responses on \textbf{\cpageref{list:toc_responses}} at the beginning of the Appendix for the convenience of navigation.
    }
    The pursuit of long-term fairness involves the interplay between decision-making and the underlying data generating process.
    In this paper, through causal modeling with a directed acyclic graph (DAG) on the decision-distribution interplay, we investigate the possibility of achieving long-term fairness from a dynamic perspective.
    We propose \emph{Tier Balancing}, a technically more challenging but more natural notion to achieve in the context of long-term, dynamic fairness analysis.
    Different from previous fairness notions that are defined purely on observed variables, our notion goes one step further, capturing behind-the-scenes situation changes on the unobserved latent causal factors that directly carry out the influence from the current decision to the future data distribution.
    Under the specified dynamics, we prove that in general one cannot achieve the long-term fairness goal only through one-step interventions.
    Furthermore, in the effort of approaching long-term fairness, we consider the mission of ``getting closer to'' the long-term fairness goal and present possibility and impossibility results accordingly.
\end{abstract}

\setlist[itemize]{topsep=0pt}

\begin{bibunit}
    \pagenumbering{arabic}

    \startcontents[mainpaper]
    \renewcommand\contentsname{Table of Contents: Main Paper}

    \section{Introduction}
The long-term fairness endeavor inevitably involves the interplay between decision policies and the underlying data generating process: when deriving a decision-making system, one usually makes use of data at hand; when we deploy such a system, the decision would impact how data will look in the future \citep{perdomo2020performative,liu2021induced}.
To understand why and how a data distribution responds to decision-making strategies, the investigation has to resort to causal modeling.
The pursuit of long-term fairness, in turn, should also consider the changes in the underlying causal factors.

Various fairness notions with different flavors have been proposed in the literature:
associative fairness notions that capture the correlation or dependence between variables, e.g., \textit{Demographic Parity} \citep{calders2009building}, \textit{Equalized Odds} \citep{hardt2016equality};
causal fairness notions that involve modeling causal relations between variables, e.g., \textit{Counterfactual Fairness} \citep{kusner2017counterfactual,russell2017worlds},
\textit{Path-Specific Counterfactual Fairness} \citep{chiappa2019path,wu2019pc}, \textit{Causal Multi-Level Fairness} \citep{mhasawade2021causal}.
The previously proposed fairness notions are with respect to a snapshot of the static reality, and do not have a built-in capacity to model the distribution-decision interplay in the long-term fairness pursuit.

In the effort of enforcing fairness in the dynamic setting, researchers have approached the problem from different angles:
they provide causal modeling for fairness notions \citep{creager2020causal},
analyze the delayed impact or downstream effect on utilities \citep{liu2018delayed,heidari2019long,kannan2019downstream,nilforoshan2022causal},
enforce fairness in sequential or online decision-making \citep{joseph2016fairness,liu2017calibrated,hashimoto2018fairness,heidari2018preventing,bechavod2019equal},
investigate the relation between the long-term population qualification and fair decisions \citep{zhang2020fair},
take into consideration the user behavior/action when deriving a decision policy \citep{zhang2019group,ustun2019actionable,miller2020strategic,von2022fairness},
provide fairness transferability guarantee across domains \citep{schumann2019transfer,singh2021fairness},
or derive robust fair predictors \citep{coston2019fair,rezaei2021robust}.
The proposed dynamic fairness enforcing procedures usually limit their scope of consideration to only observed variables, and the fairness audit is performed directly on the decision or statistics defined on observed data.

\textcolor{RevisionText}{
In order to have a built-in capacity to capture the influence from the current decision to future data distributions, and more importantly, to induce a fair future in the long run, in this paper, we propose \textit{Tier Balancing}, a long-term fairness notion that characterizes the interplay between decision-making and data dynamics through a detailed causal modeling with a directed acyclic graph (DAG).
For example, the latent socio-economic status (whose estimation can be the output of a FICO credit score model), although not directly measurable, plays an important role in credit applications.
We are motivated by the goal of inducing a fair future by actually balancing the inherent socio-economic status, i.e., the ``tier'', of agents from different groups.}
We summarize our contributions as follows:
\begin{itemize}
    \item We formulate \textit{Tier Balancing}, a fairness notion from the dynamic and long-term perspective that characterizes the decision-distribution interplay with a detailed causal modeling over both observed variables and latent causal factors.
    \item Under the specified data dynamics, we prove that in general, one cannot directly achieve the long-term fairness goal only through a one-step intervention, i.e., static decision-making.
    \item We consider the possibility of getting closer to the long-term fairness goal through a sequence of algorithmic interventions, and present possibility and impossibility results derived from the one-step analysis of the decision-distribution interplay.
\end{itemize}

    \section{Problem Setup}\label{sec:problem_setup}
In this section, we present the formulation of the problem of interest.
We first demonstrate in \Cref{sec:causal_DAG} a detailed causal modeling of the interplay between decision-making and data dynamics.
Then in \Cref{sec:notion_TB}, we formulate \textit{Tier Balancing}, a long-term fairness notion that captures the decision-distribution interplay with the presented causal modeling.

\subsection{Causal Modeling of Decision-Distribution Interplay on DAG}\label{sec:causal_DAG}
Let us denote the time step as $T$ with domain of value $\Nbb^+$.
At time step $T$, let us denote the protected feature as $A_T$ with domain of value $\Acal = \{0, 1\}$, additional feature(s) as $X_{T,i}$ with domain of value $\Xcal_i$, the (unmeasured) underlying causal factor $H_T$ (we call it ``tier'') with domain of value $\Hcal = (0, 1]$, the (unobserved) ground truth label $\Yori_T$ and the observed label $\Yobs_T$, with domain of value $\Ycal = \{0, 1\}$, and the decision $D_T$ with domain of value $\Dcal = \{0, 1\}$.
\Cref{fig:dynamic_model} shows the causal modeling of the interplay between decision-making and underlying data generating processes, which involves multiple dynamics (from $T = t$ to $T = t + 1$).\footnote{Due to the space limit, we provide additional discussions on decision-distribution interplay in \Cref{appendix:remark_interplay}.}
\paragraph{Underlying data dynamics (stationary components)}
Considering the fact that the underlying data dynamics are relatively stable with respect to the timescale of decision-making (e.g., the societal changes happen at a much larger time scale compared to a particular credit application decision), we assume that processes governing how ($\Yori_t, X_{t,i}$) are generated from $(H_t, A_t)$ for each individual in the population are stationary and do not change over different $T = t$.
We also assume that the underlying data generating process that governs how $H_{t + 1}$ is updated from $(H_t, \Yori_t, D_t)$ across time steps is stationary, and so are the process governing the observation of $\Yobs_{t+1}$ given $(D_t, \Yori_{t+1})$ and the process governing the update of $A_{t + 1}$ from $A_t$.

The tier $H_t$ fully captures the individual's key property that is directly relevant to the scenario of interest, and therefore is the cause of $\Yori_t$ and $X_{t,i}$'s instead of the other way around.
\rePurple{C1}{
    We model the causal process along the causal direction.
    Therefore $H_{t + 1}$ is updated from $(H_t, Y^{\mathrm{(ori)}}_t, D_t)$, instead of all variables that are dependent with $H_{t + 1}$.
    In light of your comment, we add a quick example illustrating directions in our modeling choice.}{0in}\rightSide
\textcolor{RevisionText}{
For example, the improvement in the socio-economic status can be reflected through an increase in income, while manipulating one’s income only by changing the recorded number does not affect the actual ability to repay the loan.}
The determination of causal direction aligns with causal modelings in previous literature (see, e.g., \citealt{zhang2020fair}).

\paragraph{Decision-making dynamics (non-stationary components)}
The institution (decision maker) assigns decision $D_t$ to each individual according to the observed features $(A_t, X_{t, i})$ and the outcome record $\Yobs_t$.
The interpretation of the aforementioned variables depends on the problem at hand.
For instance, in the credit application scenario where $D_t$ denotes the application decision (approval or denial), we can interpret the latent tier $H_t$ as the underlying socio-economic status of an individual.
Since the decision-making strategy can vary across different time steps, we explicitly introduce the underlying factor $\theta_t$ (e.g., a hyperparameter or an auxiliary variable) to indicate such (possible) non-stationary property of decision-making.
The causal path from $\theta_t$ to $\theta_{t + 1}$ indicates the similarity between decision-making strategies as time goes by (e.g., the continuing interest on utility), although strategies themselves are not necessarily identical across different time steps.

We interpret the variable $\Yori_t$ as ``whether or not one would repay the loan were he/she approved the credit at $T = t - 1$ (which might not be the case in reality)''.
\reBlue{C4}{The potential outcome $\Yori_t$ indicates the actual ability to repay, which is an inherent fact of the individual and is irrelevant to whether or not such fact is exhibited.}{-1in}\leftSide
The variable $\Yobs_t$ is observed only if this individual actually got approved at $T = t - 1$, i.e., $D_{t - 1} = 1$.
We distinguish between the underlying ground truth $\Yori_t$ and the observed record $\Yobs_t$ because of their different roles in the decision-distribution interplay.
On the one hand, only $\Yobs_t$ is observed and therefore accessible to the decision maker (e.g., for training and evaluation).
\reGreen{W3}{If the previous decision is ``denial'', $Y_t^{(\mathrm{obs})}$ is masked and not observed.
From the viewpoint of decision-making, such data point is not informative for training and evaluating purposes since $Y_t^{(\mathrm{obs})}$ is undefined.
This motivates us to distinguish the roles played by $Y_t^{(\mathrm{obs})}$ and $Y_t^{(\mathrm{ori})}$.}{0in}\leftSide
On the other, since the potential outcome $\Yori_t$ reflects individual's inherent characteristic, which is not directly relevant to whether it is observable, $\Yori_t$ is utilized when the underlying data generating process specifies the update from $H_t$ to $H_{t + 1}$.

\begin{figure}[t]
    \centering
    \captionsetup{format=hang}
    \includegraphics[width=0.8\textwidth]{
        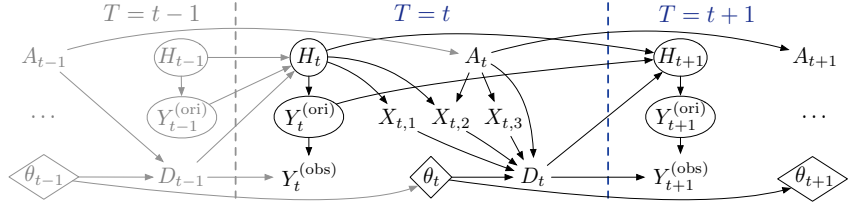}
    \caption{
        The causal modeling of the decision-distribution interplay.
        The circle (diamond) indicates that the corresponding variable (underlying factor) is unobservable.}
    \label{fig:dynamic_model}
\end{figure}

\subsection{The Notion of Tier Balancing}\label{sec:notion_TB}
\begin{definition}[\textbf{$K$-Step Tier Balancing}]\label{definition:TB}
    Under the specified dynamics, starting from any time step $T$ and a given $K \geq 0$, let us denote a sequence of $K$ decision-making strategies as $D_{T : T + K - 1}:=\{D_T, ...,D_{T + K - 1}\}$ (an empty set if $K = 0$), and the latest hidden tier after $K$-step decision-making as $H_{T + K}$.
    We say $D_{T:T+K-1}$ satisfies $K$\textit{-Step Tier Balancing}, if at $T+K$ the following condition holds true (where ``$\indep$'' denotes statistical independence):
    \vspace{-0.5em}
    \begin{equation}\label{equ:TB}
        H_{T+K} \indep A_{T+K}
        \text{, where } H_{T+K} \text{ is updated from } (H_T, \Yori_{T : T + K - 1}, D_{T : T + K - 1}).
    \end{equation}
    \vspace{-2em}
\end{definition}

\textcolor{RevisionText}{Equation \ref{equ:TB} captures the statistical consequence in the future (in the form of an associative relationship) induced by the interplay between the underlying data dynamics and decision-making policies along the way.
The causal modeling is essential in capturing our long-term fairness goal, since the attainment of \textit{Tier Balancing} is an induced outcome of a sequence of $K$ decision-making strategies $D_{T : T + K - 1}$ (which are indispensable although the fairness notion itself is not explicitly defined on decisions).}\rePurple{C6}{In light of your comment, we clarify the fairness goal immediately after introducing the fairness notion, focusing on the future statistical consequence ($H_{T + K}$) of the decision-distribution interplay along the way (which involves a sequence of decision-making $D_{T : T + K - 1}$).}{0in}\rightSide

Our \textit{Tier Balancing} notion of algorithmic fairness is distinguished from previously proposed fairness notions in several important ways.\footnote{Due to the space limit, we provide detailed discussions on related works in \Cref{appendix:discussion_related_works}.}
To begin with, \rePurple{C7}{We discuss important ways our \textit{Tier Balancing} notion differs from previous fairness notions, and provide additional detailed discussions in \Cref{appendix:discussion_related_works}.}{0in}\leftSide
\textit{Tier Balancing} has a built-in dynamic flavor, whose definition involves variables that span across multiple time steps.
Therefore the audit of \textit{Tier Balancing} inevitably requires long-term and dynamic analysis, which is very different from previously proposed (both associative and causal) fairness notions defined with respect to a static snapshot of reality (e.g., \citealp{calders2009building,hardt2016equality,kusner2017counterfactual,chiappa2019path}).

Besides, considering the fact that the decision-distribution interplay often involves situation changes in the hidden causal factors, \textit{Tier Balancing} extends the scope of fairness consideration beyond only observed variables to hidden causal factors, which makes our notion a technically more challenging but more natural long-term fairness goal to achieve.
The endeavor to explore the possibility of defining fairness in terms of latent causal factors is not an unrealistic fantasy.
Recent advances in causal discovery literature have established identifiability results (under certain assumptions) on causal structures among latent variables \citep{xie2020generalized,adams2021identification,kivva2021learning,xie2022identification}, which provide not only a theoretical justification, but also an indication of the potential, for our effort in exploring long-term fairness endeavor through modeling latent causal factors.

Furthermore, although \textit{Tier Balancing} is not directly defined in terms of the decisions themselves, \textit{Tier Balancing} is characterized with a detailed causal modeling that involves both decision-making and data dynamics.
The explicit causal modeling of the decision-distribution interplay offers both challenges and opportunities for more principled fairness inquiries in the long-term, dynamic context \citep{hu2018short,liu2018delayed,mouzannar2019fair,heidari2019long,zhang2020fair}.

In Definition \ref{definition:TB}, we specify the step $K$ at which \textit{Tier Balancing} is evaluated.
If $K = 0$, i.e., $H_T \indep A_T$ happens to be attained initially (although in general it may not be the case), \textit{Tier Balancing} is attained at the beginning.\footnote{
In \Cref{appendix:when_initially_attained}, we analyze the scenario in which \textit{Tier Balancing} is initially satisfied.}
When $H_T \nindep A_T$ and $K \geq 1$, $K$\textit{-Step Tier Balancing} is achieved with respect to the underlying causal factor $H_{T + K}$.

    \section{Characterizing Tier Balancing}\label{sec:TB}
In \Cref{sec:problem_setup}, we propose a detailed causal modeling of the interplay between decision-making and underlying data generating processes, based on which we formulate a novel long-term fairness notion \textit{Tier Balancing}.
Our model is applicable to a wide range of resource allocation scenarios, e.g., hiring practice \citep{hu2018short,kannan2019downstream}, credit application \citep{liu2018delayed}, predictive policing \citep{ensign2018runaway,elzayn2019fair}.

For the clarity of discussion, in this section we consider a running example of credit application where agents in a fixed population repeatedly apply for credit.
We first demonstrate how one can apply the proposed causal modeling in the credit application scenario in \Cref{sec:model_quantitative}.
Then in \Cref{sec:one_step_analysis_framework}, we characterize the \textit{Single-step Tier Imbalance Reduction (STIR)} term for the purpose of conducting one-step analysis on the \textit{Tier Balancing} notion of long-term fairness.

\subsection{Modeling Detail of Decision-Distribution Interplay}\label{sec:model_quantitative}
As shown in Figure \ref{fig:dynamic_model}, the unmeasured latent causal factor $H_t$ (the hidden socio-economic status) is the actual root cause of the ground truth label $\Yori_t$ as well as the (possibly) observed label $\Yobs_t$ (the repayment record).
For any given tier $H_t = h_t$, let us assume that the unobserved ground truth $\Yori_t$ is sampled from a Bernoulli distribution with $h_t$ as the success probability, and that the observed repayment record $\Yobs_t$ depends on both the ground truth label $\Yori_t$ and the previously received decision $D_{t - 1}$:
\rePurple{C4}{There is an arrow between $Y_t^{(\mathrm{ori})}$ and $Y_t^{(\mathrm{obs})}$ since $Y_t^{(\mathrm{obs})}$ is the observed copy of $Y_t^{(\mathrm{ori})}$ masked by $D_{t -1}$.
The mathematical expression is summarized in Equation \ref{equ:Y_definitions}.}{0in}\leftSide
\begin{equation}\label{equ:Y_definitions}
    \Yobs_t \begin{cases}
        = \Yori_t & \text{if } D_{t - 1} = 1, \\
        \textit{is undefined} & \text{if } D_{t - 1} = 0,
    \end{cases}
    \text{ where }
    \Yori_t \sim \mathrm{Bernoulli}(h_t).
\end{equation}
From Equation \ref{equ:Y_definitions} we can see that $\Yobs_t$ is a masked copy of $\Yori_t$ (masked by $D_{t - 1}$), and we have the following proposition capturing the property of the marginal distribution of $\Yobs_t$:
\begin{proposition}\label{proposition:Y_obs_bernoulli}
    At time step $T = t$, for any $H_t = h_t \in (0, 1]$, under the specified dynamics, among the population where ground truth is actually observable, i.e., $\Yobs_t$ is not undefined, we have:
    \begin{equation*}
        \Yobs_t \sim \mathrm{Bernoulli}(h_t).
    \end{equation*}
\end{proposition}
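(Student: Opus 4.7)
The plan is to reduce the claim to the existing Bernoulli assumption on $\Yori_t$ plus a single conditional independence that is immediate from the DAG in Figure \ref{fig:dynamic_model}. Concretely, ``$\Yobs_t$ is observable'' is, by Equation \ref{equ:Y_definitions}, exactly the event $\{D_{t-1}=1\}$, and on this event $\Yobs_t$ coincides with $\Yori_t$. So the proposition reduces to showing
\[
\Pr(\Yori_t = y \mid H_t = h_t,\ D_{t-1}=1) = \Pr(\Yori_t = y \mid H_t = h_t), \qquad y\in\{0,1\},
\]
since the right-hand side is $h_t^{y}(1-h_t)^{1-y}$ by the stipulated sampling rule $\Yori_t \sim \mathrm{Bernoulli}(h_t)$.

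First I would unpack the definition of $\Yobs_t$: conditioning on observability means conditioning on $D_{t-1}=1$, and Equation \ref{equ:Y_definitions} then gives $\Yobs_t = \Yori_t$ deterministically, so the distribution of $\Yobs_t$ on the observable subpopulation is the distribution of $\Yori_t$ conditional on $\{H_t = h_t,\ D_{t-1}=1\}$. Second, I would invoke the causal structure to establish $\Yori_t \indep D_{t-1} \mid H_t$. In Figure \ref{fig:dynamic_model}, $D_{t-1}$ enters the update of $H_t$ (together with $H_{t-1}$ and $\Yori_{t-1}$), and $\Yori_t$ is generated downstream from $H_t$; the only directed paths from $D_{t-1}$ to $\Yori_t$ therefore pass through $H_t$, so conditioning on $H_t$ d-separates the two, which one can verify by checking each such path is blocked at the non-collider $H_t$.

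Third, combining the two ingredients yields
\[
\Pr(\Yobs_t = y \mid H_t = h_t,\ D_{t-1}=1) = \Pr(\Yori_t = y \mid H_t = h_t) = h_t^{y}(1-h_t)^{1-y},
\]
which is exactly $\mathrm{Bernoulli}(h_t)$, as claimed.

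The only delicate step is the conditional independence $\Yori_t \indep D_{t-1} \mid H_t$: I should make sure no auxiliary path (e.g., through $\theta_{t-1}\to\theta_t$ and subsequent decisions, or through the $A_{t-1}\to A_t$ chain coupled with the $(H_t,A_t)\to\Yori_t$ generating mechanism) can create an unblocked back-door between $D_{t-1}$ and $\Yori_t$ after conditioning on $H_t$ alone. This is the main obstacle, but it is mild: by the model specification in Section \ref{sec:problem_setup}, $\Yori_t$ is generated from $(H_t, A_t)$ and the Bernoulli law in Equation \ref{equ:Y_definitions} fixes its conditional distribution given $H_t$ alone (with exogenous noise independent of everything upstream), so no such back-door survives and the argument goes through with only the assumptions already in place.
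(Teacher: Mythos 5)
Your proposal is correct and takes essentially the same approach as the paper: both arguments hinge on the d-separation fact $\Yori_t \indep D_{t-1} \mid H_t$ and on identifying the observable subpopulation with the event $\{D_{t-1}=1\}$, on which $\Yobs_t = \Yori_t$. The paper merely writes the same step as an explicit normalization $\frac{d(h_t)h_t}{d(h_t)h_t + d(h_t)(1-h_t)} = h_t$ (treating the degenerate cases $d(h_t)\in\{0,1\}$ separately), which your direct conditioning on $D_{t-1}=1$ subsumes.
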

Proposition \ref{proposition:Y_obs_bernoulli} captures the fact that among the population where repayment record $\Yobs_t$ is observed, the marginal distributions of $\Yobs_t$ and $\Yori_t$ are actually identical.
This property indicates that although one does not have access to the unobserved tier, i.e., the socio-economic status $H$, one can still use the observed $\Yobs$ as a bridge to infer its behavior.

\begin{fact}\label{fact:var_form}
    Let $\Acal$ be the domain of value for the protected feature $A_t$, and $\Epsilon$ be the domain of value for all other exogenous noise terms $E_t$.
    For each time step $T = t$, we can represent $D_t$, $\Yori_t$, and $H_t$ via functions $g^D_t$, $g^{\Yori}_t$, and $f_t$ respectively, where $g^D_t: \Acal \times \Epsilon \rightarrow \{0, 1\}$, $g^{\Yori}_t: \Acal \times \Epsilon \rightarrow \{0, 1\}$, and $f_t: \Acal \times \Epsilon \rightarrow (0, 1]$, i.e., $D_t = g^D_t(A_t, E_t)$, $\Yori_t = g^{\Yori}_t(A_t, E_t)$, and $H_t = f_t(A_t, E_t)$.
\end{fact}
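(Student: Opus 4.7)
The plan is to invoke the structural causal model (SCM) induced by the DAG in \Cref{fig:dynamic_model}, in which every endogenous variable is a deterministic function of its parents together with an independent exogenous noise term, and then to bundle every noise and auxiliary input that feeds into the ancestral tree of the three target variables into a single random quantity $E_t$, keeping $A_t$ as an explicit argument of the resulting function.

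First I would write down the one-step structural equation for each target variable in terms of its direct parents. From the modeling in \Cref{sec:causal_DAG} and \Cref{sec:model_quantitative}, the structural forms are of the type $H_t = \psi_H(H_{t-1}, \Yori_{t-1}, D_{t-1}, \epsilon_{H,t})$, $\Yori_t = \psi_Y(H_t, \epsilon_{Y,t})$ (matching the Bernoulli form in \Cref{equ:Y_definitions}), and $D_t = \psi_D(A_t, X_{t,i}, \Yobs_t, \theta_t, \epsilon_{D,t})$. By recursively substituting these and the analogous structural equations for $X_{t,i}$, $\Yobs_t$, and $A_s$ for $s \le t$, each of $D_t$, $\Yori_t$, $H_t$ can be rewritten as a deterministic function of the initial values ($H_1$, $A_1$), all exogenous noises through time $t$, and the strategy parameters $\theta_{1:t}$. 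The DAG is locally finite through time $t$, so this unrolling terminates in finitely many steps.

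Then I would consolidate: let $E_t$ be the tuple containing all exogenous noise variables appearing during the unrolling (the $\epsilon_H$, $\epsilon_Y$, $\epsilon_D$, $\epsilon_X$ and $\epsilon_A$ noises at every time up to $t$ in the ancestral set), together with the initial values $H_1, A_1$ and the realizations of $\theta_{1:t}$. The joint range of this tuple defines the product space that we call $\Epsilon$. Because $A_t$ already appears as a direct parent of $D_t$ in the unrolled expression, and any ancestral $A_s$ that shows up in the expressions for $\Yori_t$ or $H_t$ is itself recoverable from $(A_1, \epsilon_{A,1:t})$ inside $E_t$, all three variables admit representations of the advertised forms $D_t = g^D_t(A_t, E_t)$, $\Yori_t = g^{\Yori}_t(A_t, E_t)$, and $H_t = f_t(A_t, E_t)$, with codomains $\{0,1\}$, $\{0,1\}$, and $(0,1]$ inherited directly from the variable domains declared in \Cref{sec:causal_DAG}.

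The main obstacle is conceptual bookkeeping rather than mathematical depth: one could legitimately worry that $A_t$ is already determined by $(A_1, \epsilon_{A,1:t})$ inside $E_t$, so that singling it out as a separate argument is redundant. The cleanest resolution is to allow this redundancy deliberately, so that $g^D_t, g^{\Yori}_t, f_t$ are defined on the full product $\Acal \times \Epsilon$ and agree with the SCM-implied value whenever $A_t$ is consistent with $E_t$; this consistency holds on the support of the joint distribution, which is all the Fact requires. With this convention the three representations follow immediately from the recursive substitution, and keeping $A_t$ out in front is precisely what enables the downstream analysis in which dependence on the protected attribute is isolated from dependence on all other stochastic inputs.
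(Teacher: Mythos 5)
Your proposal is correct and follows essentially the same route as the paper: the paper also treats this as a direct consequence of the functional causal model, iteratively substituting structural equations until every variable is expressed in terms of the root causes $(A_t, E_t)$ (see \Cref{appendix:remark_FCM}). The only cosmetic difference is that the paper absorbs the historical influence into time-$t$ exogenous components (the $\widetilde{\epsilon}$ terms in \Cref{appendix:equ:exogenous_concat}) and keeps $A_t$ as a separate root cause rather than unrolling back to $T=1$ and placing the protected-feature noises inside $E_t$, which lets it sidestep the redundancy you resolve at the end.
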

Notice that Fact \ref{fact:var_form} represents variables $D_t$, $\Yori_t$, and $H_t$ with functions of \emph{all} root causes (including the protected feature $A_t$ and exogenous noise terms $E_t$) in the system without explicitly specifying the respective functional forms, which may depend on further assumptions on the joint distribution and the time step $T = t$.\footnote{We implicitly adopt the assumption that the protected feature itself is not caused by other variables.}
Fact \ref{fact:var_form} is a direct result of representing causal relations with a functional causal model (FCM) \citep{spirtes1993causation,pearl2009causality}, and is for the purpose of notational convenience in later analysis.\footnote{In \Cref{appendix:remark_FCM}, we discuss the role of exogenous terms $E_t$ in Fact \ref{fact:var_form}.}

\begin{assumption}[\textbf{Multiplicative update of underlying tier}]\label{assumption:Hupdate}
    Let $\alpha_D, \alpha_Y \in [0, \frac{1}{2})$ be the parameters that capture the influences from current decision $D_t$ and ground truth $\Yori_t$ to next step:
    \begin{equation}\label{equ:Hupdate}
        H_{t + 1} = \min \big\{ 1, H_t \cdot \big[ 1 + \alpha_D (2 D_t - 1) + \alpha_Y (2 \Yori_t - 1) \big] \big\}.
    \end{equation}
    \vspace{-2em}
\end{assumption}

Assumption \ref{assumption:Hupdate} states that $H_{t + 1}$ treats $H_t$ as a baseline, with increase or decrease in a multiplicative form based on agent's received decision and repayment information.
We are inspired by the evolution theory where multiplicative updates have been a common modeling choice to capture updates in relevant statistics  \citep{friedman2016evolutionary,dawkins2017selfish}.
The explicit dependency on the update parameters, $\alpha_D$ and $\alpha_Y$, related to $D_t$ and $\Yori_t $ respectively, characterizes the two important aspects of our model: the update in individuals' tier potentially depends on the received decision $D_t$, as well as the ground truth $\Yori_t $ (even if unobserved).\footnote{
In Assumption \ref{assumption:Hupdate} we explicitly specify that we are considering $\Yori_t$ instead of $\Yobs_t$.
At $T = t$, every individual has a binary ground truth $\Yori_t$.
However, it might not be the case that everyone has an observable $\Yobs_t$ (since $\Yobs_t$ is undefined for an individual if its $D_{t - 1} = 0$).}
The condition $\alpha_D, \alpha_Y \in [0, \frac{1}{2})$ makes sure that $H_{t + 1} > 0$, and the $\min\{\cdot, 1\}$ operation makes sure that $H_{t+1}$ is upper-capped by $1$.

\textcolor{RevisionText}{In practical scenarios where agents repetitively apply for resource (e.g., in our running example of credit application) at each time step $T = t$, with the entire group remains unchanged, we have:}
\begin{assumption}\label{assumption:samegroup}
    The protected feature at time step $T = t + 1$ is an identical copy of that at $T = t$:
    \begin{equation}
        \forall ~ T = t: ~ A_{t + 1} = A_t.
    \end{equation}
\end{assumption}

\subsection{One-step Analysis Towards Tier Balancing}\label{sec:one_step_analysis_framework}
In Definition \ref{definition:TB} we established the long-term fairness goal.
Considering the dynamic property of this fairness notion, apart from defining ``what exactly is fair in the long run'', in order to bridge the cognitive gap we also need to clarify the meaning of ``getting closer to the long-term fairness goal'', i.e., achieving the long-term fairness goal through a sequence of algorithmic interventions.
In this section, we present the one-step theoretical analysis framework and characterize the \textit{Single-step Tier Imbalance Reduction (STIR)} term $\STIR$ for the purpose of investigating \textit{Tier Balancing}.

\subsubsection{Single-step Tier Imbalance Reduction (STIR)}\label{sec:introduce_STIR}
Recall that the long-term fairness objective is the independence between the protected feature $A_T$ and the hidden tier $H_T = f_T(A_T, E_T)$ (at a time step $T$).
Equivalently, we would like $H_T$ to \textit{not} be a function of $A_T$.
We can view $f_T(0, E_T)$ and $f_T(1, E_T)$ as two dependent random variables, and quantify the amount of  ``getting closer to the long-term fairness goal'' by comparing the absolute difference between $f_T(0, E_T)$ and $f_T(1, E_T)$ before (when $T = t$) and after (when $T = t + 1$) one-step update, and see if the gap decreases.
Since the individual-level exogenous noise term $E_T$ is the input, this comparison of absolute differences is on the individual level.
Therefore in order to quantitatively characterize the overall amount of ``getting closer to the long-term fairness goal'', we need to take into account different possible combinations of decision $D_t$ and outcome $\Yori_t$ (when $T = t$) for each individual, and aggregate the individual-level comparisons over the population.

Given combinations of $D_t$ and $\Yori_t$, when $E_t = \epsilon$, we denote the conditional joint probability density of $\big( f_t(0, E_t), f_t(1, E_t) \big)$ as \textcolor{RevisionText}{$q_t\big(f_t(0, \epsilon), f_t(1, \epsilon) \mid d, d', y, y' \big)
\coloneqq q_t\big(f_t(0, \epsilon), f_t(1, \epsilon)
\mid g_t^D(0, \epsilon) = d,
g_t^D(1, \epsilon) = d',
g_t^{\Yori}(0, \epsilon) = y,
g_t^{\Yori}(1, \epsilon) = y' \big)$,} and calculate the \textit{Single-step Tier Imbalance Reduction (STIR)} term from $T = t$ to $T = t + 1$, denoted by
\reBlue{C6}{Following your suggestion, we make the dependence between the STIR term and the time step (from $T = t$ to $T = t + 1$) explicit throughout the manuscript.}{0in}\leftSide
$\STIR$, as following:\footnote{The details of the derivation can be found in \Cref{appendix:derive_STIR}.}
\begin{equation}\label{equ:STIR}
    \begin{split}
        & \textcolor{RevisionText}{\STIR
        \coloneqq
        \Ebb \left[~\lvert
            f_{t + 1}(0, E_{t + 1}) - f_{t + 1}(1, E_{t + 1})
        \rvert~\right]
        -
        \Ebb \left[~\lvert
            f_{t}(0, E_{t}) - f_{t}(1, E_{t})
        \rvert~\right]} \\
        & = \sum_{d, d', y, y' \in \{0, 1\}}
        P_t (d, d', y, y')
        \cdot
        \int_{\epsilon \in \Epsilon}
        \int_{\xi \in \Epsilon} q_t \big(f_t(0, \epsilon), f_t(1, \epsilon) \mid d, d', y, y' \big) \\
        & ~~~~~~
        \cdot \big( \lvert \varphi_{t + 1}(\xi) \rvert - \lvert \varphi_t(\epsilon) \rvert \big)
        \cdot
        \textcolor{RevisionText}{\mathbbm{1}\{ \varphi_{t + 1}(\xi) = G_t(f_t, g_t^D, g_t^{\Yori}; d, d', y, y', \epsilon, \alpha_D, \alpha_Y) \}}
        d \xi d \epsilon,
    \end{split}
\end{equation}
where $\varphi_t(\epsilon) \coloneqq f_t(0, \epsilon) - f_t(1, \epsilon)$, \textcolor{RevisionText}{$G_t$ is a function whose value \textit{only} relies on the information available at time step $T = t$}, and $P_t (d, d', y, y')$ is the joint distribution of $(d, d', y, y')$:
\begin{equation*}
    \begin{split}
        P_t(d, d', y', y') \coloneqq P_t \big(g_t^D(0, E_t) = d, g_t^D(1, E_t) =
        d', g_t^{\Yori}(0, E_t) = y, g_t^{\Yori}(1, E_t) = y' \big).
    \end{split}
\end{equation*}
We can then characterize ``getting closer to long-term fairness goal'' via the inequality
$\STIR < 0$.

\subsubsection{Simplification Assumptions}
From Equation \ref{equ:STIR} we can see that the calculation of $\STIR$ requires knowledge about the gap comparison for each individual $\lvert \varphi_{t + 1}(e_{t + 1}) \rvert - \lvert \varphi_t(e_t)\rvert$, the conditional joint density before one-step dynamics $q_t\big(f_t(0, \textcolor{RevisionText}{\epsilon}), f_t(1, \textcolor{RevisionText}{\epsilon}) \mid d, d', y, y' \big)$, and the joint probability for different combinations of decision and ground truth before one-step dynamics $P_t (d, d', y, y')$.
\reBlue{C8}{Our results \Cref{theorem:one_step_attainability,theorem:perfect_one_step,theorem:CF_one_step} are closely related to the behavior of $\STIR$, whose quantitative analysis involves three different components. Therefore, it is essential to investigate each one of these three components separately.}{0in}\rightSide
In order to quantitatively analyze the property of $\STIR$, it is essential that we have access to all three aforementioned quantities.

To begin with, we need to know the instantiations of $\varphi_{t + 1}(e_{t + 1})$ given $\varphi_t(e_t)$ under the specified dynamics.
Luckily, as we have illustrated in Table \ref{table:delta_next_Dmajor}, Table \ref{table:delta_next_Ymajor}, and Table \ref{table:delta_next_nomajor} of Appendix, under the specified dynamics we can list all possible cases of the term $\varphi_{t + 1}(e_{t + 1})$ given $\varphi_t(e_t)$.

Besides, we need additional knowledge on the conditional joint density $q_t\big(f_t(0, \textcolor{RevisionText}{\epsilon}), f_t(1, \textcolor{RevisionText}{\epsilon}) \mid d, d', y, y' \big)$.
For the purpose of better elaboration, we present two assumptions on the behavior of this conditional joint density -- a qualitative assumption and a quantitative assumption:
\begin{assumption}[\textbf{Qualitative assumption}]\label{assumption:biased_weight_qualitative}
    For any time step $T = t$ and any exogenous term $E_t = \epsilon \in \Epsilon$, let us denote $y = g_t^{\Yori}(0, \epsilon)$ and $y' = g_t^{\Yori}(1, \epsilon)$.
    The following inequalities hold:
    \begin{equation}\label{equ:example_favoring}
        \begin{split}
            &P_t \big( f_t(0, \epsilon) > f_t(1, \epsilon) \mid y > y' \big)
            > P_t \big( f_t(0, \epsilon) < f_t(1, \epsilon) \mid y > y' \big), \\
            &P_t \big( f_t(0, \epsilon) < f_t(1, \epsilon) \mid y < y' \big)
            > P_t \big( f_t(0, \epsilon) > f_t(1, \epsilon) \mid y < y' \big).
        \end{split}
    \end{equation}
\end{assumption}
\begin{assumption}[\textbf{Quantitative assumption}]\label{assumption:biased_weight_quantitative}
    On top of Assumption \ref{assumption:biased_weight_qualitative}, let us further assume that the conditional joint density $q_t \big( f_t(0, \textcolor{RevisionText}{\epsilon}), f_t(1, \textcolor{RevisionText}{\epsilon}) \mid d, d', y, y' \big)$ satisfies the following condition:
    \vspace{-0.3em}
    \begin{equation}
        \begin{split}
            &q_t \big( f_t(0, \textcolor{RevisionText}{\epsilon}), f_t(1, \textcolor{RevisionText}{\epsilon}) \mid d, d', y, y' \big)
            =\begin{cases}
                \gamma_{dd'yy'}^{\text{(up)}} & \text{if } f_t(0, \epsilon) \leq f_t(1, \epsilon) \\[5pt]
                \gamma_{dd'yy'}^{\text{(low)}} & \text{if } f_t(0, \epsilon) > f_t(1, \epsilon)
            \end{cases},
        \end{split}
    \end{equation}
    where
    $\gamma_{dd'yy'}^{\text{(low)}} + \gamma_{dd'yy'}^{\text{(up)}} = 2$,
    $\gamma_{dd'yy'}^{\text{(low)}} < \gamma_{dd'yy'}^{\text{(up)}} \text{ when } y < y'$,
    and $\gamma_{dd'yy'}^{\text{(low)}} > \gamma_{dd'yy'}^{\text{(up)}}
            \text{ when } y > y'$.
\end{assumption}

Assumption \ref{assumption:biased_weight_qualitative} is rather mild, stating that for any given exogenous noise term (of an individual) $E_t = e_t$, whenever the ground truth $\Yori_t$ favors certain demographic group,
it is more likely that the underlying tier also favors the same group.
Assumption \ref{assumption:biased_weight_quantitative} is just a special case of Assumption \ref{assumption:biased_weight_qualitative}, with quantitative characteristics built-in for technical purposes.\footnote{
    In \Cref{appendix:illustration_assumption}, we present illustrative figures to demonstrate the connection between qualitative and quantitative assumptions.
}

Lastly, we need to know the (behavior of) joint probability density $P_t(d, d', y, y')$ for all combinations of $(d, d', y, y')$.
In fact, as we shall see in \Cref{sec:one_step_analysis}, when taking into consideration certain characteristics of the predictor, the joint probability $P_t(d, d', y, y')$ would follow some patterns that can simplify the analysis.
    \section{One-step Analysis Towards the Long-term Fairness Goal}
\label{sec:one_step_analysis}
In this section, we consider the possibility of attaining the long-term fairness goal with (a sequence of) one-step interventions.
We first present a negative result that in general one cannot hope to achieve the long-term fairness goal through a single one-step intervention, i.e., static decision-making.
In light of this result, we further investigate the possibility of getting closer to the long-term fairness goal with a sequence of one-step interventions, and present possibility and impossibility results accordingly.

\subsection{The General Impossibility of Achieving 1-Step Tier Balancing}\label{sec:one_step_not_enough}
In the following theorem, we prove that it is in general impossible to achieve $H_{t + 1} \indep A_{t + 1}$ when initially $H_t \nindep A_t$, i.e., in general we cannot achieve \textit{Tier Balancing} with a single one-step intervention.
\begin{theorem}\label{theorem:one_step_attainability}
    Let us consider the general situation where both $D_t$ and $\Yori_t$ are dependent with $A_t$, i.e., $D_t \nindep A_t, \Yori_t \nindep A_t$.
    Then under Fact \ref{fact:var_form}, Assumption \ref{assumption:Hupdate}, and Assumption \ref{assumption:samegroup}, as well as the specified dynamics, when $H_t \nindep A_t$, only if at least one of the following conditions holds true for all $e_t \in \Epsilon$ can we possibly attain $H_{t + 1} \indep A_{t + 1}$:
    \vspace{-0.5em}
    \setlist{nolistsep}
    \begin{enumerate}[label=(\arabic*), noitemsep, topsep=0pt]
        \item The ratio $\frac{f_t(0, e_t)}{f_t(1, e_t)}$ has a specific domain of value $\frac{f_t(0, e_t)}{f_t(1, e_t)} = \frac{1 \pm \alpha_D \pm \alpha_Y}{1 \pm \alpha_D \pm \alpha_Y}$; 
        \item Positive (negative) label is exclusive to the advantaged (disadvantaged) group, and everyone receives a positive decision (if $\alpha_D > \alpha_Y$);
        \item Negative (positive) labels only appear in the advantaged (disadvantaged) group, and everyone receives a positive decision (if $\alpha_D > \alpha_Y$);
        \item Everyone has a positive label, but the positive decision is exclusive to the advantaged group (if $\alpha_D < \alpha_Y$);
        \item Everyone has a positive label, but the positive decision is exclusive to the disadvantaged group (if $\alpha_D < \alpha_Y$).
    \end{enumerate}
    \vspace{-0.5em}
\end{theorem}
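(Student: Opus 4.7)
The plan is to first reduce the distributional statement $H_{t+1} \indep A_{t+1}$ to a pointwise algebraic condition on the structural equations. By Assumption \ref{assumption:samegroup} we have $A_{t+1} = A_t$, and by Fact \ref{fact:var_form} the updated tier is the deterministic function $H_{t+1} = f_{t+1}(A_{t+1}, E_{t+1})$. Hence $H_{t+1} \indep A_{t+1}$ is equivalent to $f_{t+1}(\cdot, e_t)$ not depending on its first argument for every $e_t \in \Epsilon$, i.e., $f_{t+1}(0, e_t) = f_{t+1}(1, e_t)$ pointwise. This step is what turns a distributional question into per-individual algebra and drives the whole necessary-condition analysis.

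The second step is to substitute the multiplicative update from Assumption \ref{assumption:Hupdate}, yielding $f_{t+1}(a, e_t) = \min\bigl\{1,\, f_t(a, e_t) \cdot m(a, e_t)\bigr\}$ with $m(a, e_t) := 1 + \alpha_D(2 g_t^D(a, e_t) - 1) + \alpha_Y(2 g_t^{\Yori}(a, e_t) - 1) \in \{1 \pm \alpha_D \pm \alpha_Y\}$. The $\min\{1,\cdot\}$ ceiling drives the case split per $e_t$: either neither counterfactual is clipped, in which case equality rearranges to $\tfrac{f_t(0, e_t)}{f_t(1, e_t)} = \tfrac{m(1, e_t)}{m(0, e_t)}$, which is precisely the ratio shape asserted in condition (1); or both counterfactuals are clipped to $1$. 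The mixed ``one clipped, one not'' branch is immediately ruled out, since the clipped side equals $1$ while the unclipped side is strictly below $1$ and the two cannot coincide.

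The main combinatorial step is to enumerate the ``both clipped'' regime. Since $f_t(a, e_t) \leq 1$, clipping on both sides forces $m(a, e_t) \geq 1$ for each $a \in \{0, 1\}$. A sign check on $\alpha_D(2 d_a - 1) + \alpha_Y(2 y_a - 1) \geq 0$ rules out $(d_a, y_a) = (0, 0)$ outright, always admits $(1, 1)$, admits $(1, 0)$ strictly only when $\alpha_D > \alpha_Y$, and admits $(0, 1)$ strictly only when $\alpha_D < \alpha_Y$. Imposing the same admissible pattern uniformly across the support of $E_t$ then collapses the feasible $(d, d', y, y')$ profiles to exactly the four cases recorded in conditions (2)–(5), with the side conditions $\alpha_D > \alpha_Y$ (for (2), (3)) and $\alpha_D < \alpha_Y$ (for (4), (5)) emerging directly from which family of patterns admits strict feasibility.

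The hard part, I expect, is the bookkeeping within the clipped regime: verifying exhaustiveness once one simultaneously accounts for the $\alpha$-sign constraint and the prior-tier requirement $f_t(a, e_t) \geq 1/m(a, e_t)$; keeping the advantaged/disadvantaged labelling consistent with the direction of favoritism across both the label and decision coordinates so that the four profiles line up exactly as stated in (2)–(5); and noting that under the dependence hypothesis $D_t \nindep A_t$ and $\Yori_t \nindep A_t$, the deterministic profiles in (2)–(5) remain compatible with the hypothesis only via functional dependence through the $A_t$ argument of $g_t^D$ and $g_t^{\Yori}$, which is the nontrivial sense in which those conditions remain meaningful rather than vacuous.
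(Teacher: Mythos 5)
Your proposal is correct and follows essentially the same route as the paper's proof: reduce $H_{t+1}\indep A_{t+1}$ to the pointwise functional equality $f_{t+1}(0,e_t)=f_{t+1}(1,e_t)$, substitute the multiplicative update, and split on the behavior of the $\min\{\cdot,1\}$ ceiling, obtaining the ratio condition in the unclipped branch and the four trivial-policy profiles in the both-clipped branch. The only difference is presentational: the paper reaches the same conclusion by exhaustively tabulating all sixteen $(d,d',y,y')$ combinations (Tables \ref{table:H_next_Dmajor}--\ref{table:H_next_nomajor}) and matching the last two columns, whereas you compress that enumeration into a sign analysis of the multiplier $1+\alpha_D(2d-1)+\alpha_Y(2y-1)$.
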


Theorem \ref{theorem:one_step_attainability} lists all possible ways to directly achieve the long-term fairness goal (under the specified condition).
As we can see that all of these conditions are rather restrictive: Condition (1) imposes strong conditions on the functional form of $f(\cdot)$.
In particular, when $f_t(0, e_t),f_t(1, e_t)$ are both continuous random variables with non-zero density everywhere on the support $(0,1]$, the ratio is still a continuous random variable (because the density is simply an integral over positive multiplications).
We can see that the event specified in Condition (1) is a zero-measure one.
Conditions (2-5) all require trivial decision-making policies.
Therefore, in general one cannot directly achieve the long-term fairness goal.
We need to consider the possibility of approaching the goal step-by-step.

\subsection{Possibility of Getting Closer to Tier Balancing via One-step Interventions}\label{sec:one_step_get_closer}
In \Cref{sec:one_step_not_enough}, we have seen that under the specified dynamics, single one-step intervention is in general not enough in order to achieve long-term fairness.
In this section, we investigate if certain strategy can get closer to the long-term fairness goal through a sequence of algorithmic interventions.
If we follow the same principle to derive the decision-making policy from the data at each time step, one-step analysis suffices for the purpose of studying the interplay between decision and distribution.

\subsubsection{One-step Analysis on Perfect Predictor}\label{sec:one_step_analysis_perfect}
In this section, we consider the perfect predictor, where the predicted output equals to the underlying ground truth, i.e., $D_t = \Yori_t$.
The output of the perfect predictor $D_t$ is fully specified by the value of ground truth $\Yori_t$, and therefore is conditionally independent from the protected feature $A_t$ given $\Yori_t$.
Based on the definition of \textit{Equalized Odds} \citep{hardt2016equality}, the prefect predictor is also the best possible Equalized Odds predictor (at time step $t$) with an accuracy of $100\%$.

Furthermore, the joint probability $P_t(d, d', y, y')$ for a perfect predictor is not always positive for any possible combination of $(d, d', y, y')$.
We have the following sufficient condition for this joint probability to be zero:
\vspace{-0.1em}
\begin{equation}\label{equ:perfect_predictor_joint}
    d \neq y, \text{ or } d' \neq y'
    \implies
    P_t (d, d', y, y') = 0.
\end{equation}
With the help of this additional knowledge on the joint probability $P_t(d, d', y, y')$,
we can quantitatively analyze the \textit{Single-step Tier Imbalance Reduction (STIR)} term $\STIR$ and present the following impossibility result for the perfect predictor:
\begin{theorem}\label{theorem:perfect_one_step}
    Let us consider the general situation where both $D_t$ and $\Yori_t$ are dependent with $A_t$, i.e., $D_t \nindep A_t, \Yori_t \nindep A_t$.
    Under Fact \ref{fact:var_form}, Assumption \ref{assumption:Hupdate}, and Assumption \ref{assumption:samegroup}, and Assumption \ref{assumption:biased_weight_quantitative}, as well as the specified dynamics, when $H_t \nindep A_t$, the perfect predictor does not have the potential to get closer to the long-term fairness goal after one-step intervention, i.e.,
    \vspace{-0.1em}
    \begin{equation}\label{equ:perfect_predictor_no_closer}
        D_t = \Yori_t \implies \Delta^{(\text{Perfect Predictor})}_{\mathrm{STIR}} \big\rvert_{t}^{t + 1} > 0.
    \end{equation}
\end{theorem}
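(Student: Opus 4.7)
The plan is to derive a closed-form expression for $\STIR$ under the perfect predictor constraint and then identify a strictly positive asymmetric contribution guaranteed by Assumption~\ref{assumption:biased_weight_quantitative}. The intuition is that a perfect predictor hands each group back exactly its own label, so any preexisting alignment between a group's label and its tier is systematically reinforced by the multiplicative update of Assumption~\ref{assumption:Hupdate}, widening rather than narrowing the tier gap.

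First I would invoke Equation~\ref{equ:perfect_predictor_joint} to force $P_t(d, d', y, y') = 0$ unless $(d, d') = (y, y')$, reducing the outer sum in Equation~\ref{equ:STIR} to four terms indexed by $(y, y') \in \{0, 1\}^2$, weighted by $P_t(y, y', y, y')$. Using Assumption~\ref{assumption:Hupdate} I then write $\varphi_{t + 1}(\xi)$ in closed form in terms of $f_t(0, \epsilon)$ and $f_t(1, \epsilon)$ (before the $\min\{1, \cdot\}$ cap). The diagonal cases $(0, 0)$ and $(1, 1)$ apply a common multiplier $1 \mp (\alpha_D + \alpha_Y)$ to both groups, so per individual $|\varphi_{t+1}| - |\varphi_t| = \mp(\alpha_D + \alpha_Y)|\varphi_t(\epsilon)|$. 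The cross cases $(0, 1)$ and $(1, 0)$ apply opposite multipliers, yielding a per-individual change of magnitude $(\alpha_D + \alpha_Y)\bigl(f_t(0, \epsilon) + f_t(1, \epsilon)\bigr)$ whose sign is determined by which group initially carried the higher tier.

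Substituting the step-function density of Assumption~\ref{assumption:biased_weight_quantitative} into the inner integrals and partitioning $\Epsilon$ by the sign of $\varphi_t(\epsilon)$, I would extract the asymmetric contribution from each cross case. The strict inequalities $\gamma^{(\mathrm{up})}_{0101} > \gamma^{(\mathrm{low})}_{0101}$ and $\gamma^{(\mathrm{low})}_{1010} > \gamma^{(\mathrm{up})}_{1010}$ (inherited from Assumption~\ref{assumption:biased_weight_quantitative} with $y < y'$ and $y > y'$ respectively) put strictly more density on exactly those individuals for whom the perfect predictor's opposite multipliers widen the gap, i.e., the initially better-tiered group also carries the positive label and hence gets the $1 + \alpha_D + \alpha_Y$ factor. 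Combined with the elementary bound $f_t(0, \epsilon) + f_t(1, \epsilon) > |f_t(0, \epsilon) - f_t(1, \epsilon)| = |\varphi_t(\epsilon)|$ (valid since $f_t > 0$), the per-individual cross-case positive change strictly dominates the per-individual diagonal negative change in magnitude, and the weighted sum over all four cases should give $\STIR > 0$.

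The main obstacle I anticipate is the $\min\{1, \cdot\}$ cap: in case $(1, 1)$ it can turn the pre-cap positive change into a reduction if both tiers are already near $1$, and in the cross cases it can dampen the boosted group's gain. I would handle this by partitioning the support of $(f_t(0, \epsilon), f_t(1, \epsilon))$ according to which group (if any) triggers the cap, and arguing that in the cross cases the cap only removes surplus from the already-favored group (so the sign of the per-individual change is preserved in the dominant region), while in case $(1, 1)$ the cap symmetrically pulls both groups toward $1$ and therefore cannot enlarge the negative diagonal deficit beyond $P_t(1, 1, 1, 1) \cdot |\varphi_t|$. The strictly positive asymmetric mass from Assumption~\ref{assumption:biased_weight_quantitative} on the cross terms then suffices to make the total sum strictly positive.
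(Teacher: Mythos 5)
Your overall strategy (invoke Equation \ref{equ:perfect_predictor_joint} to kill off-diagonal $P_t$, read off the instantiations of $\varphi_{t+1}$ case by case, substitute the step density of Assumption \ref{assumption:biased_weight_quantitative}, and integrate over the unit square partitioned by the $\min\{\cdot,1\}$ cap) is the same as the paper's. But there is a genuine gap in how you close the argument. The paper does \emph{not} aggregate over all four surviving label patterns: it explicitly restricts to the two cross cases $(d,d',y,y')=(0,1,0,1)$ and $(1,0,1,0)$ --- ``cases where at least one of $g^D_t(\cdot,E_t)$ and $g^{\Yori}_t(\cdot,E_t)$ are functions of $A_t$'' --- and its Equation \ref{equ:STIR_perfect} contains only $P_t(0,1,0,1)$ and $P_t(1,0,1,0)$. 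You instead keep the diagonal cases $(0,0,0,0)$ and $(1,1,1,1)$ and try to absorb their negative contribution (which is $-(\alpha_D+\alpha_Y)\,\Ebb[\,\lvert\varphi_t\rvert\,]$ per individual in the $(0,0,0,0)$ case, and as bad as $-\lvert\varphi_t\rvert$ per individual in the doubly-capped region of $(1,1,1,1)$) by a per-individual magnitude comparison $f_t(0,\epsilon)+f_t(1,\epsilon)>\lvert\varphi_t(\epsilon)\rvert$ against the cross-case gain. That comparison is between different individuals living in different conditioning events with unrelated probability weights: nothing in Fact \ref{fact:var_form} or Assumptions \ref{assumption:Hupdate}--\ref{assumption:biased_weight_quantitative} bounds $P_t(0,0,0,0)$ or $P_t(1,1,1,1)$ relative to the cross-case masses, so the weighted sum over all four cases can be driven negative by putting almost all mass on $(0,0,0,0)$ with $\varphi_t\neq 0$ there. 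Your dominance step therefore cannot be completed as written; you need to adopt the paper's restriction to the cross cases (or an extra assumption on the relative masses) for the claim to go through.

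A secondary issue: even restricted to the cross cases, the sign is not delivered by the density asymmetry alone. The cross cases contain gap-\emph{shrinking} slices (sub-cases (vi.1.1), (vi.1.2), (xi.1.1), (xi.1.2) in Table \ref{table:delta_next_Dmajor}), and the paper must actually carry out the piecewise double integrals over the sliced square, obtain the closed form
\begin{equation*}
\Delta^{(\text{Perfect Predictor})}_{\mathrm{STIR}}\big\rvert_{t}^{t+1}
=\big(P_t(0,1,0,1)\gamma^{\text{(low)}}_{0101}+P_t(1,0,1,0)\gamma^{\text{(up)}}_{1010}\big)\cdot\{\cdots\}
+\big(P_t(0,1,0,1)+P_t(1,0,1,0)\big)\cdot[\cdots],
\end{equation*}
and then prove positivity by showing the expression is decreasing in the first coefficient, increasing in $\alpha_D+\alpha_Y$, and tends to $0$ in the limit $\gamma^{\text{(low)}}_{0101},\gamma^{\text{(up)}}_{1010}\rightarrow 1$, $\alpha_D+\alpha_Y\rightarrow 0$. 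Your sketch correctly identifies which $\gamma$'s are larger and why the favored region widens the gap, but it does not account for the shrinking slices, whose defeat requires the explicit computation and the monotonicity-plus-limit argument rather than a qualitative density comparison.
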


Compared to Theorem \ref{theorem:one_step_attainability} that one in general cannot directly attain the long-term fairness goal (balanced tier) through a one-step intervention, Theorem \ref{theorem:perfect_one_step} provides further insights regarding what mission is possible through repetitive one-step interventions.
In particular, under relatively mild assumptions, Theorem \ref{theorem:perfect_one_step} establishes the impossibility of even getting closer to the long-term fairness goal through one-step interventions with the perfect predictor.

\begin{figure*}[t]
    \centering
    \captionsetup{format=hang}
    \includegraphics[width=0.95\textwidth]{
        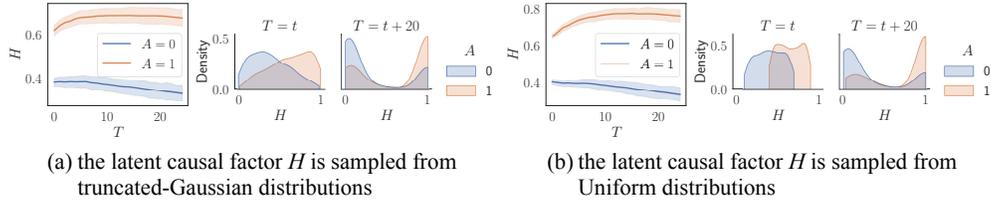}
    \caption{
        Illustration of the interplay between decision with perfect predictors and data dynamics (20 steps) on simulated data, with different initialization
        of tier $H_t$.
    }
    \label{fig:long_term_sim_perfect}
\end{figure*}

\subsubsection{One-step Analysis on Counterfactual Fair Predictor}\label{sec:one_step_analysis_CF}
In this section, we consider the \textit{Counterfactual Fair} \citep{kusner2017counterfactual} predictor.
Similar to the one-step analysis on perfect predictors, we need to make use of the characteristic of Counterfactual Fair predictors to simplify the quantitative analysis on the term $\STIR$.

The definition of Counterfactual Fairness requires the predictor to satisfy $g^D_t(0, E_t) = g^D_t(1, E_t)$ within each time step $T = t$.
Therefore, we have the following sufficient condition for the joint probability $P_t(d, d', y, y')$ to be zero:
\vspace{-0.2em}
\begin{equation}\label{equ:CF_predictor_joint}
    d \neq d' \implies P_t (d, d', y, y') = 0.
\end{equation}
\vspace{-1.5em}
\begin{theorem}\label{theorem:CF_one_step}
    Let us consider the general situation where both $D_t$ and $\Yori_t$ are dependent with $A_t$, i.e., $D_t \nindep A_t, \Yori_t \nindep A_t$.
    Let us further assume that the data dynamics satisfies $\alpha_D \in (0,\frac{1}{2}), \alpha_Y = 0$.
    Then under Fact \ref{fact:var_form}, Assumption \ref{assumption:Hupdate}, Assumption \ref{assumption:samegroup}, and Assumption \ref{assumption:biased_weight_quantitative}, as well as the specified dynamics, when $H_t \nindep A_t$, it is possible for the Counterfactual Fair predictor to get closer to the long-term fairness goal after one-step intervention, if certain properties of the data dynamics and the predictor behavior are satisfied simultaneously, i.e.,
    \vspace{-0.3em}
    \begin{equation}
        \begin{split}
            &\begin{cases}
                & g^D_t(0, E_t) = g^D_t(1, E_t), \frac{P_t(1, 1, 0, 1) + P_t(1, 1, 1, 0)}{P_t(0, 0, 0, 1) + P_t(0, 0, 1, 0)} < \frac{27}{8} \\
                & \alpha_D \in \left( \big(\frac{P_t(1, 1, 0, 1) + P_t(1, 1, 1, 0)}{P_t(0, 0, 0, 1) + P_t(0, 0, 1, 0)}\big)^{\frac{1}{3}} - 1, \frac{1}{2} \right), \alpha_Y = 0
            \end{cases}
            \implies \Delta^{(\text{Counterfactual Fair})}_{\mathrm{STIR}} \big\rvert_{t}^{t + 1} < 0.
        \end{split}
    \end{equation}
    \vspace{-1.5em}
\end{theorem}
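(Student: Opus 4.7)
The plan is to evaluate $\STIR$ from Equation \ref{equ:STIR} under the hypotheses of the theorem, decompose the integrand into positive and negative pieces, and show that the stated bounds on the ratio and on $\alpha_D$ force the aggregate strictly negative.

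First, I would apply the Counterfactual Fairness constraint $g_t^D(0, E_t) = g_t^D(1, E_t)$, which via Equation \ref{equ:CF_predictor_joint} zeros out every $(d, d', y, y')$ with $d \neq d'$. Only the families $d = d' = 0$ and $d = d' = 1$ survive. Plugging $\alpha_Y = 0$ into Assumption \ref{assumption:Hupdate} reduces the tier update to $H_{t+1} = \min\{1, H_t \cdot [1 + \alpha_D(2D_t - 1)]\}$, a scalar multiplier applied identically to both counterfactual copies $f_t(0, \epsilon)$ and $f_t(1, \epsilon)$.

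Second, I would split each surviving family by the sign of $\varphi_t(\epsilon) = f_t(0, \epsilon) - f_t(1, \epsilon)$ and invoke Assumption \ref{assumption:biased_weight_quantitative}. For $y = y'$ tuples, no group-biasing direction is present, so $\gamma^{(\text{up})} = \gamma^{(\text{low})} = 1$ and the two sign regions contribute symmetrically and cancel in the bias direction. For $y \neq y'$ tuples, the biased weights $\gamma^{(\text{up})}_{dd'yy'} \neq \gamma^{(\text{low})}_{dd'yy'}$ break the symmetry; using $\gamma^{(\text{low})} + \gamma^{(\text{up})} = 2$, the sign-asymmetric residual is carried exactly by the tuples $P_t(1,1,0,1)$, $P_t(1,1,1,0)$, $P_t(0,0,0,1)$, $P_t(0,0,1,0)$, which is precisely why these four quantities (and no others) appear in the criterion.

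Third, I would evaluate the two surviving families separately. For $d = d' = 0$, the absolute gap shrinks deterministically by the factor $(1 - \alpha_D)$ since no capping is possible, yielding a strictly negative STIR contribution with magnitude proportional to $\alpha_D \cdot [P_t(0,0,0,1) + P_t(0,0,1,0)] \cdot \Ebb[|\varphi_t(E_t)|]$. For $d = d' = 1$, the gap would scale by $(1 + \alpha_D) > 1$ absent the cap, but the operation $\min\{1, \cdot\}$ saturates one or both tiers at $1$; integrating the resulting piecewise difference against the two-piece density $q_t$ from Assumption \ref{assumption:biased_weight_quantitative} yields a positive contribution that (after the saturation book-keeping) is bounded by a cube-power expression in $(1 + \alpha_D)$ multiplied by $[P_t(1,1,0,1) + P_t(1,1,1,0)] \cdot \Ebb[|\varphi_t(E_t)|]$. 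Demanding the aggregate negative then reduces to an inequality of the form $(1 + \alpha_D)^3 > r$ with $r = \tfrac{P_t(1,1,0,1) + P_t(1,1,1,0)}{P_t(0,0,0,1) + P_t(0,0,1,0)}$, equivalently $\alpha_D > r^{1/3} - 1$; compatibility with the dynamics constraint $\alpha_D < \tfrac{1}{2}$ forces $r < (3/2)^3 = 27/8$, recovering both of the stated bounds.

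The main obstacle is the careful accounting of the $\min\{1, \cdot\}$ cap in the $d = d' = 1$ case: one has to partition the $(f_t(0, \epsilon), f_t(1, \epsilon))$-plane into three regions (neither tier saturates, exactly one saturates, both saturate), integrate each region's contribution under the two-piece conditional density of Assumption \ref{assumption:biased_weight_quantitative}, and verify that the dominant saturation term delivers exactly the cube-power dependence on $1 + \alpha_D$ that appears in the threshold. Tracing the algebra so that the inequality sharpens to the stated cube-root form, rather than a looser polynomial bound, is where the technical effort concentrates.
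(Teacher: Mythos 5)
Your proposal follows essentially the same route as the paper's proof: restrict the sum in Equation \ref{equ:STIR} to the $d = d'$ tuples via Equation \ref{equ:CF_predictor_joint}; evaluate the $d=d'=0$ block as a deterministic contraction of the gap by $(1-\alpha_D)$ (no capping possible), contributing $-\tfrac{\alpha_D}{3}\big(P_t(0,0,0,1)+P_t(0,0,1,0)\big)$; evaluate the $d=d'=1$ block by partitioning the $\big(f_t(0,\epsilon), f_t(1,\epsilon)\big)$ square into saturation regions, obtaining $+\tfrac{\alpha_D}{3(1+\alpha_D)^3}\big(P_t(1,1,0,1)+P_t(1,1,1,0)\big)$; and read off $\alpha_D > r^{1/3}-1$ together with $r<27/8$ from compatibility with $\alpha_D<\tfrac12$. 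This matches the paper's computation, including the two facts that make it clean: the conditional expectation of $\lvert\varphi_t\rvert$ under Assumption \ref{assumption:biased_weight_quantitative} is $\tfrac13$ independently of the $\gamma$'s (because $\gamma^{(\text{low})}+\gamma^{(\text{up})}=2$), and the $(1+\alpha_D)^{-3}$ factor comes from the uncapped sub-square of side $\tfrac{1}{1+\alpha_D}$. Note that the paper evaluates the $d=d'=1$ block exactly rather than bounding it, which is what yields the sharp cube-root threshold you anticipate.

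One step of your sketch is stated incorrectly, although it ends up not changing the formula relative to the paper. You justify dropping the $y=y'$ tuples by claiming their two sign regions ``contribute symmetrically and cancel.'' They do not: with $\alpha_Y=0$, Cases (i) and (iv) of Table \ref{table:delta_next_Dmajor} each contribute $-\alpha_D\,\Ebb[\lvert\varphi_t(E_t)\rvert]$ times their probabilities (strictly negative), and Cases (xiii) and (xvi) contribute non-negative amounts; the integrand $\lvert\varphi_{t+1}\rvert-\lvert\varphi_t\rvert$ has the same sign on both sides of the diagonal $f_t(0,\epsilon)=f_t(1,\epsilon)$, so no $\gamma$-symmetry can make these vanish. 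The paper excludes those tuples on different grounds --- it restricts attention to cases where at least one of $g^D_t(\cdot,e_t)$, $g^{\Yori}_t(\cdot,e_t)$ depends on $A_t$, and its Equation \ref{equ:STIR_CF} sums over only the four cross tuples by construction. To make your argument airtight you should either adopt that scoping restriction explicitly, or carry the $y=y'$ terms through, in which case $P_t(0,0,0,0)+P_t(0,0,1,1)$ would join the denominator and $P_t(1,1,0,0)+P_t(1,1,1,1)$ the numerator of the ratio appearing in the criterion.
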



Theorem \ref{theorem:CF_one_step} demonstrates the possibility (not guarantee) of getting closer to the long-term fairness goal through one-step interventions (under certain conditions) with Counterfactual Fair predictors.
Compared to the general impossibility results for perfect predictors (Theorem \ref{theorem:perfect_one_step}), there are additional requirements (on both data dynamics and $P_t(d, d', y, y')$) accompanying the possibility result for Counterfactual Fair predictors.
That being said, Theorem \ref{theorem:CF_one_step} clearly illustrates that the understanding of data dynamics through a detailed causal modeling, combined with a suitable decision-making strategy, can provide us with a promising way to approach the long-term dynamic fairness goal ($K$\textit{-Step Tier Balancing} with $K > 1$), step by step.

    \section{Experiments}\label{sec:exp}
In this section, we present experimental results on both simulated and real-world FICO data set from \citet{board2007report}.\footnote{
    Our code repository is available on Github: \url{https://github.com/zeyutang/TierBalancing}.}
In the sequence of decision-distribution interplay, the latent causal factor $H_T$ is updated according to the specified dynamics (Equation \ref{equ:Hupdate}) at each time step.
The output of the decision policy (at each time step) depends on the specific scenario.
In particular, we consider perfect predictors and Counterfactual Fair \citep[Level 1 implementation]{kusner2017counterfactual} predictors.

\begin{figure*}[t]
    \centering
    \captionsetup{format=hang}
    \includegraphics[width=1.\textwidth]{
        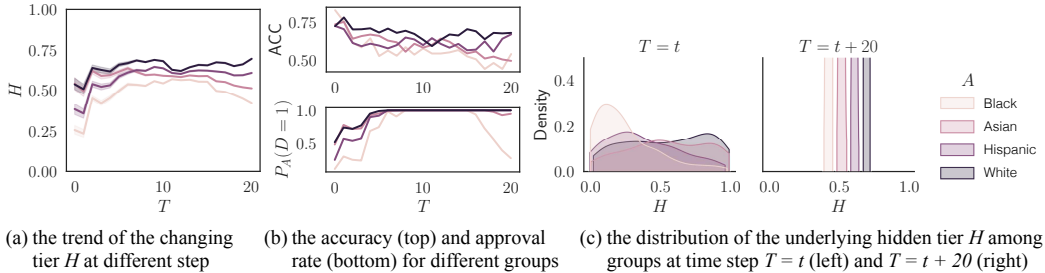}
    \caption{Illustration of the interplay between decision with Counterfactual Fair predictors and the data dynamics (20 steps) on the credit score data set.
    Panel (a) and (b) present the step-by-step tracks of update in tier, accuracy, and approval rates for different groups; panel (c) presents group-conditioned distributions of tier before (left) and after (right) 20 steps of interventions.
    The legend is shared across panel (a), (b), and (c).
    }
    \label{fig:long_term_creditscore_CF}
\end{figure*}

\subsection{Decision with Perfect Predictors on Simulated Data}
In Figure \ref{fig:long_term_sim_perfect} we present the 20-step interplay between decision and the underlying data generating process on the simulated data.
The distributions of $H_t$ for different groups are initialized with truncated Gaussian distributions and Uniform distributions, respectively.
During each time step $T = t$ we generate ground truth labels $\Yori_t$ according to data dynamics specified in \Cref{sec:model_quantitative} and set the decision $D_t$ to be equal to the ground truth $\Yori_t$ (perfect predictor as the decision-making policy); then the pair of $(\Yori_t, D_t)$ are utilized by the data dynamics to determine the tier $H_{t + 1}$ for next step.
As we can see from the left-hand-side figures in panels (a) and (b), the gap between tier for different groups is enlarged as the time goes by.
This indicates that interventions through decision with perfect predictors did not get closer to the long-term fairness goal.

\subsection{Decision with Counterfactual Fair Predictors on Credit Score Data}
The FICO credit score data set contains 301,536 records of TransUnion credit score from 2003 \citep{board2007report}.
In the preprocessed credit score data set \citep{hardt2016equality}, we convert the cumulative distribution function (CDF) of TransRisk score among different groups into group-wise density distributions of the credit score, and use them as the initial tier distributions for different groups.

In Figure \ref{fig:long_term_creditscore_CF} we present the summary of a 20-step interplay between decision with Counterfactual Fair predictors and the underlying data generating process on the credit score data set.
The Counterfactual Fair decision-making strategy is retrained after each one-step data dynamics.
\reBlue{C10}{Figure 3(a) for the Counterfactual Fair decision-making strategy indicates the potential (not guarantee) of getting closer to the long-term fairness goal, e.g., the gap is decreasing until around step 12.}{0in}\rightSide
From Figure \ref{fig:long_term_creditscore_CF}(a) we can observe that the gap between step-by-step tracks of tiers for different groups actually decreases before increasing again (around step 12).
This indicates that decision with Counterfactual Fair predictors does have the potential to get closer to the long-term fairness goal, if the data dynamics and the initial condition (for each one-step analysis at different time step) satisfy certain properties.
Figure \ref{fig:long_term_creditscore_CF}(a) also highlight the importance of our dynamic fairness analyzing framework: if one does not model the interplay between decision and data dynamics, one may well deviate from long-term fairness goal even after making some progress by getting closer to the fairness objective.
    \section{Conclusion and Future Work}\label{sec:conclusion}
In this paper, we propose \textit{Tier Balancing}, a dynamic fairness notion that characterizes the decision-distribution interplay through a detailed causal model over both observed variables and underlying causal factors.
We characterize \textit{Tier Balancing} in terms of a one-step analysis framework on \textit{Single-step Tier Imbalance Reduction (STIR)}.
We show that in general one cannot directly achieve the long-term fairness goal only through a one-step intervention, i.e., static decision-making.
We further show that under certain conditions it is possible (but not guaranteed) for one to get closer to the long-term fairness goal with (a sequence of) Counterfactual Fair decisions.

Our results highlight the challenges and opportunities of enforcing a fairness notion that has built-in capacity to model decision-distribution interplay over underlying causal factors.
Future works naturally include developing algorithms to effectively and efficiently enforce the \textit{Tier Balancing} notion of long-term, dynamic fairness for various practical scenarios.

    \subsection*{Ethics Statement}
The motivation of our work is to pursue long-term fairness.
The conduct of research is under full awareness of, and with adherence to, ICLR Code of Ethics.
We focus on the decision-distribution interplay and present a detailed causal modeling on both observed variables and latent causal factors.
We present challenges and opportunities offered by this new type of fairness endeavor, and hope our work can inspire further research to promote fairness in the long run.
    \subsection*{Acknowledgement}\label{sec:acknowledgement}
This project was partially supported by the National Institutes of Health (NIH) under Contract R01HL159805, by the NSF-Convergence Accelerator Track-D award \#2134901, by a grant from Apple Inc., a grant from KDDI Research Inc., and generous gifts from Salesforce Inc., Microsoft Research, and Amazon Research.
YC and YL are partially supported by the National Science Foundation (NSF) under grants IIS-2143895 and IIS-2040800, and CCF-2023495.

    \stopcontents[mainpaper]

    \bibliography{references}
    \bibliographystyle{iclr2023_conference}
\end{bibunit}

\newpage

\appendix

\title{Supplement to \\``Tier Balancing: Towards Dynamic Fairness over Underlying Causal Factors''}
\settitle

\begin{bibunit}
    \counterwithin{equation}{section}
    \renewcommand{\theequation}{\thesection.\arabic{equation}}

    \listoftodos[\colorbox{white}{List of Responses to Reviewers}]\label{list:toc_responses}

    \startcontents[supplement]
    \renewcommand\contentsname{Table of Contents: Appendix}
    \printcontents[supplement]{l}{1}{\section*{\contentsname}\setcounter{tocdepth}{2}}

    \listoftables



\section{Detailed Discussions on Related Works}\label{appendix:discussion_related_works}
In this section, we provide detailed discussions on related works.
In particular, considering our focus on providing a novel long-term fairness notion with the help of the detailed causal modeling of involved dynamics, we compare our work with previous literature on causal notions of fairness, as well as fairness inquiries in dynamic settings.

\subsection{Causal Notions of Fairness}
Various causal notions of algorithmic fairness have been proposed in the literature, for instance,
fairness notions defined in terms of the (non-)existence of certain causal paths in the graph \citep{kamiran2013quantifying,kilbertus2017avoiding,zhang2017causal},
fairness notions defined through estimating or bounding causal effects \citep{kusner2017counterfactual,chiappa2019path,wu2019pc,mhasawade2021causal},
fairness notions defined with respect to statistics on certain factual/counterfactual groups \citep{imai2020principal,coston2020counterfactual,mishler2021fairness}.
The proposed causal notions audit fairness in an instantaneous manner, i.e., the fairness inquires are with respect to a snapshot of reality, and the scope of consideration is limited to observed variables only.
Our \textit{Tier Balancing} notion has a built-in capacity to inquire fairness in the long-term and dynamic setting, which is very different from instantaneous causal fairness notions (beyond the fact that our notion encompasses latent causal factors).

While we can detect and measure discrimination based on previous (instantaneous) causal notions of fairness (e.g., the existence of certain causal paths or causal effects), eliminating such existence of causal paths or causal effects is a valid goal to achieve but might not be the means one should opt for.
To begin with, there is no guarantee that eliminating a causal path or effect results in non-existence of such causal path or effect in the future under the interplay between decision-making and data dynamics.
Furthermore, the data generating processes represented in the causal model might not be easily manipulable under the same timescale of decision-making (e.g., the ones governed by nature and/or the mode and structure of a society).
One cannot expect that the manipulation on the causal model (for the purpose of enforcing fairness notions) directly translate to real-world changes in the underlying data generating processes.

Different from previous causal fairness notions, instead of directly ``going against'' the underlying data generating process (e.g., by eliminating certain causal path or causal effect), our \textit{Tier Balancing} notion encourages ``working with'' the underlying data generating processes.
With a detailed causal modeling of the decision-distribution interplay, \textit{Tier Balancing} emphases on the possibility of inducing a future data distribution that is fair in the long run.

\subsection{Fairness Inquires in Dynamic Settings}\label{appendix:dynamic_fairness_literature}
Previous literature have considered dynamic fairness in specific practical scenarios, for instance, opportunity allocation in labor market \citep{hu2018short}, a pipeline consisting of college admission followed by hiring \citep{kannan2019downstream}, opportunity allocation in credit application \citep{liu2018delayed}, and resource allocation in predictive policing \citep{ensign2018runaway}.
Different from previous literature, we present a detailed causal modeling of the decision-distribution interplay that is general enough to be applicable in various resource allocation problems (e.g., loan applications, hiring practices) while also being specific enough to encompass nuances in data dynamics for the particular practical scenario of interest.

In terms of the analyzing framework, closely related works have considered the one-step analysis \citep{liu2018delayed,kannan2019downstream,mouzannar2019fair,zhang2019group}.
However, previous works focus on the long-term effect of imposing certain fairness notions that are readily available, for example, \textit{Demographic Parity} \citep{calders2009building,liu2018delayed,mouzannar2019fair} and \textit{Equal Opportunity} \citep{hardt2016equality,liu2018delayed}.
In our work, we formulate a novel notion of long-term fairness, namely, \textit{Tier Balancing}, and explore the possibility of providing a fairness notion that characterizes the dynamic nature of decision-distribution interplay through detailed causal modeling on both observed variables and latent causal factors.

In terms of the modeling choice for data dynamics, most closely related works model data dynamics using variants of Markov Decision Processes (MDPs) \citep{jabbari2017fairness,siddique2020learning,zhang2020fair,d2020fairness,wen2021algorithms,zimmer2021learning,ge2021towards}.
For example, \citet{zhang2020fair} consider the partially observed Markov decision process (POMDP) model, and conduct evolution and equilibrium analysis with respect to \textit{Demographic Parity} and \textit{Equal Opportunity} notions of fairness.
The dynamics are modeled through transition matrices on group-level qualification rates.
Compared to the modeling of transition matrices in MDPs, our model is more fine-grained and on the individual level, answering the call for ``richer and more complex modelings [of involved dynamics]'' in previous literature \citep{hu2018short}.\reBlue{C3}{Our updating dynamics can incorporate stochastic data generating processes. As an analogy, previous literature includes dynamic modelings with (variants of) MDPs.
While the transition matrix is deterministic, the MDPs can model stochastic processes.}{0in}\rightSide

Another closely related work is the one-step analysis on the impact of causal fairness notions on downstream utilities conducted by \citet{nilforoshan2022causal}.
They consider a detailed causal modeling on the college admission running example and analyze previously proposed (instantaneous) causal fairness notions, namely, \textit{Counterfactual Predictive Parity} \citep{coston2020counterfactual}, \textit{Counterfactual Equalized Odds} \citep{coston2020counterfactual}, and \textit{Conditional Principal Fairness} \citep{imai2020principal}.
Our work is different in several ways:
instead of utilizing a static graph, we focus on the decision-distribution interplay and explicitly capture both observed and latent variables along the temporal axis;
different from analyzing one-step downstream consequence in terms of utility, we formulate a long-term fairness goal and investigate the challenges and opportunities revealed by the notion.

\section{Additional Results, Technical Details, and Discussions}\label{appendix:discussion_results}
In this section, we provide additional results, technical details, and discussions of our work.
In \Cref{appendix:remark_interplay}, we provide additional discussions on our causal modeling of the decision-distribution interplay;
in \Cref{appendix:when_initially_attained}, we analyze the situation where \textit{Tier Balancing} is initially attained;
in \Cref{appendix:remark_FCM}, we discuss the role of exogenous terms and provide a remark on Fact \ref{fact:var_form};
in \Cref{appendix:derive_STIR}, we present the detailed derivation of \textit{Single-step Tier Imbalance Reduction} (STIR) term $\STIR$;
in \Cref{appendix:illustration_assumption}, we illustrate the connection between Assumption \ref{assumption:biased_weight_qualitative} and Assumption \ref{assumption:biased_weight_quantitative};
in \Cref{appendix:additional_exps}, we present additional experimental results;
in \Cref{appendix:limitations}, we discuss potential limitations of our work.

\color{RevisionText}
\subsection{Discussions on the Causal Modeling of Decision-Distribution Interplay}\label{appendix:remark_interplay}
In \Cref{appendix:say_more_about_involved_dynamics}, we provide additional details of the involved dynamics in the causal modeling of the decision-distribution interplay.
In \Cref{appendix:practical_scenario_interest}, we discuss the relation between the practical scenarios and the modeled dynamics.

\subsubsection{Additional Modeling Details of the Involved Dynamics}\label{appendix:say_more_about_involved_dynamics}

We use $X_{t,i}$'s to represent three different patterns (instead of the number of count) of variables with respect to how observed features are caused by the protected feature $A_t$ and the latent causal factor $H_t$.
There are three types of observed features:
(1) features that only have the latent causal factor $H_t$ as the case, e.g., $X_{t, 1}$,
(2) features that have both the latent causal factor $H_t$ and the protected feature $A_t$ as cause, e.g., $X_{t,2}$,
and (3) features that only have the protected feature $A_t$ as the cause, e.g,. $X_{t,3}$.
For conciseness, we omit features that are not relevant to the practical scenario of interest, i.e., variables that are not causally relevant to $(H_t, A_t)$.
One can replace $X_{t,i}$'s with the actual number of additional features together with the causal relations among them in specific practical scenarios.\rePurple{C3}{In light of your comment, we provide additional details on the causal relations among $H_t$, $A_t$, and $X_{t,i}$.}{0in}\rightSide

At every time step $T = t$, the decision-making strategy $D_t$ is trained on the joint distribution $(A_t, X_{t,i}, Y_t^{(\mathrm{obs})})$.
However, when making the decision, $D_t$ only takes $(A_t, X_{t,i})$ as input.
Since we are modeling causal relations in data generating processes, we only include a directed edge in the DAG if there is a causal relation between variables.
Therefore, the data generating process of $D_t$ does not involve an edge between $Y_t^{(\mathrm{obs})}$ and $D_t$.\rePurple{C2}{In light of your comment, we add the discussion and explain why there is no arrow between $Y_t^{(\mathrm{obs})}$ and $D_t$.}{-0.9in}\leftSide

\subsubsection{The Practical Scenarios of Interest}\label{appendix:practical_scenario_interest}
As we can see from previous literature (discussed in \Cref{appendix:dynamic_fairness_literature}),
\reOrange{C2}{We provide additional discussion on practical scenarios where our causal modeling of the decision-distribution interplay can be applied.}{0in}\rightSide
the modeling choices are closely related to the practical scenarios of interest, and therefore, can be very different in terms of modeling details of the involved dynamics in long-term and dynamic settings.

Our causal modeling of repetitive resource application and allocation keeps track of individual-level situation changes, and enables informative and principled analysis on the decision-distribution interplay in different practical scenarios.
For example, 
\reGreen{C4}{Following your suggestion, we provide additional discussion on settings our causal modeling of decision-interplay can be applied.}{0in}\leftSide
in credit application (e.g., \citealt{liu2018delayed}), the agents are clients and the latent causal factor (tier) can be individual's socio-economic status or creditworthiness;
in predictive policing (e.g., \citealt{ensign2018runaway}), the agents are neighborhoods and the latent tier can be neighborhood's safety ratings;
in the dual market pipeline (e.g., temporary labor markets followed by the permanent labor market considered in \citealt{hu2018short}) or the admission-followed-by-hiring pipeline (e.g., \citealt{kannan2019downstream}), the agents are applicants who subject to a sequence of decisions and the latent tier can be the relevant qualification for the school program and the job.

However, when the decision received by the individual is once in a lifetime (or at least very long time compared to the timescale of the decision-making), repeated application and allocation of resource may not be a suitable modeling choice.
For example, college admission decisions are made on a yearly basis but an individual does not repeatedly apply for college every single year \citep{mouzannar2019fair}.
In this case, \reBlue{C1}{Our fairness notion is very different from the one considered by \citet{mouzannar2019fair}. Both approaches have suitable practical scenarios to serve appropriate purposes.}{-1.5in}\rightSide
if we focus on the decision made by a specific college, it is more natural to study changes in the population in terms of the group-level qualification profiles \citep{mouzannar2019fair}.
As another example,
\reGreen{W2}{Thank you for providing the reference! In light of your comment, we incorporate the analysis on potential application/limitation of repeated resource allocation in the context of health care. We note that suitable modeling choices depend on the practical scenarios of interest.}{0in}\rightSide
in the context of health care (e.g., \citealt{mhasawade2021causal}), when the resource takes the form of the medical treatment for the purpose of improving health outcome, not all treatment requires regular doses and therefore, repeated allocation modeled on the individual-level may not be an optimal choice.
One can, for instance, resort to the modeling at the level of subgroups as an alternative \citep{mhasawade2021causal}.

Considering the difference in semantics of fairness in various practical scenarios, previous literature has pointed out that there is in general no one-size-fits-all solution for algorithmic fairness (e.g., \citealt{kearns2019ethical}).
By presenting a detailed causal modeling for the decision-distribution interplay, we do not intend to provide a general framework to encompass long-term fairness considerations in all practical scenarios.
Instead, we would like to demonstrate the opportunities and challenges and hope our work can inspire further research.
\color{black}

\subsection{When Tier Balancing is Initially Satisfied}\label{appendix:when_initially_attained}
In the paper we have presented possibility and impossibility results to achieve, or get closer to, the long-term fairness goal when \textit{Tier Balancing} is not initially satisfied.
It is natural to wonder what we should do if we find out that \textit{Tier Balancing} happen to be satisfied during fairness audit.
In fact, as we shall see in Proposition \ref{proposition:when_initially_attained}, if \textit{Tier Balancing} is satisfied as the initial condition, under the specified dynamics, one can use \textit{Demographic Parity} \citep{calders2009building} decision-making strategy to maintain the status of satisfying \textit{Tier Balancing}.
This indicates that when \textit{Tier Balancing} is satisfied (as a lucky initial condition, or as a result of $K$-step interventions), we have at least one way to maintain the fair state of affairs.
\reOrange{C1}{We reveal the connection between \textit{Tier Balancing} (TB) and \textit{Demographic Parity} (DP) notions of fairness by analyzing the scenario where TB is satisfied initially.}{0in}\leftSide
\begin{proposition}\label{proposition:when_initially_attained}
    When \textit{Tier Balancing} is initially satisfied, i.e., $H_t \indep A_t$, under Fact \ref{fact:var_form}, Assumption \ref{assumption:Hupdate}, and Assumption \ref{assumption:samegroup}, as well as the specified dynamics, the \textit{Demographic Parity} decision-making strategy, i.e., $D_t \indep A_t$, can ensure \textit{Tier Balancing} still holds true for the next time step, i.e., $H_{t + 1} \indep A_{t + 1}$.
\end{proposition}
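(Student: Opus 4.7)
My plan is to reduce the target to an $a$-invariance statement about the joint conditional law of the triple $(H_t, D_t, \Yori_t)$, and then verify it factor-by-factor. First, Assumption \ref{assumption:samegroup} lets me replace $A_{t+1}$ by $A_t$, so the goal becomes $H_{t+1} \indep A_t$. Second, by the multiplicative update rule (Equation \ref{equ:Hupdate}), $H_{t+1}$ is a deterministic function $\phi(H_t, D_t, \Yori_t)$; hence it suffices to prove that the conditional distribution $(H_t, D_t, \Yori_t)\mid A_t$ does not depend on $A_t$, because pushing any such $a$-invariant law through $\phi$ delivers $H_{t+1} \indep A_t = A_{t+1}$.

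For the $a$-invariance I would decompose, using the DAG structure of Figure \ref{fig:dynamic_model} (which makes $\Yori_t$ conditionally independent of $D_t$ and $A_t$ given $H_t$),
\begin{equation*}
P(H_t = h, D_t = d, \Yori_t = y \mid A_t = a) = P(\Yori_t = y \mid H_t = h)\, P(D_t = d \mid H_t = h, A_t = a)\, P(H_t = h \mid A_t = a).
\end{equation*}
By Equation \ref{equ:Y_definitions} the $\Yori$-factor equals $h^{y}(1-h)^{1-y}$ and is free of $a$; the $H_t$-factor collapses to $P(H_t = h)$ by the hypothesis $H_t \indep A_t$, again free of $a$. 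The only remaining piece is the middle factor $P(D_t = d \mid H_t = h, A_t = a)$.

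The main obstacle is handling this middle factor, since the Demographic Parity hypothesis $D_t \indep A_t$ controls only the marginal of $D_t$, equivalently the $P(H_t)$-weighted integral over $h$ of the middle factor, and does not by itself force the pointwise identity in $h$ that $a$-invariance requires. To close this gap I would lean on the causal structure of Figure \ref{fig:dynamic_model}: because $A_t$ and $H_t$ enter the time-$t$ subsystem as independent exogenous inputs and the decision mechanism has the form $D_t = g_t^D(A_t, X_{t,i}, \text{noise})$, one can realize DP by a policy whose conditional law given $H_t$ is already symmetric across $A_t$-strata, a realization consistent with the ``can ensure'' wording of the proposition. Under such a realization the middle factor is $a$-free, the decomposition above yields $(H_t, D_t, \Yori_t) \indep A_t$, and the pushforward through $\phi$ completes the proof.
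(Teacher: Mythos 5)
Your proof is correct in substance but takes a genuinely different, distribution-level route from the paper's. The paper argues entirely inside the functional causal model of Fact \ref{fact:var_form}: it reads each independence hypothesis as the statement that the corresponding structural function does not depend on $A_t$ (so that $H_t$, $\Yori_t$, and $D_t$ are all functions of $E_t$ alone), observes that Assumption \ref{assumption:Hupdate} makes $H_{t+1}$ a deterministic function of $(H_t, D_t, \Yori_t)$ and hence of $E_t$ alone, and concludes via Assumption \ref{assumption:samegroup} --- a three-line pointwise argument. Your factorization of $P(H_t = h, D_t = d, \Yori_t = y \mid A_t = a)$ reaches the same conclusion by pushing an $a$-invariant joint law through the update map, and it has the virtue of exposing a subtlety the paper's proof silently absorbs: marginal Demographic Parity $D_t \indep A_t$ does not pin down the middle factor $P(D_t = d \mid H_t = h, A_t = a)$, and at the purely distributional level the claim can fail for an adversarial DP policy (e.g., one that approves high-$h$ individuals in one group and low-$h$ individuals in the other at matched marginal rates, which would re-introduce dependence of $H_{t+1}$ on $A_t$). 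The paper sidesteps this by using Fact \ref{fact:var_form} to upgrade $D_t \indep A_t$ to ``$D_t$ is not a function of $A_t$,'' i.e., $g^D_t(0, e_t) = g^D_t(1, e_t)$ for all $e_t$ --- a counterfactual-type condition strictly stronger than marginal DP --- under which your middle factor is automatically $a$-free. Your resolution via the ``can ensure'' wording (choosing a DP realization whose conditional law given $H_t$ is $a$-symmetric) is a legitimate alternative, though your final step asserts rather than verifies that such a realization exists; that gap is easily discharged (any policy measurable in the non-$A_t$ inputs alone works), so the argument is complete modulo that one sentence.
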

\begin{proof}
    To begin with, since $H_t \indep A_t$, by Fact \ref{fact:var_form}, $H_t$ is not a function of $A_t$.
    As a direct result, $\Yori_t$ is also not a function of $A_t$ (since the distribution of $\Yori_t$ is fully determined by the value of $H_t = h_t$).
    Besides, since $D_t$ satisfies \textit{Demographic Parity}, $D_t \indep A_t$, and therefore by Fact \ref{fact:var_form}, $D_t$ is not a function of $A_t$.

    According to Assumption \ref{assumption:Hupdate}, under the specified dynamics, $H_{t + 1}$ is fully determined by $(H_t, D_t, \Yori_t)$, among which none of them is a function of $A_t$.
    Then, we have $H_{t + 1}$ is not a function of $A_t$.

    Recall that in the specified dynamics, the same group of agents repetitively apply for credit with the entire group unchanged.
    According to Assumption \ref{assumption:samegroup}, $A_{t + 1}$ is an identical copy of $A_t$.
    Therefore we have $H_{t + 1}$ is not a function of $A_{t + 1}$, i.e., $H_{t + 1} \indep A_{t + 1}$.
\end{proof}

\color{RevisionText}
\subsection{A Remark on Fact \ref{fact:var_form}}\label{appendix:remark_FCM}\rePurple{C5}{Following your suggestion, we add a remark on Fact \ref{fact:var_form}. In particular, we include the definition of the functional causal model (FCM), present the DAG with exogenous terms explicitly modeled, and provide details with respect to the application of FCM in Fact \ref{fact:var_form}.}{0in}\leftSide
Let us first present the definition of a functional causal model \citep{spirtes1993causation,pearl2009causality}:
\begin{definition}[\textbf{Functional Causal Model}]\label{appendix:def:FCM}
    We can represent a causal model with a tuple $(E, V, \Fbf)$ such that:
    \begin{enumerate}[label=(\arabic*)]
        \item $V$ is a set of observed variables involved in the system of interest;
        \item $E$ is a set of exogenous variables that we cannot directly observe but contains the background information representing all other causes of $V$ and jointly follows a distribution $P(E)$;
        \item $\Fbf$ is a set of functions (also known as structural equations) $\{ f_1, f_2, \ldots, f_n\}$ where each $f_i$ corresponds to one variable $V_i \in V$ and is a mapping $E \cup V \setminus \{V_i\} \rightarrow V_i$.
    \end{enumerate}
    The triplet $(E, V, \Fbf)$ is known as the functional causal model (FCM).
    We can also capture causal relations among variables via a directed acyclic graph (DAG), where nodes (vertices) represent variables and edges represent functional relations between variables and the corresponding direct causes (i.e., observed parents and unobserved exogenous terms).
\end{definition}

\begin{figure*}[t]
    \centering
    \captionsetup{format=hang}
    \includegraphics[width=.8\textwidth]{
        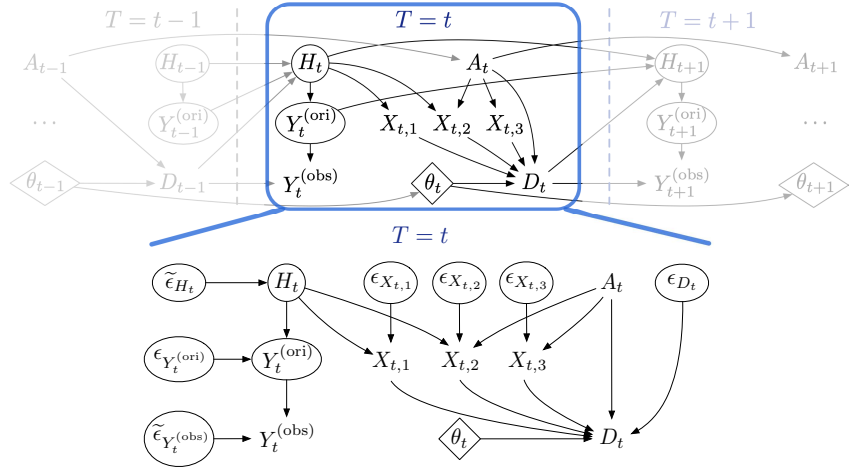}
    \caption{
        The causal modeling of the decision-distribution interplay.
        The circle indicates that the corresponding variable is unobserved.
        We use diamond to denote the underlying causal factor and explicitly indicate the (potential) non-stationary nature of the decision-making strategies across time.
    }
    \label{appendix:fig:dynamic_with_exogenous}
\end{figure*}

For the purpose of illustration, in Figure \ref{appendix:fig:dynamic_with_exogenous} we present the DAG (at time step $T = t$) with the exogenous terms $E_t$ explicitly modeled, where $E_t$ is the concatenation of individual exogenous terms:
\begin{equation}\label{appendix:equ:exogenous_concat}
     E_t = \Big( \widetilde{\epsilon}_{H_t}, \widetilde{\epsilon}_{Y_t^{(\mathrm{obs})}}, \epsilon_{Y_t^{(\mathrm{ori})}}, \epsilon_{X_{t, i}}, \epsilon_{D_t} \Big).
\end{equation}
We use the $~\widetilde{\cdot}~$ symbol on certain exogenous noise terms, e.g., $\widetilde{\epsilon}_{H_t}$ and $\widetilde{\epsilon}_{Y_t^{(\mathrm{obs})}}$, to denote the fact that the corresponding variables are affected by previous time step ($T = t - 1$), and such influence are encapsulated into exogenous terms from the standpoint of current time step ($T = t$).
For example, the influence from the randomness in $D_{t - 1}$ (when $T = t- 1$) on current $Y_t^{(\mathrm{obs})}$ is encapsulated into an exogenous term $\widetilde{\epsilon}_{Y_t^{(\mathrm{obs})}}$ when $T = t$.\reBlue{C5}{From the standpoint of current time step $T = t$, the historical influence can be modeled as exogenous randomness, e.g., $\widetilde{\epsilon}_{H_t}$ and $\widetilde{\epsilon}_{Y_t^{(\mathrm{obs})}}$ components of $E_t$.}{0in}\rightSide

As we can see from Figure \ref{appendix:fig:dynamic_with_exogenous}, $(A_t, E_t)$ are root causes of all other variables $(H_t, X_{t, i}, Y_t^{(\mathrm{ori})}, Y_t^{(\mathrm{obs})}, D_t)$.
Applying Definition \ref{appendix:def:FCM}, we can utilize the functional causal model and represent each variable with a function (the structural equation) of its direct causes (including observed parents and unobserved exogenous terms).
Then, we can iteratively replace variables with its corresponding structural equation and eventually represent variables in $(H_t, X_{t, i}, Y_t^{(\mathrm{ori})}, Y_t^{(\mathrm{obs})}, D_t)$ with functions of \textit{only} root causes $(A_t, E_t)$, as summarized in Fact \ref{fact:var_form}.

The noise terms $E_t$ are the unobserved exogenous terms that signify the unique characteristics of an individual.
The utilization of such uniqueness of individual can be found in the estimation of counterfactual causal effect by making use of the posterior distribution of exogenous noise terms conditioning on the observed features, e.g., \citet{kusner2017counterfactual}.
\color{black}

\subsection{Detailed Derivation of $\STIR$ (Section \ref{sec:introduce_STIR})}\label{appendix:derive_STIR}
In this section, we provide the derivation detail of the \textit{Single-step Tier Imbalance Reduction} (STIR) term:
\begin{equation}\label{appendix:equ:def_STIR}
    \begin{split}
        \STIR
        \coloneqq
        \Ebb \left[~\lvert
            f_{t + 1}(0, E_{t + 1}) - f_{t + 1}(1, E_{t + 1})
        \rvert~\right]
        -
        \Ebb \left[~\lvert
            f_{t}(0, E_{t}) - f_{t}(1, E_{t})
        \rvert~\right]
    \end{split}
\end{equation}
\textcolor{RevisionText}{
Firstly, in \Cref{appendix:calculate_qt}, we characterize the conditional joint density of $\big(f_T(0, E_T), f_T(1, E_T)\big)$.
Then, in \Cref{appendix:calculate_individual}, we focus on the situation changes of each individual from $T = t$ to $T = t + 1$ induced by the specified dynamics.
Finally, in \Cref{appendix:calculate_aggregate}, we can calculate the expectation in Equation \ref{appendix:equ:def_STIR} by aggregating situation changes for each individual from $T = t$ to $T = t + 1$.
}

\subsubsection{Characterizing Conditional Joint Density}\label{appendix:calculate_qt}
We can view $f_t(0, E_t)$ and $f_t(1, E_t)$ as two dependent random variables.
Given combinations of $D_t$ and $Y_t$, we can define their conditional joint probability density when $E_t = \epsilon$ as $q_t\big(f_t(0, \epsilon), f_t(1, \epsilon) \mid d, d', y, y' \big)$ and calculate it as following:
\begin{equation}\label{appendix:equ:joint_H_def}
    \begin{split}
        &~~~~~~q_t\big(f_t(0, \epsilon), f_t(1, \epsilon) \mid d, d', y, y' \big) \\
        &\coloneqq q_t\big(f_t(0, \epsilon), f_t(1, \epsilon)
        \mid g_t^D(0, \epsilon) = d,
        g_t^D(1, \epsilon) = d',
        g_t^{\Yori}(0, \epsilon) = y,
        g_t^{\Yori}(1, \epsilon) = y' \big) \\
        &= \int_{\xi \in \Epsilon}
            \mathbbm{1}\{ f_t(0, \xi) = f_t(0, \epsilon), f_t(1, \xi) = f_t(1, \epsilon) \} \\
        &~~~~~~~~~~~
        \cdot p_t \big( E_t = \xi \mid g_t^D(0, \epsilon) = d,
            g_t^D(1, \epsilon) = d',
            g_t^{\Yori}(0, \epsilon) = y,
            g_t^{\Yori}(1, \epsilon) = y' \big)
        d \xi,
    \end{split}
\end{equation}
where $\mathbbm{1}\{\cdot\}$ is the indicator function, and the subscript $t$ of the conditional probability densities (e.g., $q_t(\cdot)$ and $p_t(\cdot)$) indicates that they (might) change over time with different time step $T = t$.
The functional form of $f_t$ can be convoluted and it is not necessarily the case that $f_t(0, \cdot)$ and $f_t(1, \cdot)$ are injective mappings $\Epsilon \rightarrow (0, 1]$.
Therefore, for the purpose of generality, in Equation \ref{appendix:equ:joint_H_def} we explicitly introduce the identity function $\mathbbm{1}\{ f_t(0, \xi) = f_t(0, \epsilon), f_t(1, \xi) = f_t(1, \epsilon) \}$ when characterizing the conditional joint density $q_t$.

\subsubsection{Capturing Situation Changes for An Individual}\label{appendix:calculate_individual}
For a specific individual $(j)$, given the value of individual's exogenous terms $E_t^{(j)} = e_t^{(j)}$, let us denote the difference between $f_t(0, e_t^{(j)})$ and $f_t(1, e_t^{(j)})$ as $\varphi_t(e_t^{(j)}) \coloneqq f_t(0, e_t^{(j)}) - f_t(1, e_t^{(j)})$, and the sum of $f_t(0, e_t^{(j)})$ and $f_t(1, e_t^{(j)})$ as $\eta_t(e_t^{(j)}) \coloneqq f_t(0, e_t^{(j)}) + f_t(1, e_t^{(j)})$.
We introduce $\varphi_t(\cdot)$ and $\eta_t(\cdot)$ for the conciseness of notation, and we can always map $\big( \varphi_t(\cdot), \eta_t(\cdot) \big)$ back to $\big( f_t(0, \cdot), f_t(1, \cdot) \big)$ via a coordinate transformation:
\begin{equation}\label{appendix:equ:phi_eta_map_back}
    \begin{bmatrix}
        f_{t}(0, e_t^{(j)}) \\
        f_{t}(1, e_t^{(j)})
    \end{bmatrix} =
    \frac{\sqrt{2}}{2}
    \begin{bmatrix}
        \cos \frac{\pi}{4} & \sin \frac{\pi}{4} \\
        -\sin \frac{\pi}{4} & \cos \frac{\pi}{4}
    \end{bmatrix}
    \begin{bmatrix}
        \varphi_t(e_t^{(j)}) \\
        \eta_t(e_t^{(j)})
    \end{bmatrix}.
\end{equation}

Let us consider the connection between $\varphi_{t + 1}(e_{t + 1}^{(j)}) = f_{t + 1}(0, e_{t + 1}^{(j)}) - f_{t + 1}(1, e_{t + 1}^{(j)})$ in the time step $T = t + 1$ and $\varphi_t(e_t^{(j)}) = f_t(0, e_t^{(j)}) - f_t(1, e_t^{(j)})$ in the time step $T = t$.
We use different time step subscripts for the exogenous terms, e.g., $e_{t + 1}^{(j)}$ in $\varphi_{t + 1}(e_{t + 1}^{(j)})$ and $e_t^{(j)}$ in $\varphi_t(e_t^{(j)})$, since it is not necessarily the case that $e_{t + 1}^{(j)} = e_t^{(j)}$, even if we are focusing on the same individual from $T = t$ to $T = t + 1$.
Nevertheless, for the given functional forms of $f_t, g_t^D, g_t^{\Yori}$, the combination of $(d^{(j)}, d'^{(j)}, y^{(j)}, y'^{(j)})$, the value of exogenous term $e_t^{(j)}$ in the initial situation of the current one-step analysis (when $T = t$), and the hyperparameters $(\alpha_D, \alpha_Y)$, we can uniquely derive the value of $\varphi_{t + 1}(e_{t + 1}^{(j)}) = f_{t + 1}(0, e_{t + 1}^{(j)}) - f_{t + 1}(1, e_{t + 1}^{(j)})$, and list all possible instantiations of $\varphi_{t + 1}(e_{t + 1}^{(j)})$ in Table \ref{table:delta_next_Dmajor} (if $\alpha_D > \alpha_Y$), Table \ref{table:delta_next_Ymajor} (if $\alpha_D < \alpha_Y$), and Table \ref{table:delta_next_nomajor} (if $\alpha_D = \alpha_Y$).

Let us denote such mapping from $\varphi_{t}(e_{t}^{(j)})$ to $\varphi_{t + 1}(e_{t + 1}^{(j)})$ with the function $G_t$.
For the purpose of simplifying notations, we can omit the superscript $(j)$ if without ambiguity, since the value of exogenous terms $e_T$ signify the unique characteristics of an individual:
\begin{equation}\label{appendix:equ:mapping_across_t}
    \begin{split}
        \varphi_{t + 1}(e_{t + 1})
        \coloneqq f_{t + 1}(0, e_{t + 1}) - f_{t + 1}(1, e_{t + 1})
        = G_t(f_t, g_t^D, g_t^{\Yori}; d, d', y, y', e_t, \alpha_D, \alpha_Y).
    \end{split}
\end{equation}
Notice that the value of the function $G_t$ \textit{only} relies on the information available at time step $T = t$.

\subsubsection{Aggregating Individual-Level Situation Changes}\label{appendix:calculate_aggregate}
\color{RevisionText}
We can calculate \textit{Single-step Tier Imbalance Reduction}, i.e., the term $\STIR$, as following:
\reBlue{Q11}{Following your suggestion, we update the expression of the STIR term $\STIR$. We also update the detailed derivation in the appendix.}{0in}\rightSide
\begin{equation}\label{appendix:equ:calculate_STIR}
    \begin{split}
        & \STIR
        \coloneqq
        \Ebb \left[~\lvert
            f_{t + 1}(0, E_{t + 1}) - f_{t + 1}(1, E_{t + 1})
        \rvert~\right]
        -
        \Ebb \left[~\lvert
            f_{t}(0, E_{t}) - f_{t}(1, E_{t})
        \rvert~\right] \\
        & \overset{(i)}{=}
        \Ebb \left[~
            \lvert \varphi_{t + 1}(E_{t + 1}) \rvert
            - \lvert \varphi_{t}(E_{t}) \rvert
        ~\right] \\
        & \overset{(ii)}{=}
        \Ebb \Big\{
            \Ebb \Big[~
                \lvert \varphi_{t + 1}(\xi) \rvert
                - \lvert \varphi_{t}(\epsilon) \rvert
            ~~ \Big\vert
                \underbrace{E_{t + 1} = \xi,}_{
                \text{The value of exogenous terms of an individual take value $\xi$ at $T = t + 1$}.} \\
        & ~~~~~~~~~~~~~~~~~~~~~~~~~~~~~~~~~~~~~~~~~~~~~~~~~~~~~
                \underbrace{E_{t} = \epsilon,}_{
                \text{The value of exogenous terms of an individual take value $\epsilon$ at $T = t$.}} \\
        & ~~~~~~~~~~~~~~~~~~~~~~~~~~~~~~~~~~~~~~~~~~~~~~~~
                \underbrace{\varphi_{t + 1}(\xi) = G_t(f_t, g_t^D, g_t^{\Yori}; d, d', y, y', \epsilon, \alpha_D, \alpha_Y)}_{\substack{
                    \text{This is to make sure that we are keeping track of the same individual in the sense that,}
                    \\
                    \text{$\varphi_{t + 1}(\xi)$ when $T = t + 1$ is indeed a valid instantiation from $\varphi_{t}(\epsilon)$ when $T = t$.}
                    \\
                    \text{If $\varphi_{t + 1}(\cdot)$ is not a valid instantiation from $\varphi_t(\cdot)$, the contribution to the expectation is $0$.}
                }}
            ~\Big] \Big\} \\
        & \overset{(iii)}{=} \sum_{d, d', y, y' \in \{0, 1\}}
        P_t (d, d', y, y')
        \cdot
        \int_{\epsilon \in \Epsilon}
        \int_{\xi \in \Epsilon} q_t \big(f_t(0, \epsilon), f_t(1, \epsilon) \mid d, d', y, y' \big) \\
        & ~~~~~~~~
        \cdot \big( \lvert \varphi_{t + 1}(\xi) \rvert - \lvert \varphi_t(\epsilon) \rvert \big)
        \cdot \mathbbm{1}\{ \varphi_{t + 1}(\xi) = G_t(f_t, g_t^D, g_t^{\Yori}; d, d', y, y', \epsilon, \alpha_D, \alpha_Y) \}
        d \xi d \epsilon,
    \end{split}
\end{equation}
where the equality (i) is based on the definition of $\varphi_t(\cdot)$ and $\varphi_{t + 1}(\cdot)$;
the equality (ii) is derived from the Law of Iterated Expectation, keeping track of individual-level situation changes in the inner conditional expectation;
the equality (iii) is the aggregation of individual-level situation changes by plugging in the conditional joint density $q_t$ calculated in \Cref{appendix:calculate_qt}, joint probability $P_t$, and the individual-level situation changes discussed in \Cref{appendix:calculate_individual}.

\begin{figure}[t]
    \centering
    \captionsetup{format=hang}
    \includegraphics[width=1.\textwidth]{
        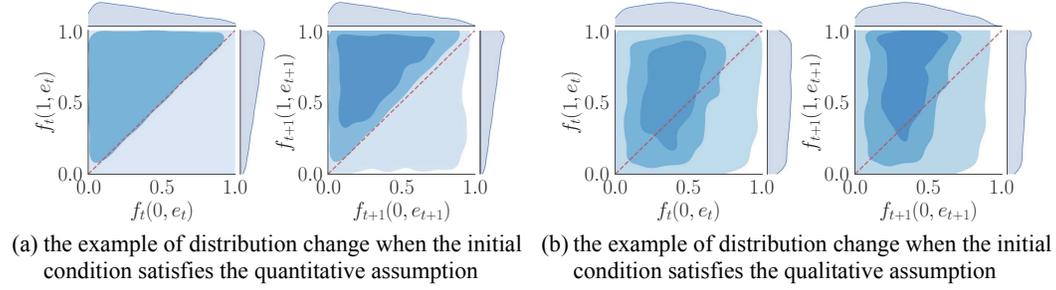}
    \caption{An illustration of the connection between qualitative and quantitative assumptions in terms of the one-step update of the conditional joint distribution $q_T\big(f_T(0, e_T), f_T(1, e_T) \mid d, d', y, y' \big)$ (when $y < y'$, and from $T = t$ to $T = t + 1$).}
    \label{fig:sim_assumption_illustration}
\end{figure}

The indicator function $\mathbbm{1}\{ \varphi_{t + 1}(\xi) = G_t(f_t, g_t^D, g_t^{\Yori}; d, d', y, y', \epsilon, \alpha_D, \alpha_Y) \}$ makes sure that we are keeping track of the same individual
\reBlue{Q12}{We use the indicator function to explicitly state that we keep track of the same individual before and after one-step update.}{0in}\leftSide
(whose exogenous noise term equals to $\epsilon$ at time $t$) before and after the one-step dynamic, even if his/her exogenous noise term equals to $\xi$ at time $t + 1$, and that $\epsilon$ might not be equal to $\xi$.

\color{black}  

The fact that we are keeping track of the same individual also justifies the practice of only integrating over (conditional) densities with subscript $t$, e.g., $q_t(\cdot)$ and $P_t(\cdot)$, instead of both $t$ and $t + 1$.
To see this from a different angle, keeping track of situation changes of each individual (when comparing $\varphi_{t + 1}(e_{t + 1})$ with $\varphi_t(e_t)$) also alleviates us from the trouble of estimating (conditional) densities that involve future information.
At the time step $t$, we do not know the densities
$q_{t + 1}\big(f_{t + 1}(0, E_{t + 1}), f_{t + 1}(1, E_{t + 1}) \mid d, d', y, y' \big)$
and $P_{t + 1}(d, d', y, y')$
since they involve future information $D_{t + 1}$ and $Y_{t + 1}$ at the standpoint of time step $T = t$.

\subsection{Further Illustration on Assumption \ref{assumption:biased_weight_qualitative} and Assumption \ref{assumption:biased_weight_quantitative}}\label{appendix:illustration_assumption}
In this subsection, we provide further illustrations of the connection between Assumption \ref{assumption:biased_weight_qualitative} and Assumption \ref{assumption:biased_weight_quantitative}.
In Figure \ref{fig:sim_assumption_illustration} we present the one-step update of the conditional joint distribution $q_T\big(f_T(0, e_T), f_T(1, e_T) \mid d, d', y, y' \big)$ from $T = t$ to $T = t + 1$ (we present the case when $y < y'$ as an example).
For panel (a) and (b), the joint distribution of $( f_T(0, E_T), f_T(1, E_T))$ is plotted before and after one-step dynamics, with quantitative and qualitative assumptions respectively.
The distributions are color-coded, the deeper the color, the larger the value of the joint density.

Compared to the qualitative assumption (Assumption \ref{assumption:biased_weight_qualitative}, illustrated in Figure \ref{fig:sim_assumption_illustration}b), the quantitative assumption (Assumption \ref{assumption:biased_weight_quantitative}, illustrated in Figure \ref{fig:sim_assumption_illustration}a) is just a special case, with quantitative characteristics built-in for technical purposes (we will make use of Assumption \ref{assumption:biased_weight_quantitative} in the proofs for Theorem \ref{theorem:perfect_one_step} and Theorem \ref{theorem:CF_one_step} in \Cref{appendix:proofs}).
From the illustrations in Figure \ref{fig:sim_assumption_illustration}, we can also see that the behaviors of the one-step update of conditional joint density under qualitative and quantitative assumptions are similar, with deeper color patterns occurring on the upper-left corner, indicating similar changes in the corresponding conditional joint densities $q_T\big(f_T(0, e_T), f_T(1, e_T) \mid d, d', y, y' \big)$ (when $y < y'$, and from $T = t$ to $T = t + 1$).

\color{RevisionText}
\subsection{Additional Experimental Results}\label{appendix:additional_exps}
In this section, we present additional experimental results 
\reOrange{C6}{Following your suggestion, we add additional empirical results when certain assumptions are violated.}{0in}\rightSide
on the preprocessed FICO credit score data set \citep{board2007report,hardt2016equality}.
Similar to the experiment summarized in Figure \ref{fig:long_term_creditscore_CF}, we convert the cumulative distribution function (CDF) of group-wise TransRisk scores into group-wise density distributions of the credit score, and use them as the initial tier distributions for different groups.

We consider utility-maximizing decision-making strategies, i.e., the decision-making policy is accuracy oriented and there is no explicit fairness consideration.
In Figure \ref{fig:long_term_creditscore_accuracy_only} we present the summary of a 20-step interplay between decision with accuracy-oriented predictors and the underlying data generating process on the credit score data set.
The accuracy-oriented decision-making strategy is retrained after each one-step data dynamics.
From Figure \ref{fig:long_term_creditscore_accuracy_only}(a), there is no obvious evidence that the gap between step-by-step tracks of tiers for different groups is decreasing over time.
This observation aligns with our theoretical analysis (Theorem \ref{theorem:perfect_one_step}) and simulation results (Figure \ref{fig:long_term_sim_perfect}) for perfect predictors.

\begin{figure}[t]
    \centering
    \captionsetup{format=hang}
    \includegraphics[width=1.\textwidth]{
        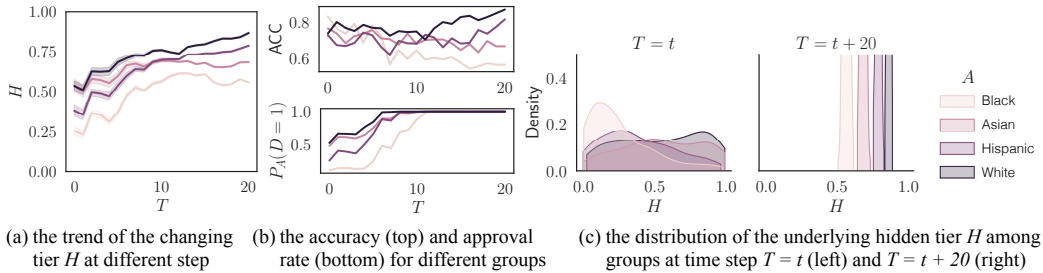}
    \caption{Illustration of the interplay between decision with accuracy-oriented predictors and the data dynamics (20 steps) on the credit score data set.
    Panel (a) and (b) present the step-by-step tracks of update in tier, accuracy, and approval rates for different groups; panel (c) presents group-conditioned distributions of tier before (left) and after (right) 20 steps of interventions.
    The legend is shared across panel (a), (b), and (c).}
    \label{fig:long_term_creditscore_accuracy_only}
\end{figure}

\color{black}
\subsection{Potential Limitations of Our Work}\label{appendix:limitations}
In this subsection, we discuss potential limitations of our work.

\subsubsection{Specified Dynamics vs. Causal Discovery}
In this paper, we present a detailed causal modeling of decision-distribution interplay on DAG (\Cref{sec:causal_DAG}) and formulate the dynamic fairness notion, \textit{Tier Balancing}, that captures the long-term fairness goal over the underlying causal factor.

The research of causal discovery, where the goal is to discover the causal relations among variables \citep{spirtes1993causation,shimizu2006linear,zhang2009identifiability,zhang2011kernel}, is a highly relevant area but is out of the scope of our paper.
Our \textit{Tier Balancing} notion of dynamic fairness, as well as our analyzing framework, does not rely on a causal model derived from causal discovery.
As we discussed in the comparison with previous literature in dynamic fairness studies (\Cref{appendix:discussion_related_works}), our causal model is richer and more complex, which provides the potential of a more principled reasoning of the essential decision-distribution interplay in the pursuit of long-term fairness.
We acknowledge the fact that it is nice to have the ability to discover the underlying causal model of the involved dynamics, which would provide further refinements of our dynamic modeling based on the specific practical scenario of interest.
Causal discovery can act as the icing on the cake, but not a necessary component, of our analysis.

\subsubsection{The Number and Dimension of Latent Causal Factors}
In the causal modeling of decision-distribution interplay we present in the paper, we consider one latent causal factor that carries on the influence of current decision to future distributions.
Recent developments in the identification of causal structures that involve (more than one) latent factors \citep{xie2020generalized,adams2021identification,kivva2021learning,xie2022identification} provide not only a theoretical justification, but also an indication of the potential, for our effort in exploring long-term fairness inquires over latent causal factors.
We believe that our detailed causal modeling of decision-distribution interplay (on both observed variables and latent causal factors) and our formulation of \textit{Tier Balancing} notion of long-term fairness act as an important first step.

\section{Proof of Results}\label{appendix:proofs}
In this section, we provide proofs for results presented in the paper.
For better readability, we provide an additional \textit{Proof (sketch)} before proving Theorem \ref{theorem:one_step_attainability} (proof in \Cref{appendix:proof_thm_one_step}), Theorem \ref{theorem:perfect_one_step} (proof in \Cref{appendix:proof_thm_perfect}), and Theorem \ref{theorem:CF_one_step} (proof in \Cref{appendix:proof_thm_CF}), respectively.


\subsection{Proof for Proposition \ref{proposition:Y_obs_bernoulli}}\label{appendix:proof_prop_Y_obs}
\begin{prop}
    At time step $T = t$, for any $H_t = h_t \in (0, 1]$, under the specified dynamics, among the population where ground truth is actually observable, i.e., $\Yobs_t$ is not undefined, we have:
    \begin{equation*}
        \Yobs_t \sim \mathrm{Bernoulli}(h_t).
    \end{equation*}
\end{prop}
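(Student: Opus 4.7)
The plan is to combine the definitional identity $\Yobs_t = \Yori_t$ (on the event $D_{t-1} = 1$, i.e., when $\Yobs_t$ is defined) with a conditional independence read off from the causal DAG. Specifically, since Equation \ref{equ:Y_definitions} already gives $\Yori_t \mid H_t = h_t \sim \mathrm{Bernoulli}(h_t)$, the only substantive step is to verify that conditioning on the event $D_{t-1} = 1$ does not distort the law of $\Yori_t$ given $H_t = h_t$.

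First, I would write out the law we want: for any $y \in \{0,1\}$,
\begin{equation*}
    P(\Yobs_t = y \mid H_t = h_t, D_{t - 1} = 1)
    = P(\Yori_t = y \mid H_t = h_t, D_{t - 1} = 1),
\end{equation*}
which follows immediately from $\Yobs_t = \Yori_t$ on $\{D_{t-1} = 1\}$. Then I would argue that $\Yori_t \indep D_{t - 1} \mid H_t$ using $d$-separation on the DAG in Figure \ref{fig:dynamic_model}: the only directed paths from $D_{t-1}$ into $\Yori_t$ pass through $H_t$ (since $D_{t-1}$ enters the next time slice only through the update $H_{t-1} \to H_t$, and $\Yori_t$ has $H_t$ as its sole parent in the current slice aside from $A_t$, which has no arrow from $D_{t-1}$), so conditioning on $H_t$ blocks every such path. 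Hence
\begin{equation*}
    P(\Yori_t = y \mid H_t = h_t, D_{t - 1} = 1) = P(\Yori_t = y \mid H_t = h_t),
\end{equation*}
and the right-hand side equals $h_t^{y}(1 - h_t)^{1 - y}$ by the Bernoulli assumption in Equation \ref{equ:Y_definitions}.

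The only potentially delicate point is the $d$-separation claim: one must be careful that no non-causal path (e.g., through $A_t$, or through an exogenous noise term shared across time) sneaks around the conditioning set $\{H_t\}$. I would resolve this by appealing to the stationarity/modeling assumptions in \Cref{sec:causal_DAG} (in particular, that the generating process for $\Yori_t$ from $(H_t, A_t)$ is stationary with its own fresh exogenous term, and that $A_t$ is a root with no incoming edge from $D_{t-1}$ by Assumption \ref{assumption:samegroup}), so $H_t$ indeed blocks every path from $D_{t-1}$ to $\Yori_t$. Combining the three equalities then yields $\Yobs_t \mid H_t = h_t, D_{t-1} = 1 \sim \mathrm{Bernoulli}(h_t)$, which is exactly the stated claim on the subpopulation where $\Yobs_t$ is not undefined. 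No heavy calculation is required; the whole argument is conceptual and hinges on the observation that the masking indicator $D_{t-1}$ is conditionally independent of the masked variable $\Yori_t$ given the shared common cause $H_t$.
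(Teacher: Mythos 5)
Your proposal is correct and follows essentially the same route as the paper's proof: both hinge on the d-separation fact $\Yori_t \indep D_{t-1} \mid H_t$ together with the definitional identity $\Yobs_t = \Yori_t$ on the event $D_{t-1} = 1$. The paper merely makes the cancellation explicit by writing $P(D_{t-1}=1 \mid H_t = h_t) = d(h_t)$ and showing the ratio $\frac{d(h_t)h_t}{d(h_t)h_t + d(h_t)(1-h_t)} = h_t$ (plus the degenerate cases $d(h_t) \in \{0,1\}$), which is exactly the computation your direct conditioning argument packages into one step.
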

\begin{proof}
    To begin with, according the d-separation relation among $D_{t - 1}$, $H_t$, and $\Yori_t$ on Figure \ref{fig:dynamic_model}, we notice that $\Yori_t \indep D_{t - 1} \mid H_t$.
    Therefore we have:
    \begin{equation*}
        \begin{split}
            &\Yori_t \sim \mathrm{Bernoulli}(h_t), \\
            &P(\Yori_t = 1 \mid H_t = h_t) = h_t, \\
            &P(D_{t - 1} = 1 \mid H_t = h_t) = d(h_t),
        \end{split}
    \end{equation*}
    where $d(\cdot)$ is a function $d:(0, 1] \rightarrow [0, 1]$.

    Notice that there is no claim that $D_{t - 1}$ can be uniquely determined by a function of only $h_t$.
    We only represent the conditional probability mass $P(D_{t - 1} = 1 \mid H_t = h_t)$ with a function of $h_t$ without specifying the exact functional form.
    In fact, as we shall see in the later part of this proof, the exact functional form of $d(\cdot)$ does not affect the validity of the result.

    Since $\Yobs_t$ is in fact $\Yori_t$ masked by $D_{t - 1}$, i.e., $\Yobs_t$ is observable only when $D_{t - 1} = 1$ and is undefined when $D_{t - 1} = 0$, we have:
    \begin{equation*}
        P(D_{t - 1} = 0, \Yobs_t = 0 \mid H_t = h_t) = P(D_{t - 1} = 0, \Yobs_t = 1 \mid H_t = h_t) = 0.
    \end{equation*}
    This indicates that among the population where $\Yobs_t$ is not undefined (the population itself may change at different time step), $\forall y \in \{0, 1\}$:
    \begin{equation*}
        \begin{split}
            & ~~~~~ P(\Yobs_t = y \mid H_t = h_t) \\
            & = P(D_{t - 1} = 0, \Yobs_t = y \mid H_t = h_t) + P(D_{t - 1} = 1, \Yobs_t = y \mid H_t = h_t) \\
            & = P(D_{t - 1} = 1, \Yobs_t = y \mid H_t = h_t) \\
            & = P(D_{t - 1} = 1, \Yori_t = y \mid H_t = h_t).
        \end{split}
    \end{equation*}
    Then, when $d(h_t) \in (0, 1)$, we can calculate the following probability:
    \begin{equation*}
        \begin{split}
            & P(\Yobs_t = 1 \mid H_t = h_t) \\
            = & \frac{P(\Yobs_t = 1 \mid H_t = h_t)}{
                P(\Yobs_t = 1 \mid H_t = h_t)
                +
                P(\Yobs_t = 0 \mid H_t = h_t)} \\
            = & \frac{P(D_{t - 1} = 1, \Yori_t = 1 \mid H_t = h_t)}{
            P(D_{t - 1} = 1, \Yori_t = 1 \mid H_t = h_t)
            +
            P(D_{t - 1} = 1, \Yori_t = 0 \mid H_t = h_t)} \\
            = & \frac{d(h_t) h_t}{
            d(h_t) h_t + d(h_t) (1 - h_t)} \\
            = & h_t \\
            = & P(\Yori_t = 1 \mid H_t = h_t);
        \end{split}
    \end{equation*}
    when $d(h_t) = 1$, this indicates that if $H_t = h_t$, we know for sure that this individual received a positive decision in the previous time step (when $T = t - 1$), and we have $\Yori_t = \Yobs_t$ by definition;
    when $d(h_t) = 0$, this indicates that if $H_t = h_t$, we know for sure that this individual did not receive a positive decision in the previous time step (when $T = t - 1$), and in this case $\Yobs_t$ is undefined.

    Therefore, among the population where ground truth is actually observable, i.e., $\Yobs_t$ is not undefined, we have:
    \begin{equation*}
        \Yobs_t \sim \mathrm{Bernoulli}(h_t).
    \end{equation*}
\end{proof}

\subsection{Proof for Theorem \ref{theorem:one_step_attainability}}\label{appendix:proof_thm_one_step}
\begin{thm}
    Let us consider the general situation where both $D_t$ and $\Yori_t$ are dependent with $A_t$, i.e., $D_t \nindep A_t, \Yori_t \nindep A_t$.
    Then under Fact \ref{fact:var_form}, Assumption \ref{assumption:Hupdate}, and Assumption \ref{assumption:samegroup}, as well as the specified dynamics, when $H_t \nindep A_t$, only if at least one of the following conditions holds true for all $e_t \in \Epsilon$ can we possibly attain $H_{t + 1} \indep A_{t + 1}$:
    \begin{enumerate}[label=(\arabic*)]
        \item The ratio $\frac{f_t(0, e_t)}{f_t(1, e_t)}$ has a specific domain of value:
            \begin{equation*}
                \frac{f_t(0, e_t)}{f_t(1, e_t)}
                = \frac{1 \pm \alpha_D \pm \alpha_Y}{1 \pm \alpha_D \pm \alpha_Y};
            \end{equation*}
        \item Positive (negative) labels only appear in the advantaged (disadvantaged) group, and the decision for everyone is positive (if $\alpha_D > \alpha_Y$):
            \begin{equation*}
                \begin{cases}
                    & f_t(0, e_t) \in [\frac{1}{1 + \alpha_D - \alpha_Y}, 1], \\
                    & f_t(1, e_t) \in [\frac{1}{1 + \alpha_D + \alpha_Y}, 1], \\
                    & g^{\Yori}_t(0, e_t) = 0, g^{\Yori}_t(1, e_t) = 1, \\
                    & g^D_t(0, e_t) = g^D_t(1, e_t) = 1;
                \end{cases}
            \end{equation*}
        \item Negative (positive) labels only appear in the advantaged (disadvantaged) group, and the decision for everyone is positive (if $\alpha_D > \alpha_Y$):
            \begin{equation*}
                \begin{cases}
                    & f_t(0, e_t) \in [\frac{1}{1 + \alpha_D + \alpha_Y}, 1], \\
                    & f_t(1, e_t) \in [\frac{1}{1 + \alpha_D - \alpha_Y}, 1], \\
                    & g^{\Yori}_t(0, e_t) = 1, g^{\Yori}_t(1, e_t) = 0, \\
                    & g^D_t(0, e_t) = g^D_t(1, e_t) = 1;
                \end{cases}
            \end{equation*}
        \item Everyone has a positive label, but the positive decision is exclusive to the advantaged group (if $\alpha_D < \alpha_Y$):
            \begin{equation*}
                \begin{cases}
                    & f_t(0, e_t) \in [\frac{1}{1 - \alpha_D + \alpha_Y}, 1], \\
                    & f_t(1, e_t) \in [\frac{1}{1 + \alpha_D + \alpha_Y}, 1], \\
                    & g^{\Yori}_t(0, e_t) = g^{\Yori}_t(1, e_t) = 1, \\
                    & g^D_t(0, e_t) = 0, g^D_t(1, e_t) = 1;
                \end{cases}
            \end{equation*}
        \item Everyone has a positive label, but the positive decision is exclusive to the disadvantaged group (if $\alpha_D < \alpha_Y$):
            \begin{equation*}
                \begin{cases}
                    & f_t(0, e_t) \in [\frac{1}{1 + \alpha_D + \alpha_Y}, 1], \\
                    & f_t(1, e_t) \in [\frac{1}{1 - \alpha_D + \alpha_Y}, 1], \\
                    & g^{\Yori}_t(0, e_t) = g^{\Yori}_t(1, e_t) = 1, \\
                    & g^D_t(0, e_t) = 1, g^D_t(1, e_t) = 0.
                \end{cases}
            \end{equation*}
    \end{enumerate}
\end{thm}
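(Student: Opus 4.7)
\textbf{Proof proposal for Theorem \ref{theorem:one_step_attainability}.}
The plan is to reduce the distributional independence statement to a pointwise functional identity, and then perform a case analysis driven by the $\min\{1,\cdot\}$ cap in the multiplicative update. By Assumption \ref{assumption:samegroup} we have $A_{t+1}=A_t$, so $H_{t+1} \indep A_{t+1}$ is equivalent to $H_{t+1} \indep A_t$. Applying Fact \ref{fact:var_form} and Assumption \ref{assumption:Hupdate}, I would write
\begin{equation*}
    H_{t+1} \;=\; \widetilde{f}(A_t, E_t) \;\coloneqq\; \min\!\Big\{1,\; f_t(A_t, E_t)\cdot\big[1 + \alpha_D(2 g_t^{D}(A_t, E_t) - 1) + \alpha_Y(2 g_t^{\Yori}(A_t, E_t) - 1)\big]\Big\},
\end{equation*}
and argue (as in the proof of Proposition \ref{proposition:when_initially_attained}) that since $E_t$ is exogenous to $A_t$, independence requires $\widetilde{f}(0, e_t)=\widetilde{f}(1, e_t)$ for (almost) every $e_t\in\Epsilon$.

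Writing $M_a(e_t)\coloneqq 1 + \alpha_D(2g_t^{D}(a,e_t)-1) + \alpha_Y(2g_t^{\Yori}(a,e_t)-1)$, each $M_a(e_t)$ takes values in the finite set $\{1\pm\alpha_D\pm\alpha_Y\}$. I would then split on whether the cap is active at $e_t$. In the \emph{uncapped regime} ($f_t(0,e_t)M_0(e_t)<1$ and $f_t(1,e_t)M_1(e_t)<1$), the identity reduces to $f_t(0,e_t)/f_t(1,e_t)=M_1(e_t)/M_0(e_t)$, which, given the finite value set for the $M_a$'s, yields exactly the ratio characterization in condition (1). In the \emph{doubly-capped regime}, both sides equal $1$ pointwise, which requires $M_a(e_t)\ge 1/f_t(a,e_t)$ on both branches; I would lower-bound the requirement by $M_a(e_t)\ge 1$ (strict whenever $f_t(a,e_t)<1$) and enumerate the four sign combinations in $M_a$ to see when $M_a\ge 1$ is consistent with $D_t\nindep A_t$ and $\Yori_t\nindep A_t$. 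This enumeration is where the dichotomy $\alpha_D>\alpha_Y$ vs.\ $\alpha_D<\alpha_Y$ enters: when $\alpha_D>\alpha_Y$, $M_a\ge 1$ forces $g_t^{D}(a,e_t)=1$ on both branches, and the non-degeneracy of $\Yori_t$ w.r.t.\ $A_t$ singles out the two opposite-label patterns (2) and (3); when $\alpha_D<\alpha_Y$, $M_a\ge 1$ forces $g_t^{\Yori}(a,e_t)=1$ on both branches, and the non-degeneracy of $D_t$ w.r.t.\ $A_t$ singles out patterns (4) and (5). The lower bounds on $f_t(0,e_t)$ and $f_t(1,e_t)$ in conditions (2)-(5) then come directly from inverting $f_t(a,e_t)\cdot M_a(e_t)\ge 1$ with the corresponding value of $M_a$.

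The remaining case, \emph{singly-capped} (say the $A=0$ branch capped and the $A=1$ branch not), forces $f_t(1,e_t)M_1(e_t)=1$ exactly while $f_t(0,e_t)M_0(e_t)\ge 1$; this is a measure-zero boundary event that, when it does hold on a set of positive measure, can be absorbed into condition (1) by taking the limiting ratio, and hence does not produce new admissible scenarios beyond those already listed. I would also handle the edge case $\alpha_D=\alpha_Y$ separately, verifying that under the non-degeneracy of $D_t$ and $\Yori_t$ in $A_t$, the doubly-capped regime cannot be sustained and the analysis collapses onto condition (1).

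The main obstacle I expect is in the doubly-capped case: cleanly showing that the only ways to simultaneously satisfy (i) $M_a(e_t)\ge 1$ on both branches, (ii) $D_t\nindep A_t$, and (iii) $\Yori_t\nindep A_t$, while keeping the pointwise identity $\widetilde{f}(0,e_t)=\widetilde{f}(1,e_t)=1$ consistent on the whole support of $E_t$, are precisely the four patterns (2)-(5). This requires careful bookkeeping of the sign of $2g-1$ across the four combinations of $(g_t^{D}(a,\cdot), g_t^{\Yori}(a,\cdot))$, and verifying that any ``mixed'' pattern across different $e_t$ either falls back into condition (1) or violates the non-degeneracy assumptions; the rest of the argument is essentially algebraic enumeration.
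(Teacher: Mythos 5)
Your proposal is correct and follows essentially the same route as the paper's proof: reduce $H_{t+1}\indep A_{t+1}$ to the pointwise identity $H_{t+1}\rvert_{A_t=0}=H_{t+1}\rvert_{A_t=1}$ for each $e_t$ via Fact \ref{fact:var_form} and Assumption \ref{assumption:samegroup}, exhaustively enumerate the finitely many multipliers $1\pm\alpha_D\pm\alpha_Y$ arising from the binary $(D_t,\Yori_t)$ combinations, and case-split on whether the $\min\{1,\cdot\}$ cap binds (uncapped giving the ratio condition (1), doubly-capped giving the trivial-policy conditions (2)--(5)). Your organization by cap regime is only a cosmetic repackaging of the paper's sixteen-row table enumeration, and the difficulty you flag about mixed patterns across different $e_t$ is present in, and handled no more rigorously by, the paper's own argument.
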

\begin{proof}[Proof (sketch)]
    In order to see the exact condition under which it is possible to achieve $H_{t + 1} \indep A_{t + 1}$, we consider the necessary and sufficient condition such that $H_{t + 1} = f_{t + 1}(A_{t + 1}, E_{t + 1})$ is not a function of $A_{t + 1}$.
    This, together with Fact \ref{fact:var_form}, Assumption \ref{assumption:Hupdate}, and Assumption \ref{assumption:samegroup}, indicates that we need to consider the condition under which $H_{t + 1} = \min \big\{ 1, f_t(A_t, E_t) \big[ 1 + \alpha_D (2 D_t - 1) + \alpha_Y (2 \Yori_t - 1) \big] \big\}$ is not a function of $A_t$.

    Since both $D_t$ and $\Yori_t$ are binary, we can exhaustively consider all value combinations of $D_t$ and $\Yori_t$, and list every possible value $H_{t + 1}$ can take in each case in Table \ref{table:H_next_Dmajor} (if $\alpha_D > \alpha_Y$), Table \ref{table:H_next_Ymajor} (if $\alpha_D < \alpha_Y$), or Table \ref{table:H_next_nomajor} (if $\alpha_D = \alpha_Y$).
    By exhaustively going through possible cases, we can have a full picture of the update of $H_{t + 1}$ based on $(H_t, \Yori_t, D_t)$, and then derive conditions under which $H_{t + 1}$ is \textit{not} a function of $A_t$, i.e., we have the conditions under which it is possible to attain $H_{t + 1} \indep A_{t + 1}$.
\end{proof}
\begin{proof}[Proof (full)]
    In order to see the exact condition under which it is possible to achieve $H_{t + 1} \indep A_{t + 1}$, we consider the necessary and sufficient condition such that $H_{t + 1} = f_{t + 1}(A_{t + 1}, E_{t + 1})$ is not a function of $A_{t + 1}$.
    By Fact \ref{fact:var_form}, Assumption \ref{assumption:Hupdate}, and Assumption \ref{assumption:samegroup}, it is necessary and sufficient to consider the condition under which $H_{t + 1} = \min \big\{ 1, f_t(A_t, E_t) \big[ 1 + \alpha_D (2 D_t - 1) + \alpha_Y (2 \Yori_t - 1) \big] \big\}$ is not a function of $A_t$.

    Considering the fact that both $D_t$ and $\Yori_t$ are binary, we can compare the values of $H_{t + 1}$ when $A_{t} = 0$ and $A_{t} = 1$ for all possible combinations of $D_t$ and $\Yori_t$.
    For any fixed $e_t \in \Epsilon$, we can list all the cases in Table \ref{table:H_next_Dmajor} (if $\alpha_D > \alpha_Y$), Table \ref{table:H_next_Ymajor} (if $\alpha_D < \alpha_Y$), or Table \ref{table:H_next_nomajor} (if $\alpha_D = \alpha_Y$), and see if for all $e_t \in \Epsilon$, there is no difference in the value of $H_{t + 1}$ between the cases when $A_t = 0$ and $A_t = 1$.

    From Table \ref{table:H_next_Dmajor}, Table \ref{table:H_next_Ymajor}, and Table \ref{table:H_next_nomajor}, we can see that if and only the following hold true can we achieve $H_{t + 1} \indep A_{t + 1}$: for every $e_t \in \Epsilon$, whenever the joint probability $P\big( g^D_t(0, e_t) = d, g^{\Yori}_t(0, e_t) = y, g^D_t(1, e_t) = d', g^{\Yori}_t(1, e_t) = y' \big)$ is nonzero, the last two columns of the corresponding row(s) in the table, i.e., the exact values of $H_{t + 1}$, need to match.
    For example, when $\alpha_D > \alpha_Y$, if we know $P\big( g^D_t(0, e_t) = 0, g^{\Yori}_t(0, e_t) = 0, g^D_t(1, e_t) = 0, g^{\Yori}_t(1, e_t) = 0 \big) \neq 0$, we need the last two columns of Case (i) of Table \ref{table:H_next_Dmajor} to equal to each other, i.e., we need $f_t(0, e_t) = f_t(1, e_t)$ to hold true.

    Without further assumptions on the joint distribution of the data, we do not know which combination of $(d, y, d', y')$ will result in a nonzero joint probability:
    \begin{equation*}
        P\big( g^D_t(0, e_t) = d, g^{\Yori}_t(0, e_t) = y, g^D_t(1, e_t) = d', g^{\Yori}_t(1, e_t) = y' \big) \neq 0.
    \end{equation*}
    However, considering the fact that $\sum_{d, d' \in \Dcal, y, y' \in \Ycal} P\big( g^D_t(0, e_t) = d, g^{\Yori}_t(0, e_t) = y, g^D_t(1, e_t) = d', g^{\Yori}_t(1, e_t) = y' \big) = 1$ holds for all $e_t \in \Epsilon$, we do know that for any fixed $e_t \in \Epsilon$, there is at least one possible instantiation of $(d^*, y^*, d'^*, y'^*)$ such that:
    \begin{equation}
            P\big( g^D_t(0, e_t) = d^*, g^{\Yori}_t(0, e_t) = y^*,
            g^D_t(1, e_t) = d'^*, g^{\Yori}_t(1, e_t) = y'^* \big) \neq 0.
    \end{equation}

    Let us first consider situations where $\alpha_D > \alpha_Y$ and focus on Table \ref{table:H_next_Dmajor}.
    The analysis on situations where $\alpha_D < \alpha_Y$ (i.e., Table \ref{table:H_next_Ymajor}) or $\alpha_D = \alpha_Y$ (i.e, Table \ref{table:H_next_nomajor}), is of the same flavor and therefore we omit the detail in the proof.

    To begin with, we can observe that not every entry of the last two columns explicitly keeps the $\min\{\cdot, 1\}$ operator.
    On the one hand, since $\alpha_D > \alpha_Y$ ($\alpha_D, \alpha_Y \in [0, \frac{1}{2})$, as of Assumption \ref{assumption:Hupdate}), we have $(1 - \alpha_D \pm \alpha_Y) \in (0, 1)$ and $f_t(a_t, e_t) (1 - \alpha_D \pm \alpha_Y) \in (0, 1)$ (since $H_t = f_t(a_t, e_t) \in (0, 1]$); therefore, we do not need to keep the $\min\{\cdot, 1\}$ operator explicit, for instance, in the second to last column of Case (v - viii).
    On the other hand, when the coefficients $(1 + \alpha_D \pm \alpha_Y) > 1$ we are not sure if $f_t(a_t, e_t) (1 + \alpha_D \pm \alpha_Y)$ exceed $1$; therefore, we need to keep the $\min\{\cdot, 1\}$ operator explicit, for instance, in the last column of Case (v - viii).

    Besides, if only one entry of the last two columns explicitly has the $\min\{\cdot, 1\}$ operator, it is equivalent to require that the terms themselves (before applying the operator) are equal (since the one without the $\min\{\cdot, 1\}$ operator is known to be within the $(0, 1)$ interval).
    For instance, Case (ix) requires that $\min\{ f_t(0, e_t)(1 + \alpha_D - \alpha_Y), 1 \} = f_t(1, e_t)(1 - \alpha_D - \alpha_Y)$, which is equivalent to requiring $f_t(0, e_t)(1 + \alpha_D - \alpha_Y) = f_t(1, e_t)(1 - \alpha_D - \alpha_Y)$.

    Furthermore, if both entries of the last two columns explicitly has the $\min\{\cdot, 1\}$ operator, the exact condition of matching the last two columns depends on the actual value of $f_t(0, e_t)$ and $f_t(1, e_t)$.
    For instance, Case (xv) requires that $\min\{ f_t(0, e_t)(1 + \alpha_D + \alpha_Y), 1 \} = \min\{ f_t(1, e_t)(1 + \alpha_D - \alpha_Y), 1 \}$, which could be equivalent to one of the following conditions (recall that $1 + \alpha_D \pm \alpha_Y > 1$):
    \begin{itemize}
        \item if we have $f_t(0, e_t) \in [\frac{1}{1 + \alpha_D + \alpha_Y}, 1]$ and $f_t(1, e_t) \in [\frac{1}{1 + \alpha_D - \alpha_Y}, 1]$, we require $1 = 1$, which trivially holds true;
        \item if we have $f_t(0, e_t) \in (0, \frac{1}{1 + \alpha_D + \alpha_Y})$ and $f_t(1, e_t) \in [\frac{1}{1 + \alpha_D - \alpha_Y}, 1]$, we require $f_t(0, e_t)(1 + \alpha_D + \alpha_Y) = 1$, which cannot hold true;
        \item if we have $f_t(0, e_t) \in [\frac{1}{1 + \alpha_D + \alpha_Y}, 1]$ and $f_t(1, e_t) \in (0, \frac{1}{1 + \alpha_D - \alpha_Y})$, we require $1 = f_t(1, e_t)(1 + \alpha_D - \alpha_Y)$ which cannot hold true;
        \item if we have $f_t(0, e_t) \in (0, \frac{1}{1 + \alpha_D + \alpha_Y})$ and $f_t(1, e_t) \in (0, \frac{1}{1 + \alpha_D - \alpha_Y})$, we require $\frac{f_t(0, e_t)}{f_t(1, e_t)} = \frac{1 + \alpha_D - \alpha_Y}{1 + \alpha_D + \alpha_Y}$.
    \end{itemize}

    Recall that without further assumptions on the data distribution, we do not know which row(s) of the table correspond to a nonzero probability $P\big( g^D_t(0, e_t) = d, g^{\Yori}_t(0, e_t) = y, g^D_t(1, e_t) = d', g^{\Yori}_t(1, e_t) = y' \big)$.
    As a result, in general, we do not know which set of requirements we should enforce for each $e_t \in \Epsilon$.
    Therefore, we cannot derive a necessary and sufficient condition for attaining $H_{t + 1} \indep A_{t + 1}$ in general cases.
    Nevertheless, we can summarize the previous analysis and derive the necessary condition of attaining $H_{t + 1} \indep A_{t + 1}$, i.e., only if at least one of the following conditions holds true for all $e_t \in \Epsilon$ can we possibly attain $H_{t + 1} \indep A_{t + 1}$:
    \begin{enumerate}[label=(\arabic*)]
        \item The ratio $\frac{f_t(0, e_t)}{f_t(1, e_t)}$ has a specific domain of value:
            \begin{equation*}
                \frac{f_t(0, e_t)}{f_t(1, e_t)}
                = \frac{1 \pm \alpha_D \pm \alpha_Y}{1 \pm \alpha_D \pm \alpha_Y};
            \end{equation*}
        \item Positive (negative) labels only appear in the advantaged (disadvantaged) group, and the decision for everyone is positive (if $\alpha_D > \alpha_Y$):
            \begin{equation*}
                \begin{cases}
                    & f_t(0, e_t) \in [\frac{1}{1 + \alpha_D - \alpha_Y}, 1], \\
                    & f_t(1, e_t) \in [\frac{1}{1 + \alpha_D + \alpha_Y}, 1], \\
                    & g^{\Yori}_t(0, e_t) = 0, g^{\Yori}_t(1, e_t) = 1, \\
                    & g^D_t(0, e_t) = g^D_t(1, e_t) = 1;
                \end{cases}
            \end{equation*}
        \item Negative (positive) labels only appear in the advantaged (disadvantaged) group, and the decision for everyone is positive (if $\alpha_D > \alpha_Y$):
            \begin{equation*}
                \begin{cases}
                    & f_t(0, e_t) \in [\frac{1}{1 + \alpha_D + \alpha_Y}, 1], \\
                    & f_t(1, e_t) \in [\frac{1}{1 + \alpha_D - \alpha_Y}, 1], \\
                    & g^{\Yori}_t(0, e_t) = 1, g^{\Yori}_t(1, e_t) = 0, \\
                    & g^D_t(0, e_t) = g^D_t(1, e_t) = 1;
                \end{cases}
            \end{equation*}
        \item Everyone has a positive label, but the positive decision is exclusive to the advantaged group (if $\alpha_D < \alpha_Y$):
            \begin{equation*}
                \begin{cases}
                    & f_t(0, e_t) \in [\frac{1}{1 - \alpha_D + \alpha_Y}, 1], \\
                    & f_t(1, e_t) \in [\frac{1}{1 + \alpha_D + \alpha_Y}, 1], \\
                    & g^{\Yori}_t(0, e_t) = g^{\Yori}_t(1, e_t) = 1, \\
                    & g^D_t(0, e_t) = 0, g^D_t(1, e_t) = 1;
                \end{cases}
            \end{equation*}
        \item Everyone has a positive label, but the positive decision is exclusive to the disadvantaged group (if $\alpha_D < \alpha_Y$):
            \begin{equation*}
                \begin{cases}
                    & f_t(0, e_t) \in [\frac{1}{1 + \alpha_D + \alpha_Y}, 1], \\
                    & f_t(1, e_t) \in [\frac{1}{1 - \alpha_D + \alpha_Y}, 1], \\
                    & g^{\Yori}_t(0, e_t) = g^{\Yori}_t(1, e_t) = 1, \\
                    & g^D_t(0, e_t) = 1, g^D_t(1, e_t) = 0.
                \end{cases}
            \end{equation*}
    \end{enumerate}
\end{proof}

\subsection{Proof for Theorem \ref{theorem:perfect_one_step}}\label{appendix:proof_thm_perfect}
\begin{thm}
    Let us consider the general situation where both $D_t$ and $\Yori_t$ are dependent with $A_t$, i.e., $D_t \nindep A_t, \Yori_t \nindep A_t$.
    Under Fact \ref{fact:var_form}, Assumption \ref{assumption:Hupdate}, Assumption \ref{assumption:samegroup}, and Assumption \ref{assumption:biased_weight_quantitative}, as well as the specified dynamics, when $H_t \nindep A_t$, the perfect predictor does not have the potential to get closer to the long-term fairness goal after one-step intervention, i.e.,
    \begin{equation*}
        D_t = \Yori_t \implies \Delta^{(\text{Perfect Predictor})}_{\mathrm{STIR}} \big\rvert_{t}^{t + 1} > 0.
    \end{equation*}
\end{thm}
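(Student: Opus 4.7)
The plan is to exploit the perfect-predictor constraint (Equation~\ref{equ:perfect_predictor_joint}) to collapse the four-fold sum in Equation~\ref{equ:STIR} to just the four surviving terms $(d, d', y, y') \in \{(0,0,0,0), (0,1,0,1), (1,0,1,0), (1,1,1,1)\}$. For each such term I would instantiate the multiplicative update of Assumption~\ref{assumption:Hupdate} to write $\varphi_{t+1}(\xi)$ as a piecewise-linear function of $\bigl(f_t(0,\epsilon), f_t(1,\epsilon)\bigr)$ modulo the cap $\min\{\cdot,1\}$; this is precisely the enumeration organized in the tables referenced at the end of Section~\ref{sec:one_step_analysis_framework}.

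The second step is to evaluate each surviving term under Assumption~\ref{assumption:biased_weight_quantitative} by partitioning the $\epsilon$-integral along the diagonal $\{f_t(0,\epsilon) \leq f_t(1,\epsilon)\}$ versus $\{f_t(0,\epsilon) > f_t(1,\epsilon)\}$, substituting the piecewise constants $\gamma^{(\mathrm{up})}_{dd'yy'}$ and $\gamma^{(\mathrm{low})}_{dd'yy'}$ for $q_t$, and computing the resulting integrals in closed form. The symmetric cases $(0,0,0,0)$ and $(1,1,1,1)$ contribute, respectively, a negative term proportional to $-(\alpha_D+\alpha_Y)\,\Ebb[|\varphi_t|]$ and (absent clipping) a positive term $+(\alpha_D+\alpha_Y)\,\Ebb[|\varphi_t|]$ of comparable magnitude, so the two largely offset. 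The asymmetric cases $(0,1,0,1)$ and $(1,0,1,0)$ then provide the decisive positive contribution: since $\gamma^{(\mathrm{up})}_{dd'yy'} > \gamma^{(\mathrm{low})}_{dd'yy'}$ when $y < y'$ (and the reverse when $y > y'$), the bulk of probability mass sits precisely in the region where multiplying the smaller tier by $(1-\alpha_D-\alpha_Y)$ and the larger by $(1+\alpha_D+\alpha_Y)$ strictly amplifies $|\varphi|$, yielding a positive contribution lower-bounded by a constant times the bias $|\gamma^{(\mathrm{up})}-\gamma^{(\mathrm{low})}|$ and the weight $(\alpha_D+\alpha_Y)$.

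The main obstacle is the cap $\min\{\cdot,1\}$, which can shrink the positive contribution from Case $(1,1,1,1)$ (collapsing $\varphi_{t+1}$ to zero when both rescaled tiers exceed one) and can reduce the magnitude of the asymmetric-case positive contribution when the favored group saturates. I plan to control this by splitting the $\epsilon$-integral further according to whether $f_t(a,\epsilon)(1+\alpha_D+\alpha_Y) \leq 1$ for each $a \in \{0,1\}$, and then showing per-region that the asymmetric contribution remains strictly positive even after clipping (a short algebraic check using that $1 - f_t(0,\epsilon)(1-\alpha_D-\alpha_Y) > f_t(1,\epsilon) - f_t(0,\epsilon)$ whenever the larger group saturates) and that any loss from clipping in Case $(1,1,1,1)$ is bounded above by the strict positivity surplus from the asymmetric cases. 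Combining these estimates and invoking the hypothesis $H_t \nindep A_t$ to rule out the degenerate scenario $\varphi_t \equiv 0$, I would conclude that $\Delta^{(\text{Perfect Predictor})}_{\mathrm{STIR}}\big\rvert_{t}^{t+1} > 0$.
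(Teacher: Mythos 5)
Your overall machinery matches the paper's: collapse the sum in Equation \ref{equ:STIR} via Equation \ref{equ:perfect_predictor_joint}, read off the piecewise-linear instantiations of $\varphi_{t+1}$ from Table \ref{table:delta_next_Dmajor} and its companions, slice the unit square along the diagonal and the clipping thresholds $\tfrac{1}{1+\alpha_D+\alpha_Y}$, and integrate against the piecewise constants $\gamma^{(\mathrm{up})}_{dd'yy'}, \gamma^{(\mathrm{low})}_{dd'yy'}$ of Assumption \ref{assumption:biased_weight_quantitative}. Your handling of the asymmetric cases $(0,1,0,1)$ and $(1,0,1,0)$ --- including the check that the gap still grows in the saturated region because $1+(\alpha_D+\alpha_Y)f_t(0,\epsilon) > f_t(1,\epsilon)$ --- is essentially the paper's computation for Cases (vi) and (xi), though the paper finishes by computing the closed form exactly and showing via monotonicity in $(\alpha_D+\alpha_Y)$ and the $\gamma$'s that its infimum over the admissible parameters is $0$, rather than by a lower bound proportional to the bias $\lvert\gamma^{(\mathrm{up})}-\gamma^{(\mathrm{low})}\rvert$.

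The genuine gap is your treatment of the symmetric cases. The paper does \emph{not} include $(0,0,0,0)$ and $(1,1,1,1)$: it restricts $\Delta^{(\text{Perfect Predictor})}_{\mathrm{STIR}}\big\rvert_{t}^{t+1}$ to the two terms weighted by $P_t(0,1,0,1)$ and $P_t(1,0,1,0)$, on the stated grounds that only individuals for whom $g^{\Yori}_t(\cdot,e_t)$ (equivalently $g^{D}_t$) actually depends on $A_t$ are of interest. Your proposal keeps the symmetric terms and argues they ``largely offset,'' with any residual loss absorbed by the asymmetric surplus. That cannot be made to work. The $(0,0,0,0)$ term contributes exactly $-(\alpha_D+\alpha_Y)\,P_t(0,0,0,0)\,\Ebb\!\left[\,\lvert\varphi_t\rvert \mid 0,0,0,0\,\right]$, with no clipping to rescue it since $1-\alpha_D-\alpha_Y<1$; the $(1,1,1,1)$ term is not merely diminished by clipping but equals $-\lvert\varphi_t(\epsilon)\rvert$ on the region where both rescaled tiers saturate, so it too can be negative; and neither is controlled by the asymmetric surplus, which scales with $P_t(0,1,0,1)+P_t(1,0,1,0)$ --- a mass that the hypothesis $\Yori_t \nindep A_t$ forces to be positive but not bounded below relative to $P_t(0,0,0,0)$ or $P_t(1,1,1,1)$. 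If the symmetric combinations carry most of the population, your decomposition yields a negative total, contradicting the conclusion. To close the argument you must either adopt the paper's restriction of the aggregation to the asymmetric cases or add an assumption that annihilates $P_t(0,0,0,0)$ and $P_t(1,1,1,1)$; the offsetting claim as written is false.
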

\begin{proof}[Proof (sketch)]
    The goal is to calculate if it is possible for \textit{Single-step Tier Imbalance Reduction} $\STIR$ to be smaller than $0$ when using perfect predictors.
    As defined in Equation \ref{equ:STIR}, $\STIR$ is a weighted aggregation (integration followed by summation) of $\lvert \varphi(e_{t + 1}) \rvert - \lvert \varphi(e_t)\rvert$.
    The quantitative analysis involves three key components: instantiations of $\varphi_{t + 1}(e_{t + 1})$, the knowledge/assumptions on $q_t\big(f_t(0, \epsilon), f_t(1, \epsilon) \mid d, d', y, y' \big)$, and characteristics of $P_t(d, d', y', y')$.

    For the first component, we can list all possible instantiations of $\varphi_{t + 1}(e_{t + 1})$ in Table \ref{table:delta_next_Dmajor} (if $\alpha_D > \alpha_Y$), Table \ref{table:delta_next_Ymajor} (if $\alpha_D < \alpha_Y$), and Table \ref{table:delta_next_nomajor} (if $\alpha_D = \alpha_Y$), respectively.
    For the second component, we can introduce a quantitative assumption on $q_t\big(f_t(0, \epsilon), f_t(1, \epsilon) \mid d, d', y, y' \big)$ (Assumption \ref{assumption:biased_weight_quantitative}).
    For the third component, we need to exploit the characteristic of the predictor of interest to gain further insight into the joint distribution $P_t(d, d', y, y')$.
    For perfect predictors, we have $P_t(d, d', y, y')$ satisfies Equation \ref{equ:perfect_predictor_joint} (as we have discussed in \Cref{sec:one_step_analysis_perfect}).

    For the purpose of calculating the value of $\STIR$, the proof contains two steps: (1) exhaustively derive the value of $\lvert \varphi(e_{t + 1}) \rvert - \lvert \varphi(e_t)\rvert$ after one-step dynamics in all possible cases, and (2) aggregate the difference $\lvert \varphi(e_{t + 1}) \rvert - \lvert \varphi(e_t)\rvert$ with the help of the additional knowledge/assumptions on $q_t\big(f_t(0, \epsilon), f_t(1, \epsilon) \mid d, d', y, y' \big)$ and $P_t(d, d', y, y')$.
\end{proof}
\begin{proof}[Proof (full)]
    For the perfect predictor $D_t = \Yori_t$, among all possible instantiations of $\varphi_{t + 1}(e_{t + 1})$ as listed in Table \ref{table:delta_next_Dmajor}, Table \ref{table:delta_next_Ymajor}, and Table \ref{table:delta_next_nomajor}, not every case corresponds to a nonzero $P_t(d, d', y, y')$ and therefore may not contribute to the computation of $\STIR$ as detailed in Equation \ref{equ:STIR}.
    By applying Equation \ref{equ:perfect_predictor_joint} we only need to consider Case (i), Case (vi), Case (xi), and Case (xvi) in Table \ref{table:delta_next_Dmajor} (if $\alpha_D > \alpha_Y$), Table \ref{table:delta_next_Ymajor} (if $\alpha_D < \alpha_Y$), and Table \ref{table:delta_next_nomajor} (if $\alpha_D = \alpha_Y$), respectively.
    We are interested in cases where at least one of $g^D_t(\cdot, E_t)$ and $g^{\Yori}_t(\cdot, E_t)$ are functions of $A_t$, and we can list all possible values of $\lvert \varphi(e_{t + 1}) \rvert - \lvert \varphi(e_t)\rvert$ for each of the aforementioned cases (the result applies to scenarios where $\alpha_D > \alpha_Y$, $\alpha_D < \alpha_Y$, or $\alpha_D = \alpha_Y$).

    When $(d, d', y, y') = (0, 1, 0, 1)$, i.e., for Case (vi):
    \begin{enumerate}[label=(vi.1.\arabic*)]
        \item $\lvert \varphi(e_{t + 1}) \rvert - \lvert \varphi(e_t)\rvert = - (\alpha_D + \alpha_Y) \big(f_t(0, e_t) + f_t(1, e_t)\big) < 0$
            \begin{equation*}
                \text{if we have }
                \begin{cases}
                    f_t(0, e_t) \in (0, 1], f_t(1, e_t) \in (0, \frac{1}{1 + \alpha_D + \alpha_Y}) & \\
                    f_t(1, e_t) \leq \tan \big( \arctan \frac{1}{\alpha_D + \alpha_Y} - \frac{\pi}{4} \big) \cdot f_t(0, e_t) &
                \end{cases};
            \end{equation*}
        \item $\lvert \varphi(e_{t + 1}) \rvert - \lvert \varphi(e_t)\rvert = (\alpha_D + \alpha_Y - 2) f_t(0, e_t) + (\alpha_D + \alpha_Y + 2) f_t(1, e_t) < 0$
            \begin{equation*}
                \text{if we have }
                \begin{cases}
                    f_t(0, e_t) \in (0, 1], f_t(1, e_t) \in (0, \frac{1}{1 + \alpha_D + \alpha_Y}) & \\
                    f_t(1, e_t) < \tan \big( \arctan \frac{2}{\alpha_D + \alpha_Y} - \frac{\pi}{4} \big) \cdot f_t(0, e_t) & \\
                    f_t(1, e_t) \geq \tan \big( \arctan \frac{1}{\alpha_D + \alpha_Y} - \frac{\pi}{4} \big) \cdot f_t(0, e_t) &
                \end{cases};
            \end{equation*}
        \item $\lvert \varphi(e_{t + 1}) \rvert - \lvert \varphi(e_t)\rvert = (\alpha_D + \alpha_Y - 2) f_t(0, e_t) + (\alpha_D + \alpha_Y + 2) f_t(1, e_t) > 0$
            \begin{equation*}
                \text{if we have }
                \begin{cases}
                    f_t(0, e_t) \in (0, 1], f_t(1, e_t) \in (0, \frac{1}{1 + \alpha_D + \alpha_Y}) & \\
                    f_t(1, e_t) < f_t(0, e_t) & \\
                    f_t(1, e_t) \geq \tan \big( \arctan \frac{2}{\alpha_D + \alpha_Y} - \frac{\pi}{4} \big) \cdot f_t(0, e_t) &
                \end{cases};
            \end{equation*}
        \item $\lvert \varphi(e_{t + 1}) \rvert - \lvert \varphi(e_t)\rvert = (\alpha_D + \alpha_Y) \big(f_t(0, e_t) + f_t(1, e_t)\big) > 0$
            \begin{equation*}
                \text{if we have }
                \begin{cases}
                    f_t(0, e_t) \in (0, 1], f_t(1, e_t) \in (0, \frac{1}{1 + \alpha_D + \alpha_Y}) & \\
                    f_t(1, e_t) \geq f_t(0, e_t) &
                \end{cases};
            \end{equation*}
    \end{enumerate}
    \begin{enumerate}[label=(vi.2.\arabic*)]
        \item $\lvert \varphi(e_{t + 1}) \rvert - \lvert \varphi(e_t)\rvert = 1 - (2 - \alpha_D - \alpha_Y) f_t(0, e_t) + f_t(1, e_t) > 0$
            \begin{equation*}
                \text{if we have }
                \begin{cases}
                    f_t(0, e_t) \in (0, 1], f_t(1, e_t) \in [\frac{1}{1 + \alpha_D + \alpha_Y}, 1] & \\
                    f_t(1, e_t) < f_t(0, e_t) &
                \end{cases};
            \end{equation*}
        \item $\lvert \varphi(e_{t + 1}) \rvert - \lvert \varphi(e_t)\rvert = 1 + (\alpha_D + \alpha_Y) f_t(0, e_t) - f_t(1, e_t) > 0$
            \begin{equation*}
                \text{if we have }
                \begin{cases}
                    f_t(0, e_t) \in (0, 1], f_t(1, e_t) \in [\frac{1}{1 + \alpha_D + \alpha_Y}, 1] & \\
                    f_t(1, e_t) \geq f_t(0, e_t) &
                \end{cases}.
            \end{equation*}
    \end{enumerate}
    When $(d, d', y, y') = (1, 0, 1, 0)$, i.e., for Case (xi):
    \begin{enumerate}[label=(xi.1.\arabic*)]
        \item $\lvert \varphi(e_{t + 1}) \rvert - \lvert \varphi(e_t)\rvert = - (\alpha_D + \alpha_Y) \big(f_t(0, e_t) + f_t(1, e_t)\big) < 0$
            \begin{equation*}
                \text{if we have }
                \begin{cases}
                    f_t(0, e_t) \in (0, \frac{1}{1 + \alpha_D + \alpha_Y}), f_t(1, e_t) \in (0, 1] & \\
                    f_t(1, e_t) \geq \tan \big( \frac{3 \pi}{4} - \arctan \frac{1}{\alpha_D + \alpha_Y} \big) \cdot f_t(0, e_t) &
                \end{cases};
            \end{equation*}
        \item $\lvert \varphi(e_{t + 1}) \rvert - \lvert \varphi(e_t)\rvert = (\alpha_D + \alpha_Y + 2) f_t(0, e_t) + (\alpha_D + \alpha_Y - 2) f_t(1, e_t) < 0$
            \begin{equation*}
                \text{if we have }
                \begin{cases}
                    f_t(0, e_t) \in (0, \frac{1}{1 + \alpha_D + \alpha_Y}), f_t(1, e_t) \in (0, 1] & \\
                    f_t(1, e_t) < \tan \big( \frac{3 \pi}{4} - \arctan \frac{1}{\alpha_D + \alpha_Y} \big) \cdot f_t(0, e_t) & \\
                    f_t(1, e_t) \geq \tan \big( \frac{3 \pi}{4} - \arctan \frac{2}{\alpha_D + \alpha_Y} \big) \cdot f_t(0, e_t) &
                \end{cases};
            \end{equation*}
        \item $\lvert \varphi(e_{t + 1}) \rvert - \lvert \varphi(e_t)\rvert = (\alpha_D + \alpha_Y + 2) f_t(0, e_t) + (\alpha_D + \alpha_Y - 2) f_t(1, e_t) > 0$
            \begin{equation*}
                \text{if we have }
                \begin{cases}
                    f_t(0, e_t) \in (0, \frac{1}{1 + \alpha_D + \alpha_Y}), f_t(1, e_t) \in (0, 1] & \\
                    f_t(1, e_t) < \tan \big( \frac{3 \pi}{4} - \arctan \frac{2}{\alpha_D + \alpha_Y} \big) \cdot f_t(0, e_t) & \\
                    f_t(1, e_t) \geq f_t(0, e_t) &
                \end{cases};
            \end{equation*}
        \item $\lvert \varphi(e_{t + 1}) \rvert - \lvert \varphi(e_t)\rvert = (\alpha_D + \alpha_Y) \big(f_t(0, e_t) + f_t(1, e_t)\big) > 0$
            \begin{equation*}
                \text{if we have }
                \begin{cases}
                    f_t(0, e_t) \in (0, \frac{1}{1 + \alpha_D + \alpha_Y}), f_t(1, e_t) \in (0, 1] & \\
                    f_t(1, e_t) < f_t(0, e_t) &
                \end{cases};
            \end{equation*}
    \end{enumerate}

    \begin{figure}[t]
        \centering
        \captionsetup{format=hang}
        \begin{subfigure}[t]{.46\textwidth}
            \centering
            \includegraphics[width=1.\columnwidth]{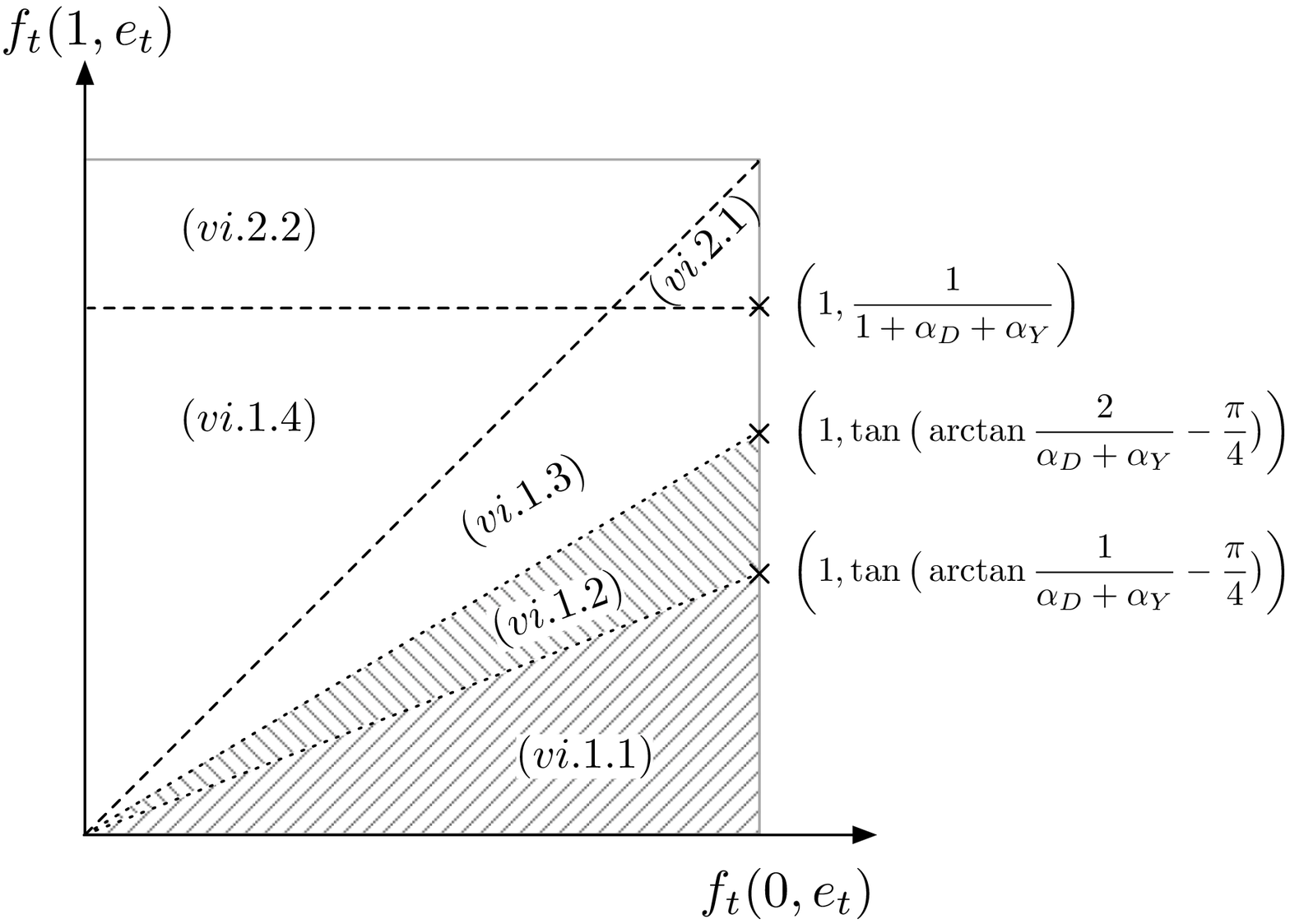}
            \caption{Case (vi), $(d, d', y, y') = (0, 1, 0, 1)$}
            \label{fig:appendix_case_vi_slice}
        \end{subfigure}
        \begin{subfigure}[t]{.5\textwidth}
            \centering
            \includegraphics[width=.67\columnwidth]{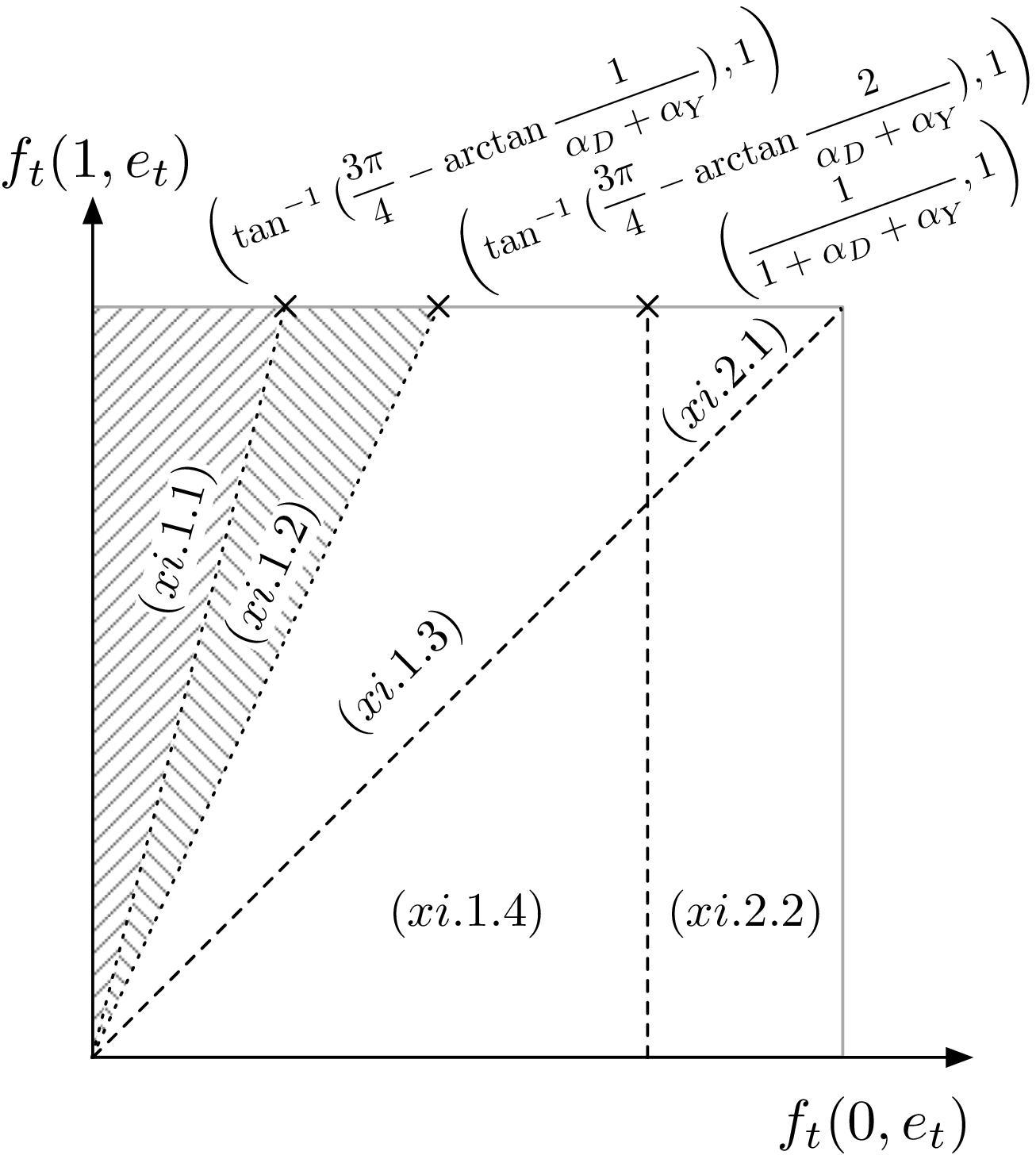}
            \caption{Case (xi), $(d, d', y, y') = (1, 0, 1, 0)$}
            \label{fig:appendix_case_xi_slice}
        \end{subfigure}
        \caption{Illustration of the sliced squares on the $\big( f_t(0, e_t), f_t(1, e_t) \big)$ plane.
        Depending on the initial situation, i.e., the slice that the $\big( f_t(0, e_t), f_t(1, e_t) \big)$ pair falls upon, the term $\lvert \varphi(e_{t + 1}) \rvert - \lvert \varphi(e_t)\rvert$ takes different values.
        The shaded slices indicate that if the initial situation satisfies the corresponding condition, the calculated $\lvert \varphi(e_{t + 1}) \rvert - \lvert \varphi(e_t)\rvert < 0$.
        }
        \label{fig:square_slice}
    \end{figure}

    \begin{enumerate}[label=(xi.2.\arabic*)]
        \item $\lvert \varphi(e_{t + 1}) \rvert - \lvert \varphi(e_t)\rvert = 1 + f_t(0, e_t) - (2 - \alpha_D - \alpha_Y) f_t(1, e_t) > 0$
            \begin{equation*}
                \text{if we have }
                \begin{cases}
                    f_t(0, e_t) \in [\frac{1}{1 + \alpha_D + \alpha_Y}, 1], f_t(1, e_t) \in (0, 1] & \\
                    f_t(1, e_t) \geq f_t(0, e_t) &
                \end{cases};
            \end{equation*}
        \item $\lvert \varphi(e_{t + 1}) \rvert - \lvert \varphi(e_t)\rvert = 1 - f_t(0, e_t) + (\alpha_D + \alpha_Y) f_t(1, e_t) > 0$
            \begin{equation*}
                \text{if we have }
                \begin{cases}
                    f_t(0, e_t) \in [\frac{1}{1 + \alpha_D + \alpha_Y}, 1], f_t(1, e_t) \in (0, 1] & \\
                    f_t(1, e_t) < f_t(0, e_t) &
                \end{cases}.
            \end{equation*}
    \end{enumerate}

    Now we proceed to the second step and aggregate $\lvert \varphi(e_{t + 1}) \rvert - \lvert \varphi(e_t)\rvert$ terms.
    According to Equation \ref{equ:STIR} and Equation \ref{appendix:equ:mapping_across_t}, for the perfect predictor we have:
    \begin{equation}\label{equ:STIR_perfect}
        \begin{split}
            \Delta^{(\text{Perfect Predictor})}_{\mathrm{STIR}} \big\rvert_{t}^{t + 1}
            &= P_t(0, 1, 0, 1) \cdot
            \int_{\epsilon \in \Epsilon}
            \int_{\xi \in \Epsilon}
            \big( \lvert \varphi_{t + 1}(\xi) \rvert - \lvert \varphi_t(\epsilon) \rvert \big) \\
            & ~~~~~~~~~~
            \cdot \mathbbm{1}\{ \varphi_{t + 1}(\xi) = G(f_t, g_t^D, g_t^{\Yori}; 0, 1, 0, 1, \epsilon, \alpha_D, \alpha_Y) \} \\
            & ~~~~~~~~~~
            \cdot q_t\big(f_t(0, \epsilon), f_t(1, \epsilon) \mid 0, 1, 0, 1 \big)
            d \xi d \epsilon \\
            & ~~ + P_t(1, 0, 1, 0) \cdot
            \int_{\epsilon \in \Epsilon}
            \int_{\xi \in \Epsilon}
            \big( \lvert \varphi_{t + 1}(\xi) \rvert - \lvert \varphi_t(\epsilon) \rvert \big) \\
            & ~~~~~~~~~~
            \cdot \mathbbm{1}\{ \varphi_{t + 1}(\xi) = G(f_t, g_t^D, g_t^{\Yori}; 1, 0, 1, 0, \epsilon, \alpha_D, \alpha_Y) \} \\
            & ~~~~~~~~~~
            \cdot q_t\big(f_t(0, \epsilon), f_t(1, \epsilon) \mid 1, 0, 1, 0 \big)
            d \xi d \epsilon.
        \end{split}
    \end{equation}

    As we can see from Equation \ref{equ:STIR_perfect}, we need to perform two-dimensional integrations on the $\big(f_t(0, e_t), f_t(1, e_t)\big)$ plane, calculating the expectation of the term $\lvert \varphi(e_{t + 1}) \rvert - \lvert \varphi(e_t)\rvert$ over the conditional densities $q_t\big(f_t(0, \epsilon), f_t(1, \epsilon) \mid 0, 1, 0, 1 \big)$ and $q_t\big(f_t(0, \epsilon), f_t(1, \epsilon) \mid 1, 0, 1, 0 \big)$.
    Since these conditional joint densities could be convoluted in general cases, the calculation of conditional expectations in Equation \ref{equ:STIR_perfect} could be rather complicated.
    Therefore, we propose to take advantage of Assumption \ref{assumption:biased_weight_quantitative} to quantitatively simplify the calculation yet remain consistent with the rather mild qualitative assumption (Assumption \ref{assumption:biased_weight_qualitative}), and derive a result that is numerically clear and informative.
    For the purpose of better illustrating the connection between (qualitative and quantitative) assumptions on $q_t\big(f_t(0, \epsilon), f_t(1, \epsilon) \mid d, d', y, y' \big)$ and the computation of $\STIR$, we also provide illustrative figures as shown in Figure \ref{fig:square_slice}.

    With the help of Assumption \ref{assumption:biased_weight_quantitative}, we convert the conditional expectations in Equation \ref{equ:STIR_perfect} into calculations of multiple integrals on slices within a $1 \times 1$ square on the 2-D plane, where $\phi_0$ and $\phi_1$ axes correspond to the value of $f_t(0, E_t)$ and $f_t(1, E_t)$ respectively:

    \begin{equation*}
        \begin{split}
            & ~~ \int_{\epsilon \in \Epsilon}
            \int_{\xi \in \Epsilon}
            \big( \lvert \varphi_{t + 1}(\xi) \rvert - \lvert \varphi_t(\epsilon) \rvert \big)
            \cdot \mathbbm{1}\{ \varphi_{t + 1}(\xi) = G(f_t, g_t^D, g_t^{\Yori}; 0, 1, 0, 1, \epsilon, \alpha_D, \alpha_Y) \} \\
            & ~~~~~~~~~~
            \cdot q_t\big(f_t(0, \epsilon), f_t(1, \epsilon) \mid 0, 1, 0, 1 \big) d \xi d \epsilon \\
            & =
            \gamma_{0101}^{\text{(low)}} \cdot \bigg\{
                \int_0^1
                \int_0^{\tan \big( \arctan \frac{1}{\alpha_D + \alpha_Y} - \frac{\pi}{4} \big) \phi_0}
                - (\alpha_D + \alpha_Y) (\phi_0 + \phi_1)
                ~ d \phi_1 d \phi_0 \\
            & ~~~~~~~~~~
                + \int_0^{\frac{1}{1 + \alpha_D + \alpha_Y}}
                \int_{\tan \big( \arctan \frac{1}{\alpha_D + \alpha_Y} - \frac{\pi}{4} \big) \phi_0}^{\phi_0}
                (\alpha_D + \alpha_Y - 2) \phi_0 + (\alpha_D + \alpha_Y + 2) \phi_1
                ~ d \phi_1 d \phi_0 \\
            & ~~~~~~~~~~
                + \int_{\frac{1}{1 + \alpha_D + \alpha_Y}}^1
                \int_{\tan \big( \arctan \frac{1}{\alpha_D + \alpha_Y} - \frac{\pi}{4} \big) \phi_0}^{\frac{1}{1 + \alpha_D + \alpha_Y}}
                (\alpha_D + \alpha_Y - 2) \phi_0 + (\alpha_D + \alpha_Y + 2) \phi_1
                ~ d \phi_1 d \phi_0 \\
            & ~~~~~~~~~~
                + \int_{\frac{1}{1 + \alpha_D + \alpha_Y}}^1
                \int_{\frac{1}{1 + \alpha_D + \alpha_Y}}^{\phi_0}
                1 - (2 - \alpha_D - \alpha_Y) \phi_0 + \phi_1
                ~ d \phi_1 d \phi_0 \bigg\} \\
            & ~~ + \gamma_{0101}^{\text{(up)}} \cdot \bigg\{
                \int_0^{\frac{1}{1 + \alpha_D + \alpha_Y}}
                \int_{\phi_0}^{\frac{1}{1 + \alpha_D + \alpha_Y}}
                (\alpha_D + \alpha_Y) (\phi_0 + \phi_1)
                ~ d \phi_1 d \phi_0 \\
            & ~~~~~~~~~~
                + \int_0^{\frac{1}{1 + \alpha_D + \alpha_Y}}
                \int_{\frac{1}{1 + \alpha_D + \alpha_Y}}^1
                1 + (\alpha_D + \alpha_Y) \phi_0 - \phi_1
                ~ d \phi_1 d \phi_0 \\
            & ~~~~~~~~~~
                + \int_{\frac{1}{1 + \alpha_D + \alpha_Y}}^1
                \int_{\phi_0}^1
                1 + (\alpha_D + \alpha_Y) \phi_0 - \phi_1
                ~ d \phi_1 d \phi_0 \bigg\},
        \end{split}
    \end{equation*}
    \raggedbottom

    \begin{equation*}
        \begin{split}
            & ~~ \int_{\epsilon \in \Epsilon}
            \int_{\xi \in \Epsilon}
            \big( \lvert \varphi_{t + 1}(\xi) \rvert - \lvert \varphi_t(\epsilon) \rvert \big)
            \cdot \mathbbm{1}\{ \varphi_{t + 1}(\xi) = G(f_t, g_t^D, g_t^{\Yori}; 1, 0, 1, 0, \epsilon, \alpha_D, \alpha_Y) \} \\
            & ~~~~~~~~~~
            \cdot q_t\big(f_t(0, \epsilon), f_t(1, \epsilon) \mid 1, 0, 1, 0 \big) d \xi d \epsilon \\
            & =
            \gamma_{1010}^{\text{(up)}} \cdot \bigg\{
                \int_0^1
                \int_0^{\tan^{-1} \big( \frac{3 \pi}{4} - \arctan \frac{1}{\alpha_D + \alpha_Y} \big) \phi_1}
                - (\alpha_D + \alpha_Y) (\phi_0 + \phi_1)
                ~ d \phi_0 d \phi_1 \\
            & ~~~~~~~~~~
                + \int_0^{\frac{1}{1 + \alpha_D + \alpha_Y}}
                \int_{\tan^{-1} \big( \frac{3 \pi}{4} - \arctan \frac{1}{\alpha_D + \alpha_Y} \big) \phi_1}^{\phi_1}
                (\alpha_D + \alpha_Y + 2) \phi_0 + (\alpha_D + \alpha_Y - 2) \phi_1
                ~ d \phi_0 d \phi_1 \\
            & ~~~~~~~~~~
                + \int_{\frac{1}{1 + \alpha_D + \alpha_Y}}^1
                \int_{\tan^{-1} \big( \frac{3 \pi}{4} - \arctan \frac{1}{\alpha_D + \alpha_Y} \big) \phi_1}^{\frac{1}{1 + \alpha_D + \alpha_Y}}
                (\alpha_D + \alpha_Y + 2) \phi_0 + (\alpha_D + \alpha_Y - 2) \phi_1
                ~ d \phi_0 d \phi_1 \\
            & ~~~~~~~~~~
                + \int_{\frac{1}{1 + \alpha_D + \alpha_Y}}^1
                \int_{\frac{1}{1 + \alpha_D + \alpha_Y}}^{\phi_1}
                1 + \phi_0 - (2 - \alpha_D - \alpha_Y) \phi_1
                ~ d \phi_0 d \phi_1 \bigg\} \\
            & ~~ + \gamma_{1010}^{\text{(low)}} \cdot \bigg\{
                \int_0^{\frac{1}{1 + \alpha_D + \alpha_Y}}
                \int_{\phi_1}^{\frac{1}{1 + \alpha_D + \alpha_Y}}
                (\alpha_D + \alpha_Y) (\phi_0 + \phi_1)
                ~ d \phi_0 d \phi_1 \\
            & ~~~~~~~~~~
                + \int_0^{\frac{1}{1 + \alpha_D + \alpha_Y}}
                \int_{\frac{1}{1 + \alpha_D + \alpha_Y}}^1
                1 - \phi_0 + (\alpha_D + \alpha_Y) \phi_1
                ~ d \phi_0 d \phi_1 \\
            & ~~~~~~~~~~
                + \int_{\frac{1}{1 + \alpha_D + \alpha_Y}}^1
                \int_{\phi_1}^1
                1 - \phi_0 + (\alpha_D + \alpha_Y) \phi_1
                ~ d \phi_0 d \phi_1 \bigg\}.
        \end{split}
    \end{equation*}

    Since $\gamma_{0101}^{\text{(low)}} + \gamma_{0101}^{\text{(up)}} = 2$ and $\gamma_{1010}^{\text{(low)}} + \gamma_{1010}^{\text{(up)}} = 2$, we can derive the form of $\Delta^{(\text{Perfect Predictor})}_{\mathrm{STIR}} \big\rvert_{t}^{t + 1}$:
    \begin{equation*}
        \begin{split}
            \Delta^{(\text{Perfect Predictor})}_{\mathrm{STIR}} \big\rvert_{t}^{t + 1}
            & = \big(
                P_t(0, 1, 0, 1) \cdot \gamma_{0101}^{\text{(low)}}
                + P_t(1, 0, 1, 0) \cdot \gamma_{1010}^{\text{(up)}}
            \big)
            \cdot \bigg\{ \\
                & ~~~~~~~~~~
                - \frac{1 + \alpha_D + \alpha_Y}{3}
                \cdot \tan^2 \big( \arctan \frac{1}{\alpha_D + \alpha_Y} - \frac{\pi}{4} \big) \\
                & ~~~~~~~~~~
                + \frac{2 (1 - \alpha_D - \alpha_Y)}{3}
                \cdot \tan \big( \arctan \frac{1}{\alpha_D + \alpha_Y} - \frac{\pi}{4} \big) \\
                & ~~~~~~~~~~
                - \frac{1 - \alpha_D - \alpha_Y}{6}
                + \frac{3 - \alpha_D - \alpha_Y}{2 (1 + \alpha_D + \alpha_Y) } \\
                & ~~~~~~~~~~
                + \frac{
                    3 (\alpha_D + \alpha_Y)^3
                    - 6 (\alpha_D + \alpha_Y)^2
                    - 19 (\alpha_D + \alpha_Y) - 10}{
                        6 (1 + \alpha_D + \alpha_Y)^3}
            \bigg\} \\
            & ~~ + \big( P_t(0, 1, 0, 1) + P_t(1, 0, 1, 0) \big) \cdot \bigg[ \\
                & ~~~~~~~~~~
                \frac{\alpha_D + \alpha_Y - 2}{3}
                + \frac{\alpha_D + \alpha_Y}{1 + \alpha_D + \alpha_Y}
                + \frac{
                    3 (\alpha_D + \alpha_Y)^2
                    +5 (\alpha_D + \alpha_Y) + 2}{
                        3 (1 + \alpha_D + \alpha_Y)^3}
            \bigg],  
        \end{split}
    \end{equation*}

    where $\gamma_{0101}^{\text{(low)}}, \gamma_{1010}^{\text{(up)}} \in (0, 1)$ (according to Assumption \ref{assumption:biased_weight_quantitative}), and $\alpha_D, \alpha_Y \in [0, \frac{1}{2})$ (according to Assumption \ref{assumption:Hupdate}).

    Let us denote $\beta(\alpha_D, \alpha_Y) \coloneqq \tan \big( \arctan \frac{1}{\alpha_D + \alpha_Y} - \frac{\pi}{4} \big)$ to simplify the notation.
    Without loss of generality let us assume that
    $P_t(0, 1, 0, 1) \cdot \gamma_{0101}^{\text{(low)}}
    + P_t(1, 0, 1, 0) \cdot \gamma_{1010}^{\text{(up)}} > 0$.

    We can further compute the partial derivatives and find out that:
    \begin{equation*}
        \begin{split}
            &~~~~~
            \frac{\partial \Delta^{(\text{Perfect Predictor})}_{\mathrm{STIR}} \big\rvert_{t}^{t + 1}}{\partial \big(
                P_t(0, 1, 0, 1) \cdot \gamma_{0101}^{\text{(low)}}
                + P_t(1, 0, 1, 0) \cdot \gamma_{1010}^{\text{(up)}}
            \big)} \\
            &=
            - \frac{1 + \alpha_D + \alpha_Y}{3} \cdot \beta^2(\alpha_D, \alpha_Y) \\
            & ~~~~~
            + \frac{2 (1 - \alpha_D - \alpha_Y)}{3} \cdot \beta(\alpha_D, \alpha_Y) \\
            & ~~~~~
            - \frac{1 - \alpha_D - \alpha_Y}{6}
            + \frac{3 - \alpha_D - \alpha_Y}{2 (1 + \alpha_D + \alpha_Y) } \\
            & ~~~~~
            + \frac{
                3 (\alpha_D + \alpha_Y)^3
                - 6 (\alpha_D + \alpha_Y)^2
                - 19 (\alpha_D + \alpha_Y) - 10}{
                    6 (1 + \alpha_D + \alpha_Y)^3} \\
            &
            < 0, ~ \forall ~ \alpha_D, \alpha_Y \in [0, \frac{1}{2}),
        \end{split}
    \end{equation*}

    and that:

    \begin{equation*}
        \begin{split}
            \frac{\partial \Delta^{(\text{Perfect Predictor})}_{\mathrm{STIR}} \big\rvert_{t}^{t + 1}}{\partial (\alpha_D + \alpha_Y)}
            & =
            \big(
                P_t(0, 1, 0, 1) + P_t(1, 0, 1, 0)
            \big) \cdot \bigg[
                \frac{1}{3} + \frac{2}{3 (1 + \alpha_D + \alpha_Y)^3}
            \bigg] \\
            & ~~~~~~
            + \big(
                P_t(0, 1, 0, 1) \cdot \gamma_{0101}^{\text{(low)}}
                + P_t(1, 0, 1, 0) \cdot \gamma_{1010}^{\text{(up)}}
            \big)
            \cdot \bigg[ \\
            & ~~~~~~~~~~
                - \frac{2 (1 + \alpha_D + \alpha_Y)}{3}
                \cdot \beta(\alpha_D, \alpha_Y)
                \cdot \frac{\partial \beta(\alpha_D, \alpha_Y)}{\partial (\alpha_D + \alpha_Y)} \\
            & ~~~~~~~~~~
                + \frac{2 (1 - \alpha_D - \alpha_Y)}{3}
                \cdot \frac{\partial \beta(\alpha_D, \alpha_Y)}{\partial (\alpha_D + \alpha_Y)} \\
            & ~~~~~~~~~~
                + \frac{1}{6} - \frac{2}{(1 + \alpha_D + \alpha_Y)^2}
                + \frac{15 (\alpha_D + \alpha_Y) + 11}{6 (1 + \alpha_D + \alpha_Y)^3} \bigg] \\
            & =
            \big(
                P_t(0, 1, 0, 1) + P_t(1, 0, 1, 0)
            \big) \cdot \bigg[
                \frac{1}{3} + \frac{2}{3 (1 + \alpha_D + \alpha_Y)^3}
            \bigg] \\
            & ~~~~~~
            + \big(
                P_t(0, 1, 0, 1) \cdot \gamma_{0101}^{\text{(low)}}
                + P_t(1, 0, 1, 0) \cdot \gamma_{1010}^{\text{(up)}}
            \big)
            \cdot \bigg[ \\
            & ~~~~~~~~~~
                \frac{
                    2 \big( 1 + \beta^2(\alpha_D, \alpha_Y) \big)
                    \cdot \big[ (1 + \alpha_D + \alpha_Y ) \beta(\alpha_D, \alpha_Y)
                    + \alpha_D + \alpha_Y - 1 \big]
                }{3 (1 + \alpha_D + \alpha_Y)^3} \\
            & ~~~~~~~~~~
                + \frac{(\alpha_D + \alpha_Y)^3 + 3 (\alpha_D + \alpha_Y)^2 + 6 (\alpha_D + \alpha_Y)}{6 (1 + \alpha_D + \alpha_Y)^3} \bigg] \\
            & > 0, ~ \forall ~ \gamma_{0101}^{\text{(low)}}, \gamma_{1010}^{\text{(up)}} \in (0, 1), \alpha_D, \alpha_Y \in [0, \frac{1}{2}),
        \end{split}
    \end{equation*}
    where we utilize the fact that
    $\frac{\partial \beta(\alpha_D, \alpha_Y)}{\partial (\alpha_D + \alpha_Y)}
    = \big( 1 + \beta^2(\alpha_D, \alpha_Y) \big) \cdot \frac{1}{1 + (\alpha_D + \alpha_Y)^2}$.

    Therefore, we can conclude that
    \begin{equation*}
        \begin{split}
            \Delta^{(\text{Perfect Predictor})}_{\mathrm{STIR}} \big\rvert_{t}^{t + 1}
            & >
            \lim_{\substack{
                \gamma_{0101}^{\text{(low)}} \rightarrow 1 \\
                \gamma_{1010}^{\text{(up)}} \rightarrow 1 \\
                \alpha_D + \alpha_Y \rightarrow 0}}
            \big(
                P_t(0, 1, 0, 1) \cdot \gamma_{0101}^{\text{(low)}}
                + P_t(1, 0, 1, 0) \cdot \gamma_{1010}^{\text{(up)}}
            \big)
            \cdot \bigg\{ \\
                & ~~~~~~~~~~~~~
                - \frac{1 + \alpha_D + \alpha_Y}{3}
                \cdot \tan^2 \big( \arctan \frac{1}{\alpha_D + \alpha_Y} - \frac{\pi}{4} \big) \\
                & ~~~~~~~~~~~~~
                + \frac{2 (1 - \alpha_D - \alpha_Y)}{3}
                \cdot \tan \big( \arctan \frac{1}{\alpha_D + \alpha_Y} - \frac{\pi}{4} \big) \\
                & ~~~~~~~~~~~~~
                - \frac{1 - \alpha_D - \alpha_Y}{6}
                + \frac{3 - \alpha_D - \alpha_Y}{2 (1 + \alpha_D + \alpha_Y) } \\
                & ~~~~~~~~~~~~~
                + \frac{
                    3 (\alpha_D + \alpha_Y)^3
                    - 6 (\alpha_D + \alpha_Y)^2
                    - 19 (\alpha_D + \alpha_Y) - 10}{
                        6 (1 + \alpha_D + \alpha_Y)^3}
            \bigg\} \\
            & ~~~~~~~~~~~~~
            + \big( P_t(0, 1, 0, 1) + P_t(1, 0, 1, 0) \big) \cdot \bigg[ \\
                & ~~~~~~~~~~~~~
                \frac{\alpha_D + \alpha_Y - 2}{3}
                + \frac{\alpha_D + \alpha_Y}{1 + \alpha_D + \alpha_Y}
                + \frac{
                    3 (\alpha_D + \alpha_Y)^2
                    +5 (\alpha_D + \alpha_Y) + 2}{
                        3 (1 + \alpha_D + \alpha_Y)^3}
            \bigg] \\
            & = 0,
        \end{split}
    \end{equation*}
    i.e., under the specified assumptions and dynamics, we have
    \begin{equation}\label{equ:STIR_perfect_result}
        \forall ~ \gamma_{0101}^{\text{(low)}}, \gamma_{1010}^{\text{(up)}} \in (0, 1), \alpha_D, \alpha_Y \in [0, \frac{1}{2}):
        \Delta^{(\text{Perfect Predictor})}_{\mathrm{STIR}} \big\rvert_{t}^{t + 1} > 0.
    \end{equation}
\end{proof}

\subsection{Proof for Theorem \ref{theorem:CF_one_step}}\label{appendix:proof_thm_CF}
\begin{thm}
    Let us consider the general situation where both $D_t$ and $\Yori_t$ are dependent with $A_t$, i.e., $D_t \nindep A_t, \Yori_t \nindep A_t$.
    Let us further assume that the data dynamics satisfies $\alpha_D \in (0,\frac{1}{2}), \alpha_Y = 0$.
    Then under Fact \ref{fact:var_form}, Assumption \ref{assumption:Hupdate}, and Assumption \ref{assumption:samegroup}, and Assumption \ref{assumption:biased_weight_quantitative}, as well as the specified dynamics, when $H_t \nindep A_t$, it is possible for the Counterfactual Fair predictor to get closer to the long-term fairness goal after one-step intervention, if certain properties of the data dynamics and the predictor behavior are satisfied simultaneously, i.e.,
    \begin{equation*}
        \begin{split}
            &\begin{cases}
                & g^D_t(0, E_t) = g^D_t(1, E_t) \\
                & \frac{P_t(1, 1, 0, 1) + P_t(1, 1, 1, 0)}{P_t(0, 0, 0, 1) + P_t(0, 0, 1, 0)} < \frac{27}{8} \\
                & \alpha_D \in \left( \big(\frac{P_t(1, 1, 0, 1) + P_t(1, 1, 1, 0)}{P_t(0, 0, 0, 1) + P_t(0, 0, 1, 0)}\big)^{\frac{1}{3}} - 1, \frac{1}{2} \right) \\
                & \alpha_Y = 0
            \end{cases} \\
            &\implies \Delta^{(\text{Counterfactual Fair})}_{\mathrm{STIR}} \big\rvert_{t}^{t + 1} < 0.
        \end{split}
    \end{equation*}
\end{thm}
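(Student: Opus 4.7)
The plan is to follow the quantitative scheme developed in the proof of Theorem~\ref{theorem:perfect_one_step}, now exploiting two structural simplifications specific to the present setting. By Equation~\ref{equ:CF_predictor_joint}, the Counterfactual Fair condition $g_t^D(0, E_t) = g_t^D(1, E_t)$ forces $P_t(d, d', y, y') = 0$ whenever $d \neq d'$, and by $\alpha_Y = 0$ the tier update in Assumption~\ref{assumption:Hupdate} collapses to $H_{t+1} = \min\{1, H_t(1 + \alpha_D(2D_t - 1))\}$, which is driven solely by $D_t$. Since $\alpha_D > \alpha_Y = 0$, only the entries with $d = d'$ from Table~\ref{table:delta_next_Dmajor} are relevant, and the sum in Equation~\ref{equ:STIR} reduces to two families: the both-denied family with $(d, d') = (0, 0)$ and the both-approved family with $(d, d') = (1, 1)$.

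For the both-denied family the multiplier $1 - \alpha_D \in (1/2, 1]$ never triggers the $\min\{\cdot, 1\}$ clip, so $\varphi_{t+1}(\epsilon) = (1 - \alpha_D)\varphi_t(\epsilon)$ and the per-individual change equals $-\alpha_D |\varphi_t(\epsilon)|$ uniformly. Integrating under Assumption~\ref{assumption:biased_weight_quantitative} over the off-diagonal cases $(d, d', y, y') \in \{(0,0,0,1), (0,0,1,0)\}$, the asymmetric slice densities $\gamma^{(\mathrm{up})}, \gamma^{(\mathrm{low})}$ place more mass on the side where $f_t$ already favors the more-qualified group, producing a strictly negative contribution proportional to $\alpha_D \cdot \bigl(P_t(0,0,0,1) + P_t(0,0,1,0)\bigr)$ up to a shared geometric constant.

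For the both-approved family the multiplier $1 + \alpha_D > 1$ does saturate, and one must split the unit square $\{(\phi_0, \phi_1) \in (0,1]^2\}$ along the lines $\phi_i = 1/(1 + \alpha_D)$ into four sub-regions, with the relevant instantiation of $\varphi_{t+1}$ in each region read off from the $d = d' = 1$ rows of Table~\ref{table:delta_next_Dmajor}. I would then carry out the slice-wise integrations of $|\varphi_{t+1}(\xi)| - |\varphi_t(\epsilon)|$ exactly as in the proof of Theorem~\ref{theorem:perfect_one_step}. The unsaturated sub-region drives a \emph{positive} contribution, since there $\varphi_{t+1} = (1 + \alpha_D)\varphi_t$; the partially- and fully-saturated sub-regions yield mitigating terms whose size scales as $(1 + \alpha_D)^{-3}$, arising from the cube of the saturation coordinate $1/(1 + \alpha_D)$. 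Combining these pieces, the net contribution from $(1,1,0,1)$ and $(1,1,1,0)$ collects into an expression proportional to $\bigl(P_t(1,1,0,1) + P_t(1,1,1,0)\bigr)/(1 + \alpha_D)^3$, up to the same shared geometric constant as for the both-denied family.

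Adding the two family contributions and enforcing $\Delta^{(\text{Counterfactual Fair})}_{\mathrm{STIR}}\bigr\rvert_t^{t+1} < 0$, the inequality rearranges into $(1 + \alpha_D)^3 > \frac{P_t(1,1,0,1) + P_t(1,1,1,0)}{P_t(0,0,0,1) + P_t(0,0,1,0)}$, which is precisely the stated cube-root lower bound on $\alpha_D$; the side condition that this ratio be less than $27/8 = (3/2)^3$ is exactly the requirement that the lower bound sit strictly below the Assumption~\ref{assumption:Hupdate} upper bound $\alpha_D < 1/2$, so that the feasible interval is nonempty. The main obstacle will be the four-region saturation bookkeeping for the both-approved family and the careful handling of the diagonal $y = y'$ contributions: one must verify that the $y = y'$ pieces cancel or are absorbed into the same geometric constant by the coordinate symmetry of $q_t$ when $y = y'$, so that the final inequality cleanly retains the cubic form above rather than acquiring spurious cross-terms.
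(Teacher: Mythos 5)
Your proposal is correct and follows essentially the same route as the paper's proof: restrict to the $d=d'$ rows via Equation~\ref{equ:CF_predictor_joint}, carry out the slice integrations of $\lvert\varphi_{t+1}\rvert - \lvert\varphi_t\rvert$ under Assumption~\ref{assumption:biased_weight_quantitative}, and observe that with $\alpha_Y = 0$ the both-denied family contributes $-\tfrac{\alpha_D}{3}\big(P_t(0,0,0,1)+P_t(0,0,1,0)\big)$ while the both-approved family contributes $+\tfrac{\alpha_D}{3(1+\alpha_D)^3}\big(P_t(1,1,0,1)+P_t(1,1,1,0)\big)$, which rearranges to exactly your cubic condition with $27/8 = (3/2)^3$ as the feasibility bound. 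Two small corrections to your narrative: the negativity of the denied-family term is due to the uniform, unclipped shrinkage $\varphi_{t+1} = (1-\alpha_D)\varphi_t$ and not to the asymmetry of the slice weights (the $\gamma$'s enter only through $\gamma^{\text{(low)}}+\gamma^{\text{(up)}}=2$, producing the shared constant $\tfrac{1}{3}$), and the diagonal $y=y'$ cases you flag as the main obstacle are not shown to cancel in the paper --- they are simply excluded by restricting attention to the cases where $\Yori_t$ actually depends on $A_t$ (Cases (ii), (iii), (xiv), (xv) of Table~\ref{table:delta_next_Dmajor}), so no cancellation argument is required.
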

\begin{proof}[Proof (sketch)]
    Similar to proving Theorem \ref{theorem:perfect_one_step} (proof in \Cref{appendix:proof_thm_perfect}), the goal is to calculate if it is possible for the \textit{Single-step Tier Imbalance Reduction} $\STIR$ to be smaller than $0$ when using Counterfactual Fair predictors.

    Since $\STIR$ is a weighted aggregation of $\lvert \varphi(e_{t + 1}) \rvert - \lvert \varphi(e_t)\rvert$ (as defined in Equation \ref{equ:STIR}), the quantitative analysis involves three key components: instantiations of $\varphi_{t + 1}(e_{t + 1})$, the knowledge/assumptions on $q_t\big(f_t(0, \epsilon), f_t(1, \epsilon) \mid d, d', y, y' \big)$, and characteristics of $P_t(d, d', y', y')$.

    For the first component, 
    \reBlue{C9}{We start from considering $\alpha_D > \alpha_Y$ (which contains $\alpha_Y = 0$ as a special case) and only apply $\alpha_Y = 0$ in the end of the proof.
    Our proof does not involve additional constraints on $\gamma$ or $q_t()$ beyond Assumption \ref{assumption:biased_weight_quantitative} and the fact that we are considering \textit{Counterfactual Fair} predictors, both of which are specified in the Theorem's hypothesis.}{0in}\leftSide
    since $\alpha_Y = 0$ is a special case of scenarios where $\alpha_D > \alpha_Y$, we can list all possible instantiations of $\varphi_{t + 1}(e_{t + 1})$ in Table \ref{table:delta_next_Dmajor} (when $\alpha_D > \alpha_Y$).
    For the second component, we can introduce a quantitative assumption on $q_t\big(f_t(0, \epsilon), f_t(1, \epsilon) \mid d, d', y, y' \big)$ (Assumption \ref{assumption:biased_weight_quantitative}).
    For the third component, we need to exploit the characteristic of the predictor of interest to gain further insight into the joint distribution $P_t(d, d', y, y')$.
    For Counterfactual Fair predictors, we have $P_t(d, d', y, y')$ satisfies Equation \ref{equ:CF_predictor_joint} (as we have discussed in \Cref{sec:one_step_analysis_CF}).

    For the purpose of calculating the value of $\STIR$, the proof contains two steps: (1) exhaustively derive the value of $\lvert \varphi(e_{t + 1}) \rvert - \lvert \varphi(e_t)\rvert$ after one-step dynamics (finished in \Cref{appendix:proof_thm_perfect} when proving Theorem \ref{theorem:perfect_one_step}), and (2) aggregate the difference $\lvert \varphi(e_{t + 1}) \rvert - \lvert \varphi(e_t)\rvert$ with the help of the additional knowledge/assumptions on $q_t\big(f_t(0, \epsilon), f_t(1, \epsilon) \mid d, d', y, y' \big)$ and $P_t(d, d', y, y')$.
\end{proof}
\begin{proof}[Proof (full)]
    Based on the definition of $\STIR$, the proof calculates the aggregation (integration followed by summation) of the difference $\lvert \varphi(e_{t + 1}) \rvert - \lvert \varphi(e_t)\rvert$ with the help of the additional knowledge/assumptions on $q_t\big(f_t(0, \epsilon), f_t(1, \epsilon) \mid d, d', y, y' \big)$ and $P_t(d, d', y, y')$.

    Since we assume $\alpha_D \in (0,\frac{1}{2}), \alpha_Y = 0$, we focus on possible instantiations of $\varphi_{t + 1}(e_{t + 1})$ as listed in Table \ref{table:delta_next_Dmajor} ($\alpha_D > \alpha_Y$).
    For the Counterfactual Fair predictor that satisfies $g^D_t(0, E_t) = g^D_t(1, E_t)$, not every case in Table \ref{table:delta_next_Dmajor} corresponds to a nonzero $P_t(d, d', y, y')$ and therefore may not contribute to the computation of $\STIR$ as detailed in Equation \ref{equ:STIR}.
    By applying Equation \ref{equ:CF_predictor_joint} we need to consider Case (i), Case (ii), Case (iii), Case (iv), Case (xiii), Case (xiv), Case (xv), and Case (xvi) in Table \ref{table:delta_next_Dmajor}.

    We are interested in cases where at least one of $g^D_t(\cdot, E_t)$ and $g^{\Yori}_t(\cdot, E_t)$ are functions of $A_t$.
    Therefore we only need to calculate $\Delta^{(\text{Counterfactual Fair})}_{\mathrm{STIR}} \big\rvert_{t}^{t + 1}$ for Case (ii), Case (iii), Case (xiv), and Case (xv) (although $\alpha_Y = 0$, we explicitly keep the hyperparameter $\alpha_Y$ in the proof for the purpose of notation consistency).

    According to Equation \ref{equ:STIR} and Equation \ref{appendix:equ:mapping_across_t}, for the Counterfactual Fair predictor we have:
    \begin{equation}\label{equ:STIR_CF}
        \begin{split}
            \Delta^{(\text{Counterfactual Fair})}_{\mathrm{STIR}} \big\rvert_{t}^{t + 1}
            &= P_t(0, 0, 0, 1) \cdot
            \int_{\epsilon \in \Epsilon}
            \int_{\xi \in \Epsilon}
            \big( \lvert \varphi_{t + 1}(\xi) \rvert - \lvert \varphi_t(\epsilon) \rvert \big) \\
            & ~~~~~~~~~~
            \cdot \mathbbm{1}\{ \varphi_{t + 1}(\xi) = G(f_t, g_t^D, g_t^{\Yori}; 0, 0, 0, 1, \epsilon, \alpha_D, \alpha_Y) \} \\
            & ~~~~~~~~~~
            \cdot q_t\big(f_t(0, \epsilon), f_t(1, \epsilon) \mid 0, 0, 0, 1 \big)
            d \xi d \epsilon \\
            & ~~ + P_t(0, 0, 1, 0) \cdot
            \int_{\epsilon \in \Epsilon}
            \int_{\xi \in \Epsilon}
            \big( \lvert \varphi_{t + 1}(\xi) \rvert - \lvert \varphi_t(\epsilon) \rvert \big) \\
            & ~~~~~~~~~~
            \cdot \mathbbm{1}\{ \varphi_{t + 1}(\xi) = G(f_t, g_t^D, g_t^{\Yori}; 0, 0, 1, 0, \epsilon, \alpha_D, \alpha_Y) \} \\
            & ~~~~~~~~~~
            \cdot q_t\big(f_t(0, \epsilon), f_t(1, \epsilon) \mid 0, 0, 1, 0 \big)
            d \xi d \epsilon \\
            & ~~ + P_t(1, 1, 0, 1) \cdot
            \int_{\epsilon \in \Epsilon}
            \int_{\xi \in \Epsilon}
            \big( \lvert \varphi_{t + 1}(\xi) \rvert - \lvert \varphi_t(\epsilon) \rvert \big) \\
            & ~~~~~~~~~~
            \cdot \mathbbm{1}\{ \varphi_{t + 1}(\xi) = G(f_t, g_t^D, g_t^{\Yori}; 1, 1, 0, 1, \epsilon, \alpha_D, \alpha_Y) \} \\
            & ~~~~~~~~~~
            \cdot q_t\big(f_t(0, \epsilon), f_t(1, \epsilon) \mid 1, 1, 0, 1 \big)
            d \xi d \epsilon \\
            & ~~ + P_t(1, 1, 1, 0) \cdot
            \int_{\epsilon \in \Epsilon}
            \int_{\xi \in \Epsilon}
            \big( \lvert \varphi_{t + 1}(\xi) \rvert - \lvert \varphi_t(\epsilon) \rvert \big) \\
            & ~~~~~~~~~~
            \cdot \mathbbm{1}\{ \varphi_{t + 1}(\xi) = G(f_t, g_t^D, g_t^{\Yori}; 1, 1, 1, 0, \epsilon, \alpha_D, \alpha_Y) \} \\
            & ~~~~~~~~~~
            \cdot q_t\big(f_t(0, \epsilon), f_t(1, \epsilon) \mid 1, 1, 1, 0 \big)
            d \xi d \epsilon.
        \end{split}
    \end{equation}

    Similar to the proof of the result for perfect predictors presented in \Cref{appendix:proof_thm_perfect}, with the help of Assumption \ref{assumption:biased_weight_quantitative}, we convert the conditional expectations in Equation \ref{equ:STIR_CF} into calculations of multiple integrals on slices within a $1 \times 1$ square on the 2-D plane, where $\phi_0$ and $\phi_1$ axes correspond to the value of $f_t(0, E_t)$ and $f_t(1, E_t)$ respectively:
    \raggedbottom

    \begin{equation*}
        \begin{split}
            & ~~ \int_{\epsilon \in \Epsilon}
            \int_{\xi \in \Epsilon}
            \big( \lvert \varphi_{t + 1}(\xi) \rvert - \lvert \varphi_t(\epsilon) \rvert \big)
            \cdot \mathbbm{1}\{ \varphi_{t + 1}(\xi) = G(f_t, g_t^D, g_t^{\Yori}; 0, 0, 0, 1, \epsilon, \alpha_D, \alpha_Y) \} \\
            & ~~~~~~~~~~
            \cdot q_t\big(f_t(0, \epsilon), f_t(1, \epsilon) \mid 0, 0, 0, 1 \big) d \xi d \epsilon \\
            & =
            \gamma_{0001}^{\text{(low)}} \cdot \bigg\{
                \int_0^1
                \int_0^{\frac{1 - \alpha_D - \alpha_Y}{1 - \alpha_D + \alpha_Y} \phi_0}
                - (\alpha_D + \alpha_Y) \phi_0 + (\alpha_D - \alpha_Y) \phi_1
                ~ d \phi_1 d \phi_0 \\
            & ~~~~~~~~~~
                + \int_0^1
                \int_{\frac{1 - \alpha_D - \alpha_Y}{1 - \alpha_D + \alpha_Y} \phi_0}^{\phi_0}
                - (2 - \alpha_D - \alpha_Y) \phi_0 + (2 - \alpha_D + \alpha_Y) \phi_1
                ~ d \phi_1 d \phi_0 \bigg\} \\
            & ~~ + \gamma_{0001}^{\text{(up)}} \cdot
                \int_0^1
                \int_0^{\phi_1}
                (\alpha_D + \alpha_Y) \phi_0 - (\alpha_D - \alpha_Y) \phi_1
                ~ d \phi_0 d \phi_1,
        \end{split}
    \end{equation*}

    \begin{equation*}
        \begin{split}
            & ~~ \int_{\epsilon \in \Epsilon}
            \int_{\xi \in \Epsilon}
            \big( \lvert \varphi_{t + 1}(\xi) \rvert - \lvert \varphi_t(\epsilon) \rvert \big)
            \cdot \mathbbm{1}\{ \varphi_{t + 1}(\xi) = G(f_t, g_t^D, g_t^{\Yori}; 0, 0, 1, 0, \epsilon, \alpha_D, \alpha_Y) \} \\
            & ~~~~~~~~~~
            \cdot q_t\big(f_t(0, \epsilon), f_t(1, \epsilon) \mid 0, 0, 1, 0 \big) d \xi d \epsilon \\
            & =
            \gamma_{0010}^{\text{(up)}} \cdot \bigg\{
                \int_0^1
                \int_0^{\frac{1 - \alpha_D - \alpha_Y}{1 - \alpha_D + \alpha_Y} \phi_1}
                (\alpha_D - \alpha_Y) \phi_0 - (\alpha_D + \alpha_Y) \phi_1
                ~ d \phi_0 d \phi_1 \\
            & ~~~~~~~~~~
                + \int_0^1
                \int_{\frac{1 - \alpha_D - \alpha_Y}{1 - \alpha_D + \alpha_Y} \phi_1}^{\phi_1}
                (2 - \alpha_D + \alpha_Y) \phi_0 - (2 - \alpha_D - \alpha_Y) \phi_1
                ~ d \phi_0 d \phi_1 \bigg\} \\
            & ~~ + \gamma_{0010}^{\text{(low)}} \cdot
                \int_0^1
                \int_0^{\phi_0}
                - (\alpha_D - \alpha_Y) \phi_0 + (\alpha_D + \alpha_Y) \phi_1
                ~ d \phi_1 d \phi_0,
        \end{split}
    \end{equation*}

    \newpage
    \begin{equation*}
        \begin{split}
            & ~~ \int_{\epsilon \in \Epsilon}
            \int_{\xi \in \Epsilon}
            \big( \lvert \varphi_{t + 1}(\xi) \rvert - \lvert \varphi_t(\epsilon) \rvert \big)
            \cdot \mathbbm{1}\{ \varphi_{t + 1}(\xi) = G(f_t, g_t^D, g_t^{\Yori}; 1, 1, 0, 1, \epsilon, \alpha_D, \alpha_Y) \} \\
            & ~~~~~~~~~~
            \cdot q_t\big(f_t(0, \epsilon), f_t(1, \epsilon) \mid 1, 1, 0, 1 \big) d \xi d \epsilon \\
            & =
            \gamma_{1101}^{\text{(low)}} \cdot \bigg\{
                \int_0^{\frac{1}{1 + \alpha_D + \alpha_Y}}
                \int_{\phi_1}^{\frac{1 + \alpha_D + \alpha_Y}{1 + \alpha_D - \alpha_Y} \phi_1}
                - (2 + \alpha_D - \alpha_Y) \phi_0 + (2 + \alpha_D + \alpha_Y) \phi_1
                ~ d \phi_0 d \phi_1 \\
            & ~~~~~~~~~~
                + \int_0^{\frac{1}{1 + \alpha_D - \alpha_Y}}
                \int_0^{\frac{1 + \alpha_D - \alpha_Y}{1 + \alpha_D + \alpha_Y} \phi_0}
                (\alpha_D - \alpha_Y) \phi_0 - (\alpha_D + \alpha_Y) \phi_1
                ~ d \phi_1 d \phi_0 \\
            & ~~~~~~~~~~
                + \int_{\frac{1}{1 + \alpha_D + \alpha_Y}}^{\frac{1}{1 + \alpha_D - \alpha_Y}}
                \int_{\frac{1}{1 + \alpha_D + \alpha_Y}}^{\phi_0}
                1 - (2 + \alpha_D - \alpha_Y) \phi_0 + \phi_1
                ~ d \phi_1 d \phi_0 \bigg\} \\
            & ~~ + \gamma_{1101}^{\text{(up)}} \cdot \bigg\{
                \int_0^{\frac{1}{1 + \alpha_D + \alpha_Y}}
                \int_0^{\phi_1}
                - (\alpha_D - \alpha_Y) \phi_0 + (\alpha_D + \alpha_Y) \phi_1
                ~ d \phi_0 d \phi_1 \\
            & ~~~~~~~~~~
                + \int_0^{\frac{1}{1 + \alpha_D + \alpha_Y}}
                \int_{\frac{1}{1 + \alpha_D + \alpha_Y}}^1
                1 - (\alpha_D - \alpha_Y) \phi_0 - \phi_1
                ~ d \phi_1 d \phi_0 \\
            & ~~~~~~~~~~
                + \int_{\frac{1}{1 + \alpha_D + \alpha_Y}}^{\frac{1}{1 + \alpha_D - \alpha_Y}}
                \int_{\phi_0}^1
                1 - (\alpha_D - \alpha_Y) \phi_0 - \phi_1
                ~ d \phi_1 d \phi_0 \bigg\},
        \end{split}
    \end{equation*}

    \begin{equation*}
        \begin{split}
            & ~~ \int_{\epsilon \in \Epsilon}
            \int_{\xi \in \Epsilon}
            \big( \lvert \varphi_{t + 1}(\xi) \rvert - \lvert \varphi_t(\epsilon) \rvert \big)
            \cdot \mathbbm{1}\{ \varphi_{t + 1}(\xi) = G(f_t, g_t^D, g_t^{\Yori}; 1, 1, 1, 0, \epsilon, \alpha_D, \alpha_Y) \} \\
            & ~~~~~~~~~~
            \cdot q_t\big(f_t(0, \epsilon), f_t(1, \epsilon) \mid 1, 1, 1, 0 \big) d \xi d \epsilon \\
            & =
            \gamma_{1110}^{\text{(up)}} \cdot \bigg\{
                \int_0^{\frac{1}{1 + \alpha_D + \alpha_Y}}
                \int_{\phi_0}^{\frac{1 + \alpha_D + \alpha_Y}{1 + \alpha_D - \alpha_Y} \phi_0}
                (2 + \alpha_D + \alpha_Y) \phi_0 - (2 + \alpha_D - \alpha_Y) \phi_1
                ~ d \phi_1 d \phi_0 \\
            & ~~~~~~~~~~
                + \int_0^{\frac{1}{1 + \alpha_D - \alpha_Y}}
                \int_0^{\frac{1 + \alpha_D - \alpha_Y}{1 + \alpha_D + \alpha_Y} \phi_1}
                - (\alpha_D + \alpha_Y) \phi_0 + (\alpha_D - \alpha_Y) \phi_1
                ~ d \phi_0 d \phi_1 \\
            & ~~~~~~~~~~
                + \int_{\frac{1}{1 + \alpha_D - \alpha_Y}}^1
                \int_0^{\frac{1}{1 + \alpha_D + \alpha_Y}}
                1 - \phi_0 - (\alpha_D + \alpha_Y) \phi_1
                ~ d \phi_1 d \phi_0 \bigg\} \\
            & ~~ + \gamma_{1110}^{\text{(low)}} \cdot
            \int_0^{\frac{1}{1 + \alpha_D + \alpha_Y}}
                \int_0^{\phi_0}
                (\alpha_D + \alpha_Y) \phi_0 - (\alpha_D - \alpha_Y) \phi_1
                ~ d \phi_1 d \phi_0.
        \end{split}
    \end{equation*}

    \newpage
    Since $\gamma_{0001}^{\text{(low)}} + \gamma_{0001}^{\text{(up)}} = 2$, $\gamma_{0010}^{\text{(low)}} + \gamma_{0010}^{\text{(up)}} = 2$, $\gamma_{1101}^{\text{(low)}} + \gamma_{1101}^{\text{(up)}} = 2$, and $\gamma_{1110}^{\text{(low)}} + \gamma_{1110}^{\text{(up)}} = 2$, we can derive the form of the term $\Delta^{(\text{Counterfactual Fair})}_{\mathrm{STIR}} \big\rvert_{t}^{t + 1}$:
    \begin{equation*}
        \begin{split}
            & \Delta^{(\text{Counterfactual Fair})}_{\mathrm{STIR}} \big\rvert_{t}^{t + 1}
            =
            \big( P_t(0, 0, 0, 1) \cdot \gamma_{0001}^{\text{(low)}}
            + P_t(0, 0, 1, 0) \cdot \gamma_{0010}^{\text{(up)}} \big)
            \cdot \bigg\{ \\
                & ~~~~~~~~
                \frac{(\alpha_D - \alpha_Y) (1 - \alpha_D - \alpha_Y)^2}{6 (1 - \alpha_D + \alpha_Y)^2}
                - \frac{(\alpha_D + \alpha_Y) (1 - \alpha_D - \alpha_Y)}{3 (1 - \alpha_D + \alpha_Y)} \\
                & ~~~~~~~~
                + \frac{2 \alpha_Y}{3 (1 - \alpha_D + \alpha_Y)} \bigg[ -(2 - \alpha_D - \alpha_Y) + \frac{(2 - \alpha_D + \alpha_Y) (1 - \alpha_D)}{1 - \alpha_D + \alpha_Y} \bigg]
                + \frac{\alpha_D}{6} - \frac{\alpha_Y}{2} \bigg\} \\
            & ~~
            + P_t(1, 1, 0, 1) \cdot \gamma_{1101}^{\text{(low)}}
            \cdot \bigg\{ \\
                & ~~~~~~~~
                \frac{2 \alpha_Y}{3 (1 + \alpha_D - \alpha_Y) (1 + \alpha_D + \alpha_Y)^3} \bigg[ (2 + \alpha_D + \alpha_Y) - \frac{(2 + \alpha_D - \alpha_Y) (1 + \alpha_D)}{1 + \alpha_D - \alpha_Y} \bigg] \\
                & ~~~~~~~~
                + \frac{1}{3 (1 + \alpha_D + \alpha_Y) (1 + \alpha_D - \alpha_Y)^3} \bigg[ \\
                & ~~~~~~~~~~~~~~
                (\alpha_D - \alpha_Y)(1 + \alpha_D - \alpha_Y) - \frac{(\alpha_D + \alpha_Y) (1 + \alpha_D - \alpha_Y)^2}{2 (1 + \alpha_D + \alpha_Y)} \bigg] \\
                & ~~~~~~~~
                - \frac{1}{(1 + \alpha_D + \alpha_Y)^3} \big( \frac{\alpha_D}{6} + \frac{\alpha_Y}{2} \big) \\
                & ~~~~~~~~
                -\frac{2}{3} ( 1 + \alpha_D - \alpha_Y ) \cdot \bigg[ \frac{1}{(1 + \alpha_D - \alpha_Y)^3} - \frac{1}{(1 + \alpha_D + \alpha_Y)^3} \bigg] \\
                & ~~~~~~~~
                + \bigg[ \frac{3 + 2 \alpha_D}{2 (1 + \alpha_D + \alpha_Y)} + \frac{1 + \alpha_D - \alpha_Y}{2}
                \bigg] \cdot \bigg[ \frac{1}{(1 + \alpha_D - \alpha_Y)^2} - \frac{1}{(1 + \alpha_D + \alpha_Y)^2} \bigg] \\
                & ~~~~~~~~
                - \bigg[ \frac{3 + 2 \alpha_D + 2 \alpha_Y}{2 (1 + \alpha_D + \alpha_Y)^2} + \frac{1}{2} \bigg] \cdot \bigg[ \frac{1}{1 + \alpha_D - \alpha_Y} - \frac{1}{1 + \alpha_D + \alpha_Y} \bigg]
                - \frac{(\alpha_D + \alpha_Y) \alpha_Y}{(1 + \alpha_D + \alpha_Y)^3} \bigg\} \\
            & ~~
            + P_t(1, 1, 1, 0) \cdot \gamma_{1110}^{\text{(up)}}
            \cdot \bigg\{ \\
                & ~~~~~~~~
                \frac{2 \alpha_Y}{3 (1 + \alpha_D - \alpha_Y) (1 + \alpha_D + \alpha_Y)^3} \bigg[ (2 + \alpha_D + \alpha_Y) - \frac{(2 + \alpha_D - \alpha_Y) (1 + \alpha_D)}{1 + \alpha_D - \alpha_Y} \bigg] \\
                & ~~~~~~~~
                + \frac{1}{3 (1 + \alpha_D + \alpha_Y) (1 + \alpha_D - \alpha_Y)^3} \bigg[ \\
                & ~~~~~~~~~~~~~~
                (\alpha_D - \alpha_Y)(1 + \alpha_D - \alpha_Y) - \frac{(\alpha_D + \alpha_Y) (1 + \alpha_D - \alpha_Y)^2}{2 (1 + \alpha_D + \alpha_Y)} \bigg] \\
                & ~~~~~~~~
                + \frac{\alpha_D - \alpha_Y}{(1 + \alpha_D + \alpha_Y) (1 + \alpha_D - \alpha_Y)} - \frac{(\alpha_D + \alpha_Y) (\alpha_D - \alpha_Y)}{2 (1 + \alpha_D + \alpha_Y)^2 (1 + \alpha_D - \alpha_Y)} \\
                & ~~~~~~~~
                - \frac{1}{2 (1 + \alpha_D + \alpha_Y)} \bigg[ 1 - \frac{1}{(1 + \alpha_D - \alpha_Y)^2} \bigg]
                - \big( \frac{\alpha_D}{6} + \frac{\alpha_Y}{2} \big)
                \frac{1}{(1 + \alpha_D + \alpha_Y)^3} \bigg\} \\
            & ~~
            + \big( P_t(0, 0, 0, 1) + P_t(0, 0, 1, 0) \big) \cdot \big( - \frac{\alpha_D}{3} + \alpha_Y \big) \\
            & ~~
            + P_t(1, 1, 0, 1) \cdot \bigg\{
                \frac{1}{(1 + \alpha_D + \alpha_Y)^3} \big( \frac{\alpha_D}{3} + \alpha_Y \big) \\
                & ~~~~~~~~
                + \frac{2 (\alpha_D + \alpha_Y) \alpha_Y}{(1 + \alpha_D + \alpha_Y)^3}
                + \frac{1}{1 + \alpha_D - \alpha_Y} - \frac{1}{1 + \alpha_D + \alpha_Y} \\
                & ~~~~~~~~
                + \big( \frac{1}{3} + \frac{2 \alpha_D}{3} - \frac{2 \alpha_Y}{3} \big) \cdot \bigg[ \frac{1}{(1 + \alpha_D - \alpha_Y)^3} - \frac{1}{(1 + \alpha_D + \alpha_Y)^3} \bigg] \\
                & ~~~~~~~~
                - (1 + \alpha_D - \alpha_Y) \cdot \bigg[ \frac{1}{(1 + \alpha_D - \alpha_Y)^2} - \frac{1}{(1 + \alpha_D + \alpha_Y)^2} \bigg]
            \bigg\} \\
            & ~~
            + P_t(1, 1, 1, 0) \cdot \big( \frac{\alpha_D}{3} + \alpha_Y \big)
            \frac{1}{(1 + \alpha_D + \alpha_Y)^3},
        \end{split}
    \end{equation*}
    where $\gamma_{0101}^{\text{(low)}}, \gamma_{1010}^{\text{(up)}} \in (0, 1)$ (according to Assumption \ref{assumption:biased_weight_quantitative}), and $\alpha_D, \alpha_Y \in [0, \frac{1}{2})$ (according to Assumption \ref{assumption:Hupdate}).

    Now let us consider the data dynamics where $\alpha_Y = 0$ and simplify the form of $\Delta^{(\text{Counterfactual Fair})}_{\mathrm{STIR}} \big\rvert_{t}^{t + 1}$:
    \begin{equation*}
        \begin{split}
            &~~~~~
            \Delta^{(\text{Counterfactual Fair})}_{\mathrm{STIR}} \big\rvert_{t}^{t + 1} \\
            &=
            - \frac{\alpha_D}{3} \cdot \big(
                P_t(0, 0, 0, 1) + P_t(0, 0, 1, 0) \big)
            + \frac{\alpha_D}{3 (1 + \alpha_D)^3} \cdot \big(
                P_t(1, 1, 0, 1) + P_t(1, 1, 1, 0) \big).
        \end{split}
    \end{equation*}
    As we can see, as long as we have $\frac{P_t(1, 1, 0, 1) + P_t(1, 1, 1, 0)}{P_t(0, 0, 0, 1) + P_t(0, 0, 1, 0)} < \frac{27}{8}$ and at the same time the parameter satisfies $\alpha_D \in ( \big(\frac{P_t(1, 1, 0, 1) + P_t(1, 1, 1, 0)}{P_t(0, 0, 0, 1) + P_t(0, 0, 1, 0)}\big)^{\frac{1}{3}} - 1, \frac{1}{2})$, it is possible for the counterfactual fair predictor to achieve a negative value for $\STIR$ after a one-step intervention:
    \begin{equation*}
        \begin{cases}
            & \frac{P_t(1, 1, 0, 1) + P_t(1, 1, 1, 0)}{P_t(0, 0, 0, 1) + P_t(0, 0, 1, 0)} < \frac{27}{8} \\
            & \alpha_D \in ( \big(\frac{P_t(1, 1, 0, 1) + P_t(1, 1, 1, 0)}{P_t(0, 0, 0, 1) + P_t(0, 0, 1, 0)}\big)^{\frac{1}{3}} - 1, \frac{1}{2}) \\
            & \alpha_Y = 0
        \end{cases}
        \implies \Delta^{(\text{Counterfactual Fair})}_{\mathrm{STIR}} \big\rvert_{t}^{t + 1} < 0.
    \end{equation*}
\end{proof}

    \afterpage{
\renewcommand{\arraystretch}{1.5}
\begin{landscape}\begin{table}[htbp]
    \caption{
        When $\alpha_D > \alpha_Y$, compare cases of $H_{t + 1}$ with different values of $A_t$.
    }
    \label{table:H_next_Dmajor}
    \centering
    \begin{tabular}{l|cc|cc|ll}
        \toprule
        \multirow{2}{*}{Case} & \multicolumn{2}{c|}{$D_t$} & \multicolumn{2}{c|}{$\Yori_t$} & \multicolumn{2}{c}{$H_{t + 1}$} \\
         \cmidrule{2-7}
         & if $A_t = 0$
        & if $A_t = 1$
        & if $A_t = 0$
        & if $A_t = 1$
        & if $A_t = 0$
        & if $A_t = 1$ \\
        \midrule
        (\romannum{1}) & 0 & 0 & 0 & 0 & $f_t(0, e_t)(1 - \alpha_D - \alpha_Y)$ & $f_t(1, e_t)(1 - \alpha_D - \alpha_Y)$ \\
        (\romannum{2}) & 0 & 0 & 0 & 1 & $f_t(0, e_t)(1 - \alpha_D - \alpha_Y)$ & $f_t(1, e_t)(1 - \alpha_D + \alpha_Y)$ \\
        (\romannum{3}) & 0 & 0 & 1 & 0 & $f_t(0, e_t)(1 - \alpha_D + \alpha_Y)$ & $f_t(1, e_t)(1 - \alpha_D - \alpha_Y)$ \\
        (\romannum{4}) & 0 & 0 & 1 & 1 & $f_t(0, e_t)(1 - \alpha_D + \alpha_Y)$ & $f_t(1, e_t)(1 - \alpha_D + \alpha_Y)$ \\
        (\romannum{5}) & 0 & 1 & 0 & 0 & $f_t(0, e_t)(1 - \alpha_D - \alpha_Y)$ & $\min\{ f_t(1, e_t)(1 + \alpha_D - \alpha_Y), 1 \}$ \\
        (\romannum{6}) & 0 & 1 & 0 & 1 & $f_t(0, e_t)(1 - \alpha_D - \alpha_Y)$ & $\min\{ f_t(1, e_t)(1 + \alpha_D + \alpha_Y), 1 \}$ \\
        (\romannum{7}) & 0 & 1 & 1 & 0 & $f_t(0, e_t)(1 - \alpha_D + \alpha_Y)$ & $\min\{ f_t(1, e_t)(1 + \alpha_D - \alpha_Y), 1 \}$ \\
        (\romannum{8}) & 0 & 1 & 1 & 1 & $f_t(0, e_t)(1 - \alpha_D + \alpha_Y)$ & $\min\{ f_t(1, e_t)(1 + \alpha_D + \alpha_Y), 1 \}$ \\
        (\romannum{9}) & 1 & 0 & 0 & 0 & $\min\{ f_t(0, e_t)(1 + \alpha_D - \alpha_Y), 1 \}$ & $f_t(1, e_t)(1 - \alpha_D - \alpha_Y)$ \\
        (\romannum{10}) & 1 & 0 & 0 & 1 & $\min\{ f_t(0, e_t)(1 + \alpha_D - \alpha_Y), 1 \}$ & $f_t(1, e_t)(1 - \alpha_D + \alpha_Y)$ \\
        (\romannum{11}) & 1 & 0 & 1 & 0 & $\min\{ f_t(0, e_t)(1 + \alpha_D + \alpha_Y), 1 \}$ & $f_t(1, e_t)(1 - \alpha_D - \alpha_Y)$ \\
        (\romannum{12}) & 1 & 0 & 1 & 1 & $\min\{ f_t(0, e_t)(1 + \alpha_D + \alpha_Y), 1 \}$ & $f_t(1, e_t)(1 - \alpha_D + \alpha_Y)$ \\
        (\romannum{13}) & 1 & 1 & 0 & 0 & $\min\{ f_t(0, e_t)(1 + \alpha_D - \alpha_Y), 1 \}$ & $\min\{ f_t(1, e_t)(1 + \alpha_D - \alpha_Y), 1 \}$ \\
        (\romannum{14}) & 1 & 1 & 0 & 1 & $\min\{ f_t(0, e_t)(1 + \alpha_D - \alpha_Y), 1 \}$ & $\min\{ f_t(1, e_t)(1 + \alpha_D + \alpha_Y), 1 \}$ \\
        (\romannum{15}) & 1 & 1 & 1 & 0 & $\min\{ f_t(0, e_t)(1 + \alpha_D + \alpha_Y), 1 \}$ & $\min\{ f_t(1, e_t)(1 + \alpha_D - \alpha_Y), 1 \}$ \\
        (\romannum{16}) & 1 & 1 & 1 & 1 & $\min\{ f_t(0, e_t)(1 + \alpha_D + \alpha_Y), 1 \}$ & $\min\{ f_t(1, e_t)(1 + \alpha_D + \alpha_Y), 1 \}$ \\
        \bottomrule
    \end{tabular}
\end{table}\end{landscape}
\clearpage
\renewcommand{\arraystretch}{1}
}

\afterpage{
\renewcommand{\arraystretch}{1.5}
\begin{landscape}\begin{table}[htbp]
    \caption{
        When $\alpha_D < \alpha_Y$, compare cases of $H_{t + 1}$ with different values of $A_t$.
    }
    \label{table:H_next_Ymajor}
    \centering
    \begin{tabular}{l|cc|cc|ll}
        \toprule
        \multirow{2}{*}{Case} & \multicolumn{2}{c|}{$D_t$} & \multicolumn{2}{c|}{$\Yori_t$} & \multicolumn{2}{c}{$H_{t + 1}$} \\
         \cmidrule{2-7}
         & if $A_t = 0$
        & if $A_t = 1$
        & if $A_t = 0$
        & if $A_t = 1$
        & if $A_t = 0$
        & if $A_t = 1$ \\
        \midrule
        (\romannum{1}) & 0 & 0 & 0 & 0 & $f_t(0, e_t)(1 - \alpha_D - \alpha_Y)$ & $f_t(1, e_t)(1 - \alpha_D - \alpha_Y)$ \\
        (\romannum{2}) & 0 & 0 & 0 & 1 & $f_t(0, e_t)(1 - \alpha_D - \alpha_Y)$ & $\min\{ f_t(1, e_t)(1 - \alpha_D + \alpha_Y), 1 \}$ \\
        (\romannum{3}) & 0 & 0 & 1 & 0 & $\min\{ f_t(0, e_t)(1 - \alpha_D + \alpha_Y), 1 \}$ & $f_t(1, e_t)(1 - \alpha_D - \alpha_Y)$ \\
        (\romannum{4}) & 0 & 0 & 1 & 1 & $\min\{ f_t(0, e_t)(1 - \alpha_D + \alpha_Y), 1 \}$ & $\min\{ f_t(1, e_t)(1 - \alpha_D + \alpha_Y), 1 \}$ \\
        (\romannum{5}) & 0 & 1 & 0 & 0 & $f_t(0, e_t)(1 - \alpha_D - \alpha_Y)$ & $f_t(1, e_t)(1 + \alpha_D - \alpha_Y)$ \\
        (\romannum{6}) & 0 & 1 & 0 & 1 & $f_t(0, e_t)(1 - \alpha_D - \alpha_Y)$ & $\min\{ f_t(1, e_t)(1 + \alpha_D + \alpha_Y), 1 \}$ \\
        (\romannum{7}) & 0 & 1 & 1 & 0 & $\min\{ f_t(0, e_t)(1 - \alpha_D + \alpha_Y), 1 \}$ & $f_t(1, e_t)(1 + \alpha_D - \alpha_Y)$ \\
        (\romannum{8}) & 0 & 1 & 1 & 1 & $\min\{ f_t(0, e_t)(1 - \alpha_D + \alpha_Y), 1 \}$ & $\min\{ f_t(1, e_t)(1 + \alpha_D + \alpha_Y), 1 \}$ \\
        (\romannum{9}) & 1 & 0 & 0 & 0 & $f_t(0, e_t)(1 + \alpha_D - \alpha_Y)$ & $f_t(1, e_t)(1 - \alpha_D - \alpha_Y)$ \\
        (\romannum{10}) & 1 & 0 & 0 & 1 & $f_t(0, e_t)(1 + \alpha_D - \alpha_Y)$ & $\min\{ f_t(1, e_t)(1 - \alpha_D + \alpha_Y), 1 \}$ \\
        (\romannum{11}) & 1 & 0 & 1 & 0 & $\min\{ f_t(0, e_t)(1 + \alpha_D + \alpha_Y), 1 \}$ & $f_t(1, e_t)(1 - \alpha_D - \alpha_Y)$ \\
        (\romannum{12}) & 1 & 0 & 1 & 1 & $\min\{ f_t(0, e_t)(1 + \alpha_D + \alpha_Y), 1 \}$ & $\min\{ f_t(1, e_t)(1 - \alpha_D + \alpha_Y), 1 \}$ \\
        (\romannum{13}) & 1 & 1 & 0 & 0 & $f_t(0, e_t)(1 + \alpha_D - \alpha_Y)$ & $f_t(1, e_t)(1 + \alpha_D - \alpha_Y)$ \\
        (\romannum{14}) & 1 & 1 & 0 & 1 & $f_t(0, e_t)(1 + \alpha_D - \alpha_Y)$ & $\min\{ f_t(1, e_t)(1 + \alpha_D + \alpha_Y), 1 \}$ \\
        (\romannum{15}) & 1 & 1 & 1 & 0 & $\min\{ f_t(0, e_t)(1 + \alpha_D + \alpha_Y), 1 \}$ & $f_t(1, e_t)(1 + \alpha_D - \alpha_Y)$ \\
        (\romannum{16}) & 1 & 1 & 1 & 1 & $\min\{ f_t(0, e_t)(1 + \alpha_D + \alpha_Y), 1 \}$ & $\min\{ f_t(1, e_t)(1 + \alpha_D + \alpha_Y), 1 \}$ \\
        \bottomrule
    \end{tabular}
\end{table}\end{landscape}
\clearpage
\renewcommand{\arraystretch}{1}
}

\afterpage{
\renewcommand{\arraystretch}{1.5}
\begin{landscape}\begin{table}[htbp]
    \caption{
        When $\alpha_D = \alpha_Y = \alpha$, compare cases of $H_{t + 1}$ with different values of $A_t$.
    }
    \label{table:H_next_nomajor}
    \centering
    \begin{tabular}{l|cc|cc|ll}
        \toprule
        \multirow{2}{*}{Case} & \multicolumn{2}{c|}{$D_t$} & \multicolumn{2}{c|}{$\Yori_t$} & \multicolumn{2}{c}{$H_{t + 1}$} \\
         \cmidrule{2-7}
         & if $A_t = 0$
        & if $A_t = 1$
        & if $A_t = 0$
        & if $A_t = 1$
        & if $A_t = 0$
        & if $A_t = 1$ \\
        \midrule
        (\romannum{1}) & 0 & 0 & 0 & 0 & $f_t(0, e_t)(1 - 2\alpha)$ & $f_t(1, e_t)(1 - 2\alpha)$ \\
        (\romannum{2}) & 0 & 0 & 0 & 1 & $f_t(0, e_t)(1 - 2\alpha)$ & $f_t(1, e_t)$ \\
        (\romannum{3}) & 0 & 0 & 1 & 0 & $f_t(0, e_t)$ & $f_t(1, e_t)(1 - 2\alpha)$ \\
        (\romannum{4}) & 0 & 0 & 1 & 1 & $f_t(0, e_t)$ & $f_t(1, e_t)$ \\
        (\romannum{5}) & 0 & 1 & 0 & 0 & $f_t(0, e_t)(1 - 2\alpha)$ & $f_t(1, e_t)$ \\
        (\romannum{6}) & 0 & 1 & 0 & 1 & $f_t(0, e_t)(1 - 2\alpha)$ & $\min\{ f_t(1, e_t)(1 + 2\alpha), 1 \}$ \\
        (\romannum{7}) & 0 & 1 & 1 & 0 & $f_t(0, e_t)$ & $f_t(1, e_t)$ \\
        (\romannum{8}) & 0 & 1 & 1 & 1 & $f_t(0, e_t)$ & $\min\{ f_t(1, e_t)(1 + 2\alpha), 1 \}$ \\
        (\romannum{9}) & 1 & 0 & 0 & 0 & $f_t(0, e_t)$ & $f_t(1, e_t)(1 - 2\alpha)$ \\
        (\romannum{10}) & 1 & 0 & 0 & 1 & $f_t(0, e_t)$ & $f_t(1, e_t)$ \\
        (\romannum{11}) & 1 & 0 & 1 & 0 & $\min\{ f_t(0, e_t)(1 + 2\alpha), 1 \}$ & $f_t(1, e_t)(1 - 2\alpha)$ \\
        (\romannum{12}) & 1 & 0 & 1 & 1 & $\min\{ f_t(0, e_t)(1 + 2\alpha), 1 \}$ & $f_t(1, e_t)$ \\
        (\romannum{13}) & 1 & 1 & 0 & 0 & $f_t(0, e_t)$ & $f_t(1, e_t)$ \\
        (\romannum{14}) & 1 & 1 & 0 & 1 & $f_t(0, e_t)$ & $\min\{ f_t(1, e_t)(1 + 2\alpha), 1 \}$ \\
        (\romannum{15}) & 1 & 1 & 1 & 0 & $\min\{ f_t(0, e_t)(1 + 2\alpha), 1 \}$ & $f_t(1, e_t)$ \\
        (\romannum{16}) & 1 & 1 & 1 & 1 & $\min\{ f_t(0, e_t)(1 + 2\alpha), 1 \}$ & $\min\{ f_t(1, e_t)(1 + 2\alpha), 1 \}$ \\
        \bottomrule
    \end{tabular}
\end{table}\end{landscape}
\clearpage
\renewcommand{\arraystretch}{1}
}

    \afterpage{
\renewcommand{\arraystretch}{1.5}
\begin{landscape}\begin{longtable}{l|cc|cc|ll}
    \caption{
        When $\alpha_D > \alpha_Y$, list possible instantiations of $\varphi_{t + 1}(e_{t + 1})$.
    }
    \label{table:delta_next_Dmajor}
    \\
    \hline
    \multirow{2}{*}{Case}
    & \multicolumn{2}{c|}{$D_t$}
    & \multicolumn{2}{c|}{$\Yori_t$}
    &
    & \multirow{2}{*}{$\varphi_{t + 1}(e_{t + 1}) = f_{t + 1}(0, e_{t + 1}) - f_{t + 1}(1, e_{t + 1})$} \\ \cmidrule{2-5}
     & if $A_t = 0$
    & if $A_t = 1$
    & if $A_t = 0$
    & if $A_t = 1$
    &  \\
    \hline
    \endfirsthead

    \multicolumn{7}{c}%
    {{\tablename\ \thetable{} (continued from the previous page)}} \\
    \hline
    \multirow{2}{*}{Case}
    & \multicolumn{2}{c|}{$D_t$}
    & \multicolumn{2}{c|}{$\Yori_t$}
    &
    & \multirow{2}{*}{$\varphi_{t + 1}(e_{t + 1}) = f_{t + 1}(0, e_{t + 1}) - f_{t + 1}(1, e_{t + 1})$} \\ \cmidrule{2-5}
     & if $A_t = 0$
    & if $A_t = 1$
    & if $A_t = 0$
    & if $A_t = 1$
    &  \\
    \hline
    \endhead

    \hline
    \endfoot

    \hline
    \endlastfoot

    (\romannum{1}) & 0 & 0 & 0 & 0 & & $\varphi_t(e_t)(1 - \alpha_D - \alpha_Y)$ \\
    (\romannum{2}) & 0 & 0 & 0 & 1 & & $\varphi_t(e_t)(1 - \alpha_D) - \alpha_Y \eta_t(e_t)$ \\
    (\romannum{3}) & 0 & 0 & 1 & 0 & & $\varphi_t(e_t)(1 - \alpha_D) + \alpha_Y \eta_t(e_t)$ \\
    (\romannum{4}) & 0 & 0 & 1 & 1 & & $\varphi_t(e_t)(1 - \alpha_D + \alpha_Y)$ \\
    (\romannum{5})
    & 0
    & 1
    & 0
    & 0
    & (\romannum{5}.1)
    & $\varphi_t(e_t)(1 - \alpha_Y) - \alpha_D \eta_t(e_t)$, if $f_t(1, e_t) \in (0, \frac{1}{1 + \alpha_D - \alpha_Y})$ \\
     & - & - & - & - & (\romannum{5}.2)
    & $f_t(0, e_t)(1 - \alpha_D - \alpha_Y) - 1$, otherwise \\
    (\romannum{6})
    & 0
    & 1
    & 0
    & 1
    & (\romannum{6}.1)
    & $\varphi_t(e_t) - (\alpha_D + \alpha_Y) \eta_t(e_t)$, if $f_t(1, e_t) \in (0, \frac{1}{1 + \alpha_D + \alpha_Y})$ \\
     & - & - & - & - & (\romannum{6}.2)
    & $f_t(0, e_t)(1 - \alpha_D - \alpha_Y) - 1$, otherwise \\
    (\romannum{7})
    & 0
    & 1
    & 1
    & 0
    & (\romannum{7}.1)
    & $\varphi_t(e_t) - (\alpha_D - \alpha_Y) \eta_t(e_t)$, if $f_t(1, e_t) \in (0, \frac{1}{1 + \alpha_D - \alpha_Y})$ \\
     & - & - & - & - & (\romannum{7}.2)
    & $f_t(0, e_t)(1 - \alpha_D + \alpha_Y) - 1$, otherwise \\
    (\romannum{8})
    & 0
    & 1
    & 1
    & 1
    & (\romannum{8}.1)
    & $\varphi_t(e_t)(1 + \alpha_Y) - \alpha_D \eta_t(e_t)$, if $f_t(1, e_t) \in (0, \frac{1}{1 + \alpha_D + \alpha_Y})$ \\
     & - & - & - & - & (\romannum{8}.2)
    & $f_t(0, e_t)(1 - \alpha_D + \alpha_Y) - 1$, otherwise \\
    (\romannum{9})
    & 1
    & 0
    & 0
    & 0
    & (\romannum{9}.1)
    & $\varphi_t(e_t)(1 - \alpha_Y) + \alpha_D \eta_t(e_t)$, if $f_t(0, e_t) \in (0, \frac{1}{1 + \alpha_D - \alpha_Y})$ \\
     & - & - & - & - & (\romannum{9}.2)
    & $1 - f_t(1, e_t)(1 - \alpha_D - \alpha_Y)$, otherwise \\
    (\romannum{10})
    & 1
    & 0
    & 0
    & 1
    & (\romannum{10}.1)
    & $\varphi_t(e_t) + (\alpha_D - \alpha_Y) \eta_t(e_t)$, if $f_t(0, e_t) \in (0, \frac{1}{1 + \alpha_D - \alpha_Y})$ \\
     & - & - & - & - & (\romannum{10}.2)
    & $1 - f_t(1, e_t)(1 - \alpha_D + \alpha_Y)$, otherwise \\
    (\romannum{11})
    & 1
    & 0
    & 1
    & 0
    & (\romannum{11}.1)
    & $\varphi_t(e_t) + (\alpha_D + \alpha_Y) \eta_t(e_t)$, if $f_t(0, e_t) \in (0, \frac{1}{1 + \alpha_D + \alpha_Y})$ \\
     & - & - & - & - & (\romannum{11}.2)
    & $1 - f_t(1, e_t)(1 - \alpha_D - \alpha_Y)$, otherwise \\
    (\romannum{12})
    & 1
    & 0
    & 1
    & 1
    & (\romannum{12}.1)
    & $\varphi_t(e_t)(1 + \alpha_Y) + \alpha_D \eta_t(e_t)$, if $f_t(0, e_t) \in (0, \frac{1}{1 + \alpha_D + \alpha_Y})$ \\
     & & & & & (\romannum{12}.2)
    & $1 - f_t(1, e_t)(1 - \alpha_D + \alpha_Y)$, otherwise \\
    (\romannum{13})
    & 1
    & 1
    & 0
    & 0
    & (\romannum{13}.1)
    & $\varphi_t(e_t)(1 + \alpha_D - \alpha_Y)$, if $f_t(0, e_t) \in (0, \frac{1}{1 + \alpha_D - \alpha_Y})$ and $f_t(1, e_t) \in (0, \frac{1}{1 + \alpha_D - \alpha_Y})$ \\
     & - & - & - & - & (\romannum{13}.2)
    & $f_t(0, e_t)(1 + \alpha_D - \alpha_Y) - 1$, if $f_t(0, e_t) \in (0, \frac{1}{1 + \alpha_D - \alpha_Y})$ and $f_t(1, e_t) \in [\frac{1}{1 + \alpha_D - \alpha_Y}, 1]$  \\
     & - & - & - & - & (\romannum{13}.3)
    & $1 - f_t(1, e_t)(1 + \alpha_D - \alpha_Y)$, if $f_t(0, e_t) \in [\frac{1}{1 + \alpha_D - \alpha_Y}, 1]$ and $f_t(1, e_t) \in (0, \frac{1}{1 + \alpha_D - \alpha_Y})$ \\
     & - & - & - & - & (\romannum{13}.4)
    & $0$, otherwise \\
    (\romannum{14})
    & 1
    & 1
    & 0
    & 1
    & (\romannum{14}.1)
    & $\varphi_t(e_t)(1 + \alpha_D) - \alpha_Y \eta_t(e_t)$, if $f_t(0, e_t) \in (0, \frac{1}{1 + \alpha_D - \alpha_Y})$ and $f_t(1, e_t) \in (0, \frac{1}{1 + \alpha_D + \alpha_Y})$ \\
     & - & - & - & - & (\romannum{14}.2)
    & $f_t(0, e_t)(1 + \alpha_D - \alpha_Y) - 1$, if $f_t(0, e_t) \in (0, \frac{1}{1 + \alpha_D - \alpha_Y})$ and $f_t(1, e_t) \in [\frac{1}{1 + \alpha_D + \alpha_Y}, 1]$  \\
     & - & - & - & - & (\romannum{14}.3)
    & $1 - f_t(1, e_t)(1 + \alpha_D + \alpha_Y)$, if $f_t(0, e_t) \in [\frac{1}{1 + \alpha_D - \alpha_Y}, 1]$ and $f_t(1, e_t) \in (0, \frac{1}{1 + \alpha_D + \alpha_Y})$ \\
     & - & - & - & - & (\romannum{14}.4)
    & $0$, otherwise \\
    (\romannum{15})
    & 1
    & 1
    & 1
    & 0
    & (\romannum{15}.1)
    & $\varphi_t(e_t)(1 + \alpha_D) + \alpha_Y \eta_t(e_t)$, if $f_t(0, e_t) \in (0, \frac{1}{1 + \alpha_D + \alpha_Y})$ and $f_t(1, e_t) \in (0, \frac{1}{1 + \alpha_D - \alpha_Y})$ \\
     & - & - & - & - & (\romannum{15}.2)
    & $f_t(0, e_t)(1 + \alpha_D + \alpha_Y) - 1$, if $f_t(0, e_t) \in (0, \frac{1}{1 + \alpha_D + \alpha_Y})$ and $f_t(1, e_t) \in [\frac{1}{1 + \alpha_D - \alpha_Y}, 1]$  \\
     & - & - & - & - & (\romannum{15}.3)
    & $1 - f_t(1, e_t)(1 + \alpha_D - \alpha_Y)$, if $f_t(0, e_t) \in [\frac{1}{1 + \alpha_D + \alpha_Y}, 1]$ and $f_t(1, e_t) \in (0, \frac{1}{1 + \alpha_D - \alpha_Y})$ \\
     & - & - & - & - & (\romannum{15}.4)
    & $0$, otherwise \\
    (\romannum{16})
    & 1
    & 1
    & 1
    & 1
    & (\romannum{16}.1)
    & $\varphi_t(e_t)(1 + \alpha_D + \alpha_Y)$, if $f_t(0, e_t) \in (0, \frac{1}{1 + \alpha_D + \alpha_Y})$ and $f_t(1, e_t) \in (0, \frac{1}{1 + \alpha_D + \alpha_Y})$ \\
     & - & - & - & - & (\romannum{16}.2)
    & $f_t(0, e_t)(1 + \alpha_D + \alpha_Y) - 1$, if $f_t(0, e_t) \in (0, \frac{1}{1 + \alpha_D + \alpha_Y})$ and $f_t(1, e_t) \in [\frac{1}{1 + \alpha_D + \alpha_Y}, 1]$  \\
     & - & - & - & - & (\romannum{16}.3)
    & $1 - f_t(1, e_t)(1 + \alpha_D + \alpha_Y)$, if $f_t(0, e_t) \in [\frac{1}{1 + \alpha_D + \alpha_Y}, 1]$ and $f_t(1, e_t) \in (0, \frac{1}{1 + \alpha_D + \alpha_Y})$ \\
     & - & - & - & - & (\romannum{16}.4)
    & $0$, otherwise \\
    \hline
\end{longtable}\end{landscape}
\clearpage
\renewcommand{\arraystretch}{1}
}

\afterpage{
\renewcommand{\arraystretch}{1.5}
\begin{landscape}\begin{longtable}{l|cc|cc|ll}
    \caption{
        When $\alpha_D < \alpha_Y$, list possible instantiations of $\varphi_{t + 1}(e_{t + 1})$.
    }
    \label{table:delta_next_Ymajor}
    \\
    \hline
    \multirow{2}{*}{Case}
    & \multicolumn{2}{c|}{$D_t$}
    & \multicolumn{2}{c|}{$\Yori_t$}
    &
    & \multirow{2}{*}{$\varphi_{t + 1}(e_{t + 1}) = f_{t + 1}(0, e_{t + 1}) - f_{t + 1}(1, e_{t + 1})$} \\ \cmidrule{2-5}
     & if $A_t = 0$
    & if $A_t = 1$
    & if $A_t = 0$
    & if $A_t = 1$
    &  \\
    \hline
    \endfirsthead

    \multicolumn{7}{c}%
    {{\tablename\ \thetable{} (continued from the previous page)}} \\
    \hline
    \multirow{2}{*}{Case}
    & \multicolumn{2}{c|}{$D_t$}
    & \multicolumn{2}{c|}{$\Yori_t$}
    &
    & \multirow{2}{*}{$\varphi_{t + 1}(e_{t  + 1}) = f_{t + 1}(0, e_{t + 1}) - f_{t + 1}(1, e_{t + 1})$} \\ \cmidrule{2-5}
     & if $A_t = 0$
    & if $A_t = 1$
    & if $A_t = 0$
    & if $A_t = 1$
    &  \\
    \hline
    \endhead

    \hline
    \endfoot

    \hline
    \endlastfoot

    (\romannum{1}) & 0 & 0 & 0 & 0 & & $\varphi_t(e_t)(1 - \alpha_D - \alpha_Y)$ \\
    (\romannum{2})
    & 0
    & 0
    & 0
    & 1
    & (\romannum{2}.1)
    & $\varphi_t(e_t)(1 - \alpha_D) - \alpha_Y \eta_t(e_t)$, if $f_t(1, e_t) \in (0, \frac{1}{1 - \alpha_D + \alpha_Y})$ \\
     & - & - & - & - & (\romannum{2}.2)
    & $f_t(0, e_t)(1 - \alpha_D - \alpha_Y) - 1$, otherwise \\
    (\romannum{3})
    & 0
    & 0
    & 1
    & 0
    & (\romannum{3}.1)
    & $\varphi_t(e_t)(1 - \alpha_D) + \alpha_Y \eta_t(e_t)$, if $f_t(0, e_t) \in (0, \frac{1}{1 - \alpha_D + \alpha_Y})$ \\
     & - & - & - & - & (\romannum{3}.2)
    & $1 - f_t(1, e_t)(1 - \alpha_D - \alpha_Y)$, otherwise \\
    (\romannum{4})
    & 0
    & 0
    & 1
    & 1
    & (\romannum{4}.1)
    & $\varphi_t(e_t)(1 - \alpha_D + \alpha_Y)$, if $f_t(0, e_t) \in (0, \frac{1}{1 - \alpha_D + \alpha_Y})$ and $f_t(1, e_t) \in (0, \frac{1}{1 - \alpha_D + \alpha_Y})$ \\
     & - & - & - & - & (\romannum{4}.2)
    & $f_t(0, e_t)(1 - \alpha_D + \alpha_Y) - 1$, if $f_t(0, e_t) \in (0, \frac{1}{1 - \alpha_D + \alpha_Y})$ and $f_t(1, e_t) \in [\frac{1}{1 - \alpha_D + \alpha_Y}, 1]$  \\
     & - & - & - & - & (\romannum{4}.3)
    & $1 - f_t(1, e_t)(1 - \alpha_D + \alpha_Y)$, if $f_t(0, e_t) \in [\frac{1}{1 - \alpha_D + \alpha_Y}, 1]$ and $f_t(1, e_t) \in (0, \frac{1}{1 - \alpha_D + \alpha_Y})$ \\
     & - & - & - & - & (\romannum{4}.4)
    & $0$, otherwise \\
    (\romannum{5}) & 0 & 1 & 0 & 0 & & $\varphi_t(e_t)(1 - \alpha_Y) - \alpha_D \eta_t(e_t)$ \\
    (\romannum{6})
    & 0
    & 1
    & 0
    & 1
    & (\romannum{6}.1)
    & $\varphi_t(e_t) - (\alpha_D + \alpha_Y) \eta_t(e_t)$, if $f_t(1, e_t) \in (0, \frac{1}{1 + \alpha_D + \alpha_Y})$ \\
     & - & - & - & - & (\romannum{6}.2)
    & $f_t(0, e_t)(1 - \alpha_D - \alpha_Y) - 1$, otherwise \\
    (\romannum{7})
    & 0
    & 1
    & 1
    & 0
    & (\romannum{7}.1)
    & $\varphi_t(e_t) - (\alpha_D - \alpha_Y) \eta_t(e_t)$, if $f_t(0, e_t) \in (0, \frac{1}{1 - \alpha_D + \alpha_Y})$ \\
     & - & - & - & - & (\romannum{7}.2)
    & $1 - f_t(1, e_t)(1 + \alpha_D - \alpha_Y)$, otherwise \\
    (\romannum{8})
    & 0
    & 1
    & 1
    & 1
    & (\romannum{8}.1)
    & $\varphi_t(e_t)(1 + \alpha_Y) - \alpha_D \eta_t(e_t)$, if $f_t(0, e_t) \in (0, \frac{1}{1 - \alpha_D + \alpha_Y})$ and $f_t(1, e_t) \in (0, \frac{1}{1 + \alpha_D + \alpha_Y})$ \\
     & - & - & - & - & (\romannum{8}.2)
    & $f_t(0, e_t)(1 - \alpha_D + \alpha_Y) - 1$, if $f_t(0, e_t) \in (0, \frac{1}{1 - \alpha_D + \alpha_Y})$ and $f_t(1, e_t) \in [\frac{1}{1 + \alpha_D + \alpha_Y}, 1]$  \\
     & - & - & - & - & (\romannum{8}.3)
    & $1 - f_t(1, e_t)(1 + \alpha_D + \alpha_Y)$, if $f_t(0, e_t) \in [\frac{1}{1 - \alpha_D + \alpha_Y}, 1]$ and $f_t(1, e_t) \in (0, \frac{1}{1 + \alpha_D + \alpha_Y})$ \\
     & - & - & - & - & (\romannum{8}.4)
    & $0$, otherwise \\
    (\romannum{9}) & 1 & 0 & 0 & 0 & & $\varphi_t(e_t)(1 - \alpha_Y) + \alpha_D \eta_t(e_t)$ \\
    (\romannum{10})
    & 1
    & 0
    & 0
    & 1
    & (\romannum{10}.1)
    & $\varphi_t(e_t) + (\alpha_D - \alpha_Y) \eta_t(e_t)$, if $f_t(1, e_t) \in (0, \frac{1}{1 - \alpha_D + \alpha_Y})$ \\
     & - & - & - & - & (\romannum{10}.2)
    & $f_t(0, e_t)(1 + \alpha_D - \alpha_Y) - 1$, otherwise \\
    (\romannum{11})
    & 1
    & 0
    & 1
    & 0
    & (\romannum{11}.1)
    & $\varphi_t(e_t) + (\alpha_D + \alpha_Y) \eta_t(e_t)$, if $f_t(0, e_t) \in (0, \frac{1}{1 + \alpha_D + \alpha_Y})$ \\
     & - & - & - & - & (\romannum{11}.2)
    & $1 - f_t(1, e_t)(1 - \alpha_D - \alpha_Y)$, otherwise \\
    (\romannum{12})
    & 1
    & 0
    & 1
    & 1
    & (\romannum{12}.1)
    & $\varphi_t(e_t)(1 + \alpha_Y) + \alpha_D \eta_t(e_t)$, if $f_t(0, e_t) \in (0, \frac{1}{1 + \alpha_D + \alpha_Y})$ and $f_t(1, e_t) \in (0, \frac{1}{1 - \alpha_D + \alpha_Y})$ \\
     & - & - & - & - & (\romannum{12}.2)
    & $f_t(0, e_t)(1 + \alpha_D + \alpha_Y) - 1$, if $f_t(0, e_t) \in (0, \frac{1}{1 + \alpha_D + \alpha_Y})$ and $f_t(1, e_t) \in [\frac{1}{1 - \alpha_D + \alpha_Y}, 1]$  \\
     & - & - & - & - & (\romannum{12}.3)
    & $1 - f_t(1, e_t)(1 - \alpha_D + \alpha_Y)$, if $f_t(0, e_t) \in [\frac{1}{1 + \alpha_D + \alpha_Y}, 1]$ and $f_t(1, e_t) \in (0, \frac{1}{1 - \alpha_D + \alpha_Y})$ \\
     & - & - & - & - & (\romannum{12}.4)
    & $0$, otherwise \\
    (\romannum{13}) & 1 & 1 & 0 & 0 & & $\varphi_t(e_t)(1 + \alpha_D - \alpha_Y)$ \\
    (\romannum{14})
    & 1
    & 1
    & 0
    & 1
    & (\romannum{14}.1)
    & $\varphi_t(e_t)(1 + \alpha_D) - \alpha_Y \eta_t(e_t)$, if $f_t(1, e_t) \in (0, \frac{1}{1 + \alpha_D + \alpha_Y})$ \\
     & - & - & - & - & (\romannum{14}.2)
    & $f_t(0, e_t)(1 + \alpha_D - \alpha_Y) - 1$, otherwise \\
    (\romannum{15})
    & 1
    & 1
    & 1
    & 0
    & (\romannum{15}.1)
    & $\varphi_t(e_t)(1 + \alpha_D) + \alpha_Y \eta_t(e_t)$, if $f_t(0, e_t) \in (0, \frac{1}{1 + \alpha_D + \alpha_Y})$ \\
     & - & - & - & - & (\romannum{15}.2)
    & $1 - f_t(1, e_t)(1 + \alpha_D - \alpha_Y)$, otherwise \\
    (\romannum{16})
    & 1
    & 1
    & 1
    & 1
    & (\romannum{16}.1)
    & $\varphi_t(e_t)(1 + \alpha_D + \alpha_Y)$, if $f_t(0, e_t) \in (0, \frac{1}{1 + \alpha_D + \alpha_Y})$ and $f_t(1, e_t) \in (0, \frac{1}{1 + \alpha_D + \alpha_Y})$ \\
     & - & - & - & - & (\romannum{16}.2)
    & $f_t(0, e_t)(1 + \alpha_D + \alpha_Y) - 1$, if $f_t(0, e_t) \in (0, \frac{1}{1 + \alpha_D + \alpha_Y})$ and $f_t(1, e_t) \in [\frac{1}{1 + \alpha_D + \alpha_Y}, 1]$  \\
     & - & - & - & - & (\romannum{16}.3)
    & $1 - f_t(1, e_t)(1 + \alpha_D + \alpha_Y)$, if $f_t(0, e_t) \in [\frac{1}{1 + \alpha_D + \alpha_Y}, 1]$ and $f_t(1, e_t) \in (0, \frac{1}{1 + \alpha_D + \alpha_Y})$ \\
     & - & - & - & - & (\romannum{16}.4)
    & $0$, otherwise \\
    \hline
\end{longtable}\end{landscape}
\clearpage
\renewcommand{\arraystretch}{1}
}

\afterpage{
\renewcommand{\arraystretch}{1.2}
\begin{landscape}\begin{longtable}{l|cc|cc|ll}
    \caption{
        When $\alpha_D = \alpha_Y = \alpha$, list possible instantiations of $\varphi_{t + 1}(e_{t + 1})$.
    }
    \label{table:delta_next_nomajor}
    \\
    \hline
    \multirow{2}{*}{Case}
    & \multicolumn{2}{c|}{$D_t$}
    & \multicolumn{2}{c|}{$\Yori_t$}
    &
    & \multirow{2}{*}{$\varphi_{t + 1}(e_{t  + 1}) = f_{t + 1}(0, e_{t + 1}) - f_{t + 1}(1, e_{t + 1})$} \\ \cmidrule{2-5}
     & if $A_t = 0$
    & if $A_t = 1$
    & if $A_t = 0$
    & if $A_t = 1$
    &  \\
    \hline
    \endfirsthead

    \multicolumn{7}{c}%
    {{\tablename\ \thetable{} (continued from the previous page)}} \\
    \hline
    \multirow{2}{*}{Case}
    & \multicolumn{2}{c|}{$D_t$}
    & \multicolumn{2}{c|}{$\Yori_t$}
    &
    & \multirow{2}{*}{$\varphi_{t + 1}(e_{t  + 1}) = f_{t + 1}(0, e_{t + 1}) - f_{t + 1}(1, e_{t + 1})$} \\ \cmidrule{2-5}
     & if $A_t = 0$
    & if $A_t = 1$
    & if $A_t = 0$
    & if $A_t = 1$
    &  \\
    \hline
    \endhead

    \hline
    \endfoot

    \hline
    \endlastfoot

    (\romannum{1}) & 0 & 0 & 0 & 0 & & $\varphi_t(e_t)(1 - 2\alpha)$ \\
    (\romannum{2}) & 0 & 0 & 0 & 1 & & $\varphi_t(e_t)(1 - \alpha) - \alpha \eta_t(e_t)$ \\
    (\romannum{3}) & 0 & 0 & 1 & 0 & & $\varphi_t(e_t)(1 - \alpha) + \alpha \eta_t(e_t)$ \\
    (\romannum{4}) & 0 & 0 & 1 & 1 & & $\varphi_t(e_t)$ \\
    (\romannum{5}) & 0 & 1 & 0 & 0 & & $\varphi_t(e_t)(1 - \alpha) - \alpha \eta_t(e_t)$ \\
    (\romannum{6})
    & 0
    & 1
    & 0
    & 1
    & (\romannum{6}.1)
    & $\varphi_t(e_t) - 2 \alpha \eta_t(e_t)$, if $f_t(1, e_t) \in (0, \frac{1}{1 + 2 \alpha})$ \\
     & - & - & - & - & (\romannum{6}.2)
    & $f_t(0, e_t)(1 - 2 \alpha) - 1$, otherwise \\
    (\romannum{7}) & 0 & 1 & 1 & 0 & & $\varphi_t(e_t)$ \\
    (\romannum{8})
    & 0
    & 1
    & 1
    & 1
    & (\romannum{8}.1)
    & $\varphi_t(e_t)(1 + \alpha) - \alpha \eta_t(e_t)$, if $f_t(1, e_t) \in (0, \frac{1}{1 + 2 \alpha})$ \\
     & - & - & - & - & (\romannum{8}.2)
    & $f_t(0, e_t) - 1$, otherwise \\
    (\romannum{9}) & 1 & 0 & 0 & 0 & & $\varphi_t(e_t)(1 - \alpha) + \alpha \eta_t(e_t)$ \\
    (\romannum{10}) & 1 & 0 & 0 & 1 & & $\varphi_t(e_t)$ \\
    (\romannum{11})
    & 1
    & 0
    & 1
    & 0
    & (\romannum{11}.1)
    & $\varphi_t(e_t) + 2 \alpha \eta_t(e_t)$, if $f_t(0, e_t) \in (0, \frac{1}{1 + 2 \alpha})$ \\
     & - & - & - & - & (\romannum{11}.2)
    & $1 - f_t(1, e_t)(1 - 2 \alpha)$, otherwise \\
    (\romannum{12})
    & 1
    & 0
    & 1
    & 1
    & (\romannum{12}.1)
    & $\varphi_t(e_t)(1 + \alpha) + \alpha \eta_t(e_t)$, if $f_t(0, e_t) \in (0, \frac{1}{1 + 2 \alpha})$ \\
     & - & - & - & - & (\romannum{12}.2)
    & $1 - f_t(1, e_t)$, otherwise \\
    (\romannum{13}) & 1 & 1 & 0 & 0 & & $\varphi_t(e_t)$ \\
    (\romannum{14})
    & 1
    & 1
    & 0
    & 1
    & (\romannum{14}.1)
    & $\varphi_t(e_t)(1 + \alpha) - \alpha \eta_t(e_t)$, if $f_t(1, e_t) \in (0, \frac{1}{1 + 2 \alpha})$ \\
     & - & - & - & - & (\romannum{14}.2)
    & $f_t(0, e_t) - 1$, otherwise \\
    (\romannum{15})
    & 1
    & 1
    & 1
    & 0
    & (\romannum{15}.1)
    & $\varphi_t(e_t)(1 + \alpha) + \alpha \eta_t(e_t)$, if $f_t(0, e_t) \in (0, \frac{1}{1 + 2 \alpha})$ \\
     & - & - & - & - & (\romannum{15}.2)
    & $1 - f_t(1, e_t)$, otherwise \\
    (\romannum{16})
    & 1
    & 1
    & 1
    & 1
    & (\romannum{16}.1)
    & $\varphi_t(e_t)(1 + 2\alpha)$, if $f_t(0, e_t) \in (0, \frac{1}{1 + 2\alpha})$ and $f_t(1, e_t) \in (0, \frac{1}{1 + 2\alpha})$ \\
     & - & - & - & - & (\romannum{16}.2)
    & $f_t(0, e_t)(1 + 2\alpha) - 1$, if $f_t(0, e_t) \in (0, \frac{1}{1 + 2\alpha})$ and $f_t(1, e_t) \in [\frac{1}{1 + 2\alpha}, 1]$  \\
     & - & - & - & - & (\romannum{16}.3)
    & $1 - f_t(1, e_t)(1 + 2\alpha)$, if $f_t(0, e_t) \in [\frac{1}{1 + 2\alpha}, 1]$ and $f_t(1, e_t) \in (0, \frac{1}{1 + 2\alpha})$ \\
     & - & - & - & - & (\romannum{16}.4)
    & $0$, otherwise \\
    \hline
\end{longtable}\end{landscape}
\clearpage
\renewcommand{\arraystretch}{1}
}

    \stopcontents[supplement]


\end{bibunit}

\end{document}